\newtheorem{Theorem}{Theorem}
\newtheorem{Lemma}{Lemma}
\newtheorem{Corollary}{Corollary}
\newtheorem{Assumption}{Assumption}
\newtheorem{Definition}{Definition}
\newtheorem{Remark}{Remark}
\begin{document}

\begin{frontmatter}





\title{Central and Non-central Limit Theorems arising from the Scattering Transform and its Neural Activation Generalization}

\author[GRLiu]{Gi-Ren Liu}

\author[YCSheu]{Yuan-Chung Sheu}
\author[HTWu,HTWu2]{Hau-Tieng Wu\corref{mycorrespondingauthor}}
\cortext[mycorrespondingauthor]{Hau-Tieng Wu}
\ead{hauwu@math.duke.edu}

\address[GRLiu]{Department of Mathematics, National Chen-Kung University, Tainan, Taiwan}
\address[YCSheu]{Department of Applied Mathematics, National Chiao Tung University, Hsinchu, Taiwan}
\address[HTWu]{Department of Mathematics and Department of Statistical Science, Duke University, Durham, NC, USA}
\address[HTWu2]{Mathematics Division, National Center for Theoretical Sciences, Taipei, Taiwan}

\begin{abstract}
Motivated by analyzing complicated and non-stationary time series, we study a generalization of the scattering transform (ST) that includes broad neural activation functions, which is called {\em neural activation ST} (NAST).
On the whole, NAST is a transform that comprises a sequence of ``neural processing units'', each of which applies a high pass filter to the input from the previous layer followed by a composition with a nonlinear function as the output to the next neuron. Here, the nonlinear function models how a neuron gets excited by the input signal.
In addition to showing properties like non-expansion, horizontal translational invariability and insensitivity to local deformation,
the statistical properties of the second order NAST of a Gaussian process with various dependence and (non-)stationarity structure
and its interaction with the chosen high pass filters and activation functions are explored and central limit theorem (CLT) and non-CLT results are provided. Numerical simulations are also provided.
The results explain how NAST processes complicated and non-stationary time series, and pave a way towards statistical inference based on NAST under the non-null case.
\end{abstract}

\begin{keyword}
macroscaling limits; wavelet transform;
scattering transform; Wiener-It$\hat{\textup{o}}$ integrals;
long range dependence.
\MSC[2010]: Primary 60G60, 60H05, 62M15; Secondary 35K15.

\end{keyword}

\end{frontmatter}


\section{Introduction}\label{sec:introduction}

The scattering transform (ST) \cite{mallat2012group,anden2014deep} is motivated by establishing a mathematical foundation of the convolutional neural network (CNN), particularly the feature extraction part.
ST has been applied to various applications since its appearance. It was applied to audio classification  \cite{anden2011multiscale}, fetal heart rate analysis \cite{chudavcek2013scattering}, music genre classification \cite{chen2013music}, heart sound classification \cite{li2019heart}, sleep stage classification \cite{wu2014assess}, hyperspectral image classification \cite{tang2014hyperspectral}, quantum chemical energies analysis \cite{hirn2017wavelet}, etc.
We mention that in our recent work, we have shown that ST can efficiently quantify nonlinear broad spectral features of EEG. Indeed, with these features, we can develop an accurate automatic sleep stage annotation system for clinical usage \cite{liu2020diffuse,liu2020save}. Recently, this automatic annotation system was further applied to establish an artificial intelligent system to help evaluate and improve sleep center quality \cite{liu2020hospitals}.

By and large, the $K$-th order ST depends on a sequential interlacing convolution and the modulus operator (taking absolute value). In practice, to extract stable features,  a suitable low pass filter is applied to the $K$-th order ST to obtain the {\em $K$-th order ST coefficients}. Here, the modulus operator captures the neural activation. The $K$-th order ST captures the feature extraction and the activation of the neuron in the $K$-th level CNN, which builds informative representations for the signals of interest.
Its theoretical properties, including {\em non-expansive on $L^2(\mathbb{R}^n)$} and {\em translation invariance} and {\em local diffeomorphisms}, have been studied and reported in, for example, \cite{mallat2012group,anden2014deep}. Recently it has been actively extended to Gabor systems \cite{czaja2019analysis,czaja2020rotationally} or to the non-Euclidean setup, like graph \cite{gama2019diffusion,gama2019stability,zou2020graph}, and to the deep Haar scattering \cite{cheng2016deep}.
It is natural to ask if we could better model CNN via generalizing the modulus operator in ST, since activation functions play a crucial role in discriminative capabilities of artificial neural networks.
In practice, different activation functions other than the modulus operator could be selected for each hidden layer or each neuron; see, for example \cite{nanni2020stochastic,ertuugrul2018novel}. From this standpoint, we may obtain a better understanding of CNN by generalizing ST to include more general activation functions.
Take a mother wavelet $\psi$, and denote $\psi_{j}(x):=2^{-j}\psi(2^{-j}x)$ for $j\in \mathbb{Z}$. For $K\in \mathbb{N}$,
the $K$-th order {\em neural activation ST} (NAST) we consider in this paper is
\[
U^{A_{1},A_{2},\ldots,A_K}[j_{1},j_{2},\ldots,j_K]X(t):=A_K((\ldots A_{2}(A_{1}(X\star\psi_{j_{1}})\star\psi_{j_{2}})\ldots)\star \psi_{j_K}(t)),
\]
where $t\in\mathbb{R}$, $X$ is any reasonable input function or random process, $j_{k}$ is the scale parameter of the continuous wavelet transform (CWT) in the $k$-th layers, and $A_{k}$ is a nonlinear function modeling various neural activations; for example,  Rectifier Linear Unit (ReLU), arctangent, and several others \cite{pedamonti2018comparison}. We may apply a suitable low pass filter to the $K$-th order NAST and obtain the {\em $K$-th order NAST coefficients}.
Here, when $A_k(x)=|x|$, we recover the traditional ST.
In \cite{wiatowski2017mathematical}, such idea has been implemented with several theoretical properties explored.
Note that in \cite{wiatowski2017mathematical}, the activation function is followed by an extra pooling layer, which we call the {\em pooling NAST} and it is not considered in this paper.
The authors in \cite{wiatowski2017mathematical} showed that the pooling NAST exhibits {\em vertical translation invariance} in the sense that the features becoming more translation-invariant with the increasing network depth. Note that the translation invariance of ST shown in \cite{mallat2012group,anden2014deep} is the {\em horizontal} one, in the sense of the features becoming more translation invariant if the wavelet scale parameter $J$ increases. For input functions satisfying the {\em deformation-insensitive property} \cite[Definition 5]{wiatowski2017mathematical},
a deformation sensitivity bound is also derived for the pooling NAST. However, the authors of \cite{wiatowski2017mathematical} only focused on deterministic functions.

While existing theoretical results have covered a wide range of signals, in practice we may encounter more challenging ones. It is well known that electroencephalogram (EEG), and many other biomedical time series have complicated statistical behavior. For example, it was discovered that there is a long-range temporal correlations and scaling behavior in human brain oscillations \cite{linkenkaer2001long}, heart rate possessed a strong $1/f$ component over a wide frequency \cite{kobayashi19821}, or oven a multifractal structure \cite{ivanov1999multifractality},
and the human coordination is a long-memory process \cite{chen1997long}. More importantly, these signals are usually non-stationary. Due to its scientific and clinical impacts, a lot of efforts have been invested in analyzing such signals. However, it remains a signal processing challenge. A natural question to ask is how does NAST behave on such complicated time series.
%
%

In this paper, 
we ask how NAST, and hence ST as a special case, behaves on random processes of various structures. There have been some work discussing the behavior of the traditional ST on random process.
In \cite{bruna2015intermittent}, the first- and second-order ST coefficients are used to characterize more detailed properties of random processes with stationary increments,
including the intermittency and self-similarity, where the Poisson processes, fractional Brownian motions were analyzed.
The limiting distributions of the second-order ST of the fractional Brownian motions
have been proved to be folded Gaussian when the vanishing moment of the wavelet is greater than or equal to two. However, the distribution of a random process after applying ST, particularly the covariance structure of the limiting random process, was not discussed. For the statistical inference purpose, it is important to obtain such structure, or at least find a way to approximate it.
The empirical results for L$\acute{\textup{e}}$vy processes and multifractal cascade processes were also provided.
%
Moreover, the general long-range dependent Gaussian process is not discussed in \cite{bruna2015intermittent}, and the impact of the chosen mother wavelet is not yet explored. To our knowledge, the statistical behavior of NAST on random processes is left open. More theoretical results can be found in Section \ref{sec:Existing}.

\subsection{Our contribution}

%



%


We provide several theoretical results about NAST.
First, we show that if the NAST coefficients are obtained with a proper low pass filter, specifically a companion father wavelet, the NAST coefficients exhibit {\em horizontal} translation invariance, which is different from the vertical translation invariance shown in \cite{wiatowski2017mathematical}. 
We also provide a deformation sensitivity bound for the NAST coefficients.

Second, to handle biomedical signals that exhibit long range dependence structure, like EEG, we establish central and non-central limit theorems for stationary Gaussian processes that might exhibit long-range dependence and fractional Brownian motions.
The appearance of Gaussian and non-Gaussian scenarios depends on the behavior of the Fourier transform
of the mother wavelet near the origin.
On the other hand, the structure of the large-scale limit depends not only on the Hermite rank of the nonlinear function $A_{1}$ used in the first layer but also on the differentiability of the nonlinear function used in the last layer.
In short, these diverse central and non-central type results come from exploring how NAST depends on the vanishing moment conditions of the chosen wavelet.
The developed theory can be directly applied to ST when the activation functions are $|x|$.

%





In the context henceforth, the notation $\mathbb{E}$ means the expectation and $\overset{d}{\Rightarrow}$ denotes the
convergence of random processes
in the sense of finite-dimensional
distributions. We use $\star$ to denote the convolution operator.
We use the notation $f^{\star\ell}$, where $\ell\geq 2$, to denote the {\em $\ell$-fold convolution} of the function $f$ when it makes sense; that is,
\begin{align}\notag
f^{\star\ell}(\eta) := \int_{\mathbb{R}^{\ell-1}}
f(\eta_{1})f(\eta_{2}-\eta_{1})\cdots f(\eta-\eta_{\ell-1})
\ d\eta_{1}\ d\eta_{2}\cdots d\eta_{\ell-1}.
\end{align}

The rest of the paper is organized as follows.
In Section \ref{sec:preliminary}, we  present some preliminaries. The formal definition of NAST is given in Section \ref{sec:NASTdefinition}. We state our main
results in Section \ref{sec:mainresult}, and their proofs are given
in Section \ref{sec:proof}. More results and auxillary lemmas and their proofs are given in the appendix.

\vskip 20 pt

\section{Preliminaries}\label{sec:preliminary}

Let $\psi\in L^{1}(\mathbb{R})\cap L^{2}(\mathbb{R})$ be the mother wavelet, which satisfies $\int_{\mathbb{R}} \psi(t) dt = 0$.
A family of real-valued functions $\{\psi_{j}(t)\mid j\in \mathbb{Z}, t \in \mathbb{R}\}$
is called a wavelet family if it is generated from $\psi$ through the dilation procedure
\begin{equation}
\psi_{j}(t) = 2^{-j}\psi(2^{-j}t).
\end{equation}
Denote the Fourier transform of $\psi$ by $\Psi$, which is defined as
\begin{equation}\label{df:Fourier}
\Psi(\lambda) = \int_{\mathbb{R}}e^{-i\lambda t}\psi(t) dt.
\end{equation}
Because $\psi\in L^{1}(\mathbb{R})\cap L^{2}(\mathbb{R})$ and $\int_{\mathbb{R}} \psi(t) dt = 0$,
$\Psi$ is a continuous and square-integrable function with $\Psi(0) = 0$. This observation motivates us to make the following assumption.

\begin{Assumption}\label{Assumption:1:wavelet}
For the Fourier transform of $\psi$, we assume that there exists a bounded and continuous function $C_{\Psi}$, which is positive at the origin (i.e., $C_{\Psi}(0)>0$),
such that the square-integrable function $\Psi$ can be expressed as
\begin{equation}\label{eq:Psi}
\Psi(\lambda) = C_{\Psi}(\lambda)|\lambda|^{\alpha}
\end{equation}
for some $\alpha>0$.
\end{Assumption}


By the Taylor expansion, we know that the parameter $\alpha$ in Assumption \ref{Assumption:1:wavelet} is greater than or equal to the vanishing moment $N$ of $\psi$, where
\begin{equation}\label{df:vanish_moment}
N = \max\left\{n\in \mathbb{N}\,\, \Big|\, \int_{\mathbb{R}}t^{\ell}\psi(t)dt=0\ \textup{for}\ \ell = 0,1,\ldots,n-1,\ \textup{and}\  \int_{\mathbb{R}}t^{n}\psi(t)dt\neq0\right\},
\end{equation}
Assumption \ref{Assumption:1:wavelet} with $\alpha\geq 1$ holds for commonly used wavelets, including the real part of complex Morlet wavelet ($\alpha=1$), the Mexican hat wavelet ($\alpha=2$), and the $K$-th order Daubechies wavelet ($\alpha=K/2$), where $K = 2,4,\ldots,20.$
Assumption \ref{Assumption:1:wavelet} with $\alpha\in(0,1)$ is realized by
the real part of Morse wavelets \cite{olhede2002generalized}, which is a family of analytic wavelets parametrized by two parameters in the Fourier domain via
\begin{equation}\label{eq:morse}
\Psi(\lambda) = \left\{\begin{array}{ll}K_{\alpha,\gamma}\mathcal{H}(\lambda)\lambda^{\alpha}e^{-\lambda^{\gamma}}\ & \textup{for}\ \lambda\geq 0,\\
 0 \ & \textup{for}\ \lambda< 0,\end{array}
 \right.
\end{equation}
where $\alpha>0$, $\gamma>0$, $K_{\alpha,\gamma}$ is a normalizing constant, and $\mathcal{H}$ is the Heaviside unit step function.
For the wavelet function, whose Fourier transform $\Psi$ vanishes in a neighborhood of the origin, e.g., the Mayer wavelet,
please see the paragraph below Theorem 1.

Given a bounded function $X: \mathbb{R}\rightarrow \mathbb{R}$, the CWT of $X$ is defined as
\begin{equation}\label{df:Wavelet}
     X\star \psi_{j} (t) = 2^{-j}\int_{\mathbb{R}}X(s)\psi(\frac{t-s}{2^{j}})ds\, ,
\end{equation}
where $j\in \mathbb{Z}$ indicates the scale and $t\in \mathbb{R}$ indicates time.
Compared with the windowed Fourier transform, the CWT is better in capturing short-lived high frequency phenomena, e.g., the singularities and transient structures, because the time-width of $2^{-j_{1}}\psi(2^{-j_{1}}t)$ is adapted to the scale variable $2^{j_{1}}$ \cite{daubechies1992ten}.

\subsection{Stationary Gaussian processes and Wiener-It$\hat{o}$ integrals}

Let $(\Omega, \mathcal{F}, P)$  be an underlying
probability space such that all random elements appeared in this article are defined on it.
Let $X$ be a mean-square continuous and
stationary real Gaussian random process with the constant mean $\mathbb E[X(s)]=\mu\in \mathbb{R}$ and the
covariance function $R_{X}$:
\begin{equation}
R_{X}(t_{1}-t_{2}) =\mathbb E[X(t_{1})X(t_{2})]-\mathbb E[X(t_{1})]\mathbb E[X(t_{2})],\ t_{1},t_{2}\in \mathbb{R}.
\end{equation}
Because $\int_{\mathbb{R}}\psi(s) ds = 0 $,
\begin{align*}
X\star\psi_{j}(t) =& \int_{\mathbb{R}} X(s) 2^{-j}\psi(\frac{t-s}{2^{j}})ds
= \int_{\mathbb{R}} \left\{X(s)-\mu\right\} 2^{-j}\psi(\frac{t-s}{2^{j}})ds
= [X-\mu]\star\psi_{j}(t).
\end{align*}
Hence, we can assume that $\mathbb E[X(s)]=0$ without loss of generality.
By the Bochner-Khinchin theorem \cite[Chapter 4]{krylov2002introduction},
there exists a unique nonnegative measure $F_{X}$ on $\mathbb{R}$ such that $F_{X}(\mathbb{R}) = R_{X}(0)$
and
\begin{equation}\label{eq:BK}
R_{X}(t) = \int_{\mathbb{R}}e^{i\lambda t}F_{X}(d\lambda),\ t\in \mathbb{R}.
\end{equation}
The measure $F_{X}$ is called the spectral measure of the covariance function $R_{X}$.

\begin{Assumption}\label{Assumption:2:spectral}
The spectral measure $F_{X}$ has the density
$f_{X}$ and
\begin{equation}\label{condition_f}
f_{X}(\lambda) = \frac{C_{X}(\lambda)}{|\lambda|^{1-\beta}},
\end{equation}
where $\beta\in(0,1)$ is the {\em Hurst index} of long-range dependent and $C_{X}$ is a bounded and continuous function from $\mathbb{R}$ to $[0,\infty)$
{such that the decay of $C_{X}(\lambda)$ as $|\lambda|\rightarrow\infty$ is faster than that of $|\lambda|^{-\beta-\varepsilon}$ for some $\varepsilon>0$}.
\end{Assumption}

Note that the spectral density function $f_{X}$ is nonnegative and integrable with $\int_{\mathbb{R}} f_{X}(\lambda) d\lambda = R_{X}(0)$.
If the function $C_{X}$ in Assumption \ref{Assumption:2:spectral} satisfies $C_{X}(0)>0$, then $X$ is a long-range dependent process \cite{doukhan2002theory,pipiras2017long} because
$f_{X}$ has a singularity at the origin.

\begin{Remark}
In \cite{gao2004modelling}, the long-range dependent Gaussian processes having spectral densities of the form
\begin{equation}\label{example_spectral}
f_{X}(\lambda) = \frac{c_{1}}{|\lambda|^{1-\beta_{1}}(|\lambda|^{2}+c_{2})^{\beta_{2}}},
\end{equation}
where $\beta_{1}\in(0,1)$, $\beta_{2}\in(1/2,\infty)$ and $c_{1},c_{2}\in(0,\infty)$, were applied to model the interest rate
and the Standard and Poor 500 data.
We refer readers with interest to \cite{leonenko1999limit,anh1999possible,doukhan2002theory} for more discussion about the stationary random processes with singular spectral densities.
The long-range dependence parameter $\beta$ can be obtained by regressing the logged squared wavelet
coefficients \cite{bardet2000wavelet}.
For $\beta_{1} = \beta_{2} = 1$,
(\ref{example_spectral}) is the spectral density
of the well-known {\em Ornstein-Uhlenbeck process}, which is short-range dependent.

\end{Remark}

Under Assumption \ref{Assumption:2:spectral}, (\ref{eq:BK}) can be rewritten as
\begin{equation}\label{eq:BK2}
R_{X}(t) = \int_{\mathbb{R}}e^{i\lambda t}f_{X}(\lambda)d\lambda,\ t\in \mathbb{R}.
\end{equation}
By the Karhunen Theorem, the Gaussian process $X$ has the
representation
\begin{align}\label{sample path represent}
X(t)=\int_{\mathbb{R}}
e^{i\lambda t}\sqrt{f_{X}(\lambda)}W(d\lambda)\,,
\end{align}
where $t\in \mathbb{R}$ and $W(d\lambda)$ is the standard complex-valued Gaussian random measure on $\mathbb{R}$
satisfying
\begin{align}\label{ortho}
W(\Delta_{1})=\overline{W(-\Delta_{1})}\ \ \textup{and}\ \ \mathbb{E}\left[W(\Delta_{1})
\overline{W(\Delta_{2})}\right]=\textup{Leb}(\Delta_{1}\cap\Delta_{2})
\end{align}
for any
 $\Delta_{1},\Delta_{2}\in
\mathcal{B}(\mathbb{R})$, where Leb is the Lebesgue measure on $\mathbb{R}$.
See, for example, \cite[Theorem 1.1.3]{leonenko1999limit} for the above facts.
By substituting (\ref{sample path represent}) into (\ref{df:Wavelet}),
the CWT of $X$ can be represented as a Wiener-It$\hat{\textup{o}}$ integral:
\begin{align}\label{lemma:scatter1}
X\star \psi_{j}(t) =\,& 2^{-j}\int_{\mathbb{R}}\left[\int_{\mathbb{R}}
e^{i\lambda s}\sqrt{f_{X}(\lambda)}W(d\lambda)\right]\psi(\frac{t-s}{2^{j}})ds
\\=\,&\notag2^{-j}\int_{\mathbb{R}}\left[\int_{\mathbb{R}}
e^{i\lambda s}\psi(\frac{t-s}{2^{j}})ds\right]\sqrt{f_{X}(\lambda)}W(d\lambda)
=\int_{\mathbb{R}}
e^{i\lambda t}\Psi(2^{j}\lambda)\sqrt{f_{X}(\lambda)}W(d\lambda)\,.
\end{align}

From (\ref{lemma:scatter1}), we know that $X\star \psi_{j}$ is a zero-mean stationary Gaussian process
with the spectral density $f_{X\star \psi_{j}}(\lambda) = |\Psi(2^{j}\lambda)|^{2}f_{X}(\lambda).$
By the orthogonal property (\ref{ortho}), we have
\begin{equation}\label{def:sigmaj1}
\sigma_{j}^{2}:=\mathbb E[\left(X\star \psi_{j}\right)^{2}] = \int_{\mathbb{R}}|\Psi(2^{j}\lambda)|^{2}f_{X}(\lambda) d\lambda,\ j\in \mathbb{Z}.
\end{equation}

\subsection{Fractional Brownian motions}\label{sec:fbm}
The representation (\ref{lemma:scatter1}) can be extended to the case of Gaussian processes with stationary increments,
such as the two-sided fractional Brownian motion $B_{H}(t)$ proposed in \cite{mandelbrot1968fractional},
where $t\in \mathbb{R}$ and $H\in (0,1)$.
The fractional Brownian motions have been widely used in financial models \cite{oksendal2003fractional}
and engineering applications.
From \cite[(19)]{reed1995spectral}, $B_{H}$, where $0<H<1$, has the spectral representation
\begin{equation}\label{eq:fbm}
B_{H}(t) = \frac{1}{2\pi} \int_{\mathbb{R}}(e^{i\lambda t}-1)\left(\frac{1}{i\lambda}\right)^{H+\frac{1}{2}}d\beta(\lambda),\ t\in \mathbb{R},
\end{equation}
where $\beta(\lambda) = -\overline{\beta(\lambda)}$ is a complex-valued Wiener process in frequency of orthogonal increments, that is,
if $(\lambda_{1},\lambda_{2})\cap (\lambda_{3},\lambda_{4}) = \phi$,
\begin{equation}\label{fbm_prop1}
\mathbb{E}\left\{[\overline{\beta(\lambda_{1})-\beta(\lambda_{2})}]
\left[\beta(\lambda_{3})-\beta(\lambda_{4})\right]\right\} = 0
\end{equation}
and
\begin{equation}\label{fbm_prop2}
\mathbb{E}\left\{\overline{d\beta(\lambda^{'})}
d\beta(\lambda)\right\} = 2\pi \delta(\lambda^{'}-\lambda)d\lambda.
\end{equation}
By a direct computation (see also \cite[Theorem 2]{reed1995spectral}), (\ref{eq:fbm}) implies that the covariance function of $B_{H}$ has the well-known form
\begin{equation*}
\textup{Cov}(B_{H}(s),B_{H}(t)) = \frac{V_{H}}{2}\left(|s|^{2H}+|t|^{2H}-|s-t|^{2H}\right),\ s,t\in \mathbb{R},
\end{equation*}
where
\begin{equation*}
V_{H} = \frac{1}{[\Gamma(H+\frac{1}{2})]^{1/2}}\left\{\frac{1}{2H}+\int_{1}^{\infty}\left[u^{H-\frac{1}{2}}-(u-1)^{H-\frac{1}{2}}\right]^{2}du\right\}.
\end{equation*}
By (\ref{eq:fbm}) and $\int_{\mathbb{R}}\psi(s)ds = 0$,
\begin{align}\notag
B_{H}\star \psi_{j_{1}}(t) =& \frac{1}{2\pi} \int_{\mathbb{R}}
\left[\int_{\mathbb{R}}(e^{i\lambda s}-1)\psi_{j_{1}}(t-s)ds\right]\left(\frac{1}{i\lambda}\right)^{H+\frac{1}{2}}d\beta(\lambda)
\\\label{fbm_psi}=&
\frac{1}{2\pi} \int_{\mathbb{R}}
e^{i\lambda t}\left(\frac{1}{i\lambda}\right)^{H+\frac{1}{2}}\Psi(2^{j_{1}}\lambda)d\beta(\lambda).
\end{align}

By the orthogonal properties (\ref{fbm_prop1}) and (\ref{fbm_prop2}),
the covariance function of $B_{H}\star \psi_{j_{1}}$ has the spectral representation
\begin{align}\label{fbm_psi_cov}
\textup{Cov}\left(B_{H}\star \psi_{j_{1}}(t_{1}),B_{H}\star \psi_{j_{1}}(t_{2})\right) =
\frac{1}{2\pi} \int_{\mathbb{R}}
e^{i\lambda (t_{1}-t_{2})}\frac{1}{|\lambda|^{2H+1}}|\Psi(2^{j_{1}}\lambda)|^{2}d\lambda.
\end{align}
From (\ref{fbm_psi}) and (\ref{fbm_psi_cov}), we know that if $\alpha>H$,
$B_{H}\star \psi_{j_{1}}$ is a stationary Gaussian process with spectral density
\begin{equation}\label{X_fbm}
f_{B_{H}\star \psi_{j_{1}}}(\lambda) = (2\pi)^{-1}|\lambda|^{-(2H+1)}|\Psi(2^{j_{1}}\lambda)|^{2},
\end{equation}
which is absolute integrable under the condition $\alpha>H$.
The condition $\alpha>H$ is satisfied by the real part of complex Morlet wavelet ($\alpha=1$), the Mexican hat wavelet ($\alpha=2$), and the $K$-th order Daubechies wavelet ($\alpha=K/2$), where $K = 2,4,\ldots,20$.
Furthermore, (\ref{fbm_psi_cov}) implies that the finite dimensional distribution of $B_{H}\star \psi_{j_{1}}$
is the same as that of $X\star \psi_{j_{1}}$, where $X$ is a generalized random process (also called $1/f$ noise) with spectral density
$f_{X}(\lambda) = (2\pi)^{-1}|\lambda|^{-(1-(-2H))}$, i.e., (\ref{condition_f}) with $\beta = -2H$ and $C_{X}(\lambda)=(2\pi)^{-1}$ for all $\lambda\in \mathbb{R}$.
Under the condition $\alpha>H$, the second moment of $X\star \psi_{j}$ (or $B_{H}\star \psi_{j_{1}}$)
has the following scaling relation:
\begin{equation}\label{def:sigmaj1_fbm}
\sigma_{j}^{2}=\mathbb E[\left(X\star \psi_{j}\right)^{2}] = \frac{1}{2\pi}\int_{\mathbb{R}}|\Psi(2^{j}\lambda)|^{2}
\frac{1}{|\lambda|^{2H+1}} d\lambda = 2^{2Hj}\sigma_{0}^{2},\ j\in \mathbb{Z}.
\end{equation}
In Section \ref{sec:mainresult}, we will see that our main result for the case of stationary Gaussian processes
can be extended to the case of fractional Brownian motions easily by the relationship above.


\subsection{Hermite polynomial expansion}

We consider the Hermite polynomial expansion to represent nonlinear functions of interest. We need the following assumption.

\begin{Assumption}\label{Assumption:3:Hermite}
Fix $k,j\in \mathbb{N}$. Assume that $A_{k}(\sigma_{j}\cdot)$ belongs to the Gaussian-Hilbert space $L^{2}(\mathbb{R},(2\pi)^{-1/2}e^{-\frac{y^{2}}{2}}dy)$;
that is,
$\int_{\mathbb{R}}\left[A_{k}(\sigma_{j} y)\right]^2 \frac{1}{\sqrt{2\pi}}e^{-\frac{y^{2}}{2}}dy<\infty$.
\end{Assumption}

\begin{Definition}[Hermite polynomials and Hermite rank]
The Hermite
polynomials, $\{H_{\ell}(y)\}_{\ell=0,1,2,\ldots}$, are defined as
\[
H_{\ell}(y)=(-1)^{\ell}e^{\frac{y^{2}}{2}}\frac{d^{\ell}}{dy^{\ell}}e^{-\frac{y^{2}}{2}}\,,
\]
where $\ell\in \{0,1,2,\ldots\}$. The Hermite rank of $A_{1}(\sigma_{j_{1}}\ \cdot)$ is defined by
\begin{align*}
\textup{rank}\left(A_{1}(\sigma_{j_{1}}\ \cdot)\right) = \textup{inf}\Big\{\ell\geq 1:\ C_{\sigma_{j_{1}},\ell}\neq 0\Big\},
\end{align*}
where
\begin{align}\label{hermitecoeff}
C_{\sigma_{j_{1}},\ell}=\int_{\mathbb{R}}A_{1}(\sigma_{j_{1}}y)\frac{H_{\ell}(y)}{\sqrt{\ell!}}\frac{1}{\sqrt{2\pi}}e^{-\frac{y^{2}}{2}}dy,
\ \ell\in\mathbb{N}\cup\{0\}.
\end{align}
\end{Definition}
For the Hermite polynomials, by simple calculation, $H_{0}(y) = 1$, $H_{1}(y) = y$, and $H_{2}(y) = y^{2}-1$. For the Hermite rank, some common examples are $\textup{rank}\left(\textup{ReLU}(\cdot)\right) =1$ and  $\textup{rank}\left(|\cdot|\right) =2$.
Recall the following well-known facts about Hermite polynomials (see, for example,
\cite[Corollary 5.5 and p. 30]{major1981lecture}, \cite{ito1951multiple} or \cite{houdre1994chaos}):
\begin{align}\label{expectionhermite}
\mathbb{E}[H_{\ell_{1}}(Y_{j_{1}}(t_{1}))H_{\ell_{2}}(Y_{j_{1}}(t_{2}))]=
\delta^{\ell_{1}}_{\ell_{2}} \ell_{1}!
R_{Y_{j_{1}}}^{\ell_{1}}(t_{1}-t_{2})\,,
\end{align}
where $Y_{j_{1}} = \frac{1}{\sigma_{j_{1}}}X\star\psi_{j_{1}}$, $\ell_1,\ell_2\in \{0,1,2,\ldots\}$, and $\delta^{\ell_{1}}_{\ell_{2}}$ is the Kronecker symbol.
Moreover,
\begin{align}\label{itoformula}
H_{\ell}(Y_{j_{1}}(t))=\sigma_{j_{1}}^{-\ell}
\int^{'}_{\mathbb{R}^{\ell}}e^{i(\lambda_{1}+\ldots+\lambda_{\ell})t}\left[\prod_{k=1}^{\ell}{\Psi(2^{j_{1}}\lambda_{k})\sqrt{f_{X}(\lambda_{k})}}\right]
W(d\lambda_{1})\cdots W(d\lambda_{\ell}),
\end{align}
where
$\int^{'}$ means that the integral excludes the diagonal hyperplanes
$\lambda_{k}=\mp \lambda_{k^{'}}$ for $k, k^{'}\in\{1,\ldots,\ell\}$ and  $k\neq k^{'}$.

\section{Neural activation scattering transform}\label{sec:NASTdefinition}

We now define NAST that generalizes ST considered in \cite{mallat2012group}.

\begin{Definition}[Neural activation scattering transform]
Fix $n\in \mathbb{N}$ and take $n$ Lipschitz continuous functions $A_{1},\ldots,A_{n}:\mathbb{R}\to\mathbb{R}$.
The $n$-th order NAST depending on $(A_{1},\ldots,A_n)$ is defined by
\begin{align}
\{U^{A_{1:m} }[j_{1},j_{2},\ldots,j_{n}]X(t)\}_{j_{1},j_{2},\ldots,j_{n}\in \mathbb{Z}}\,,\label{def:generalizedST:all}
\end{align}
where $A_{1:n} := (A_{1},\ldots,A_{n})$ and
\begin{align}
U^{A_{1:n}}[j_{1},\ldots,j_{n}]X(t)
&= A_{n}\left(U^{A_{1:n-1}}[j_{1},\ldots,j_{n-1}]X\star \psi_{j_{n}}(t)\right) \nonumber\\
&=U^{A_{n}}[j_{n}]\left(\cdots\left(U^{A_{2}}[j_{2}]\left(U^{A_{1}}[j_{1}]X\right)\right)\cdots\right)\nonumber\\
&=A_n\left(\left(\ldots A_{2}\left(A_1\left(X\star\psi_{j_{1}}\right)\star \psi_{j_{2}}\right)\ldots\right)\star \psi_{j_{n}}(t)\right)\,.
\end{align}
\end{Definition}

%


Let $\phi$ be the father wavelet whose Fourier transform $\Phi$, along with $\Psi$, satisfies the Littlewood-Paley condition
\begin{equation}\label{Littlewood_condition}
|\Phi(2^{J}\lambda)|^{2}+ \underset{j<J}{\sum}|\Psi(2^{j}\lambda)|^{2} \leq 1,\ \lambda\in \mathbb{R},
\end{equation}
for all $J\in \mathbb{Z}$ and $\phi\in L^{1}(\mathbb{R})$ with $\int_{\mathbb{R}} \phi(x)dx = 1$.
For $J\in \mathbb{N}$, define
\begin{equation*}
\phi_{J}(t) = 2^{-J}\phi(\frac{t}{2^{J}}),\ t\in \mathbb{R},
\end{equation*}
and
\begin{equation*}
\mathcal P_{J}X(t) = X\star \phi_{J}(t) = \int_{\mathbb{R}}2^{-J}\phi(\frac{t-s}{2^{J}})X(s)ds,\ t\in \mathbb{R}\,.
\end{equation*}

\begin{Definition}[The set of $n$-th order NAST coefficients]
Let $\Lambda_{J} = \{j\mid j\in \mathbb{Z}\ \textup{and}\ j<J\}$. The set of {\em $n$-th order NAST coefficients} is
\begin{equation*}
S^{A_{1:n}}[\Lambda_{J}^{n}]X := \left\{\mathcal P_J[U^{A_{n}}[j_{n}]\left(\cdots\left(U^{A_{2}}[j_{2}]\left(U^{A_{1}}[j_{1}]X\right)\right)\cdots\right)]\right\}_{(j_{1},\ldots,j_{n})\in \Lambda_{J}^{n}}\,.
\end{equation*}
where $\Lambda_{J}^{n} = \{(j_{1},\ldots,j_{n})\mid j_{k}\in \Lambda_{J}\ \textup{for}\ k=1,2,\ldots,n\}$. Also denote
\begin{equation*}
U^{A_{1:n}}[\Lambda_{J}^{n}]X := \left\{U^{A_{n}}[j_{n}]\left(\cdots\left(U^{A_{2}}[j_{2}]\left(U^{A_{1}}[j_{1}]X\right)\right)\cdots\right)\right\}_{(j_{1},\ldots,j_{n})\in \Lambda_{J}^{n}}
\end{equation*}
and
\begin{equation}\label{def:cellST}
U_{J}^{A_{1:n}}[\Lambda_{J}^{n}]X := \left\{\mathcal{P}_{J}U^{A_{1:n-1}}[\Lambda_{J}^{n-1}]X, U^{A_{1:n}}[\Lambda_{J}^{n}]X\right\}.
\end{equation}
We set $U^{A_{1:0}}[\Lambda_{J}^{0}]X = X$ and $S^{A_{1:0}}[\Lambda_{J}^{0}]X=\mathcal P_{J}X.$
\end{Definition}
We follow the symbols used in \cite{mallat2012group}.
The curly bracket in (\ref{def:cellST}) contains the set of $(n-1)$-th order NAST coefficients
and the $n$-th order NAST of $X$.
If the absolute value function is used as the nonlinear operator, i.e., $A_{1}(\cdot)= A_{2}(\cdot) = \cdots = |\cdot|$, (\ref{def:generalizedST:all}) is reduced to the classical ST of $X$ considered in \cite{mallat2012group,anden2014deep,bruna2015intermittent}.
We mention that this generalization is very close to that considered in \cite{wiatowski2017mathematical},
where every activation function is followed by a pooling layer. 
A similar definition holds for higher dimensional Euclidean space, but we focus on $\mathbb{R}$ in this work.

\subsection{From the neural network perspective}

The computational structure of ST is similar to fully-connected convolutional neural
networks (CNN) \cite{lecun2010convolutional,hinton2012deep}, and its convolutional structure inspired the proposal of ST in \cite{mallat2012group}.
However, in the classical ST, $A_1(\cdot)=A_2(\cdot)=\ldots=|\cdot|$, where the activation functions are not fully modeled.
The proposed NAST is richer than the traditional ST in the sense that more flexible activation functions are considered, and a random processes can be converted into different random processes by NAST.
Recall that the $n$-th order NAST includes $n$ convolution layers with nonlinear operators inserted between convolutional layers:
\begin{align}
U^{A_{1},\ldots,A_n}[j_{1},\ldots,j_{n}]X(t)
=A_n\left(\left(\ldots A_{2}\left(A_1\left(X\star\psi_{j_{1}}\right)\star \psi_{j_{2}}\right)\ldots\right)\star \psi_{j_{n}}(t)\right),\nonumber
\end{align}
where $A_{1}\ldots A_{n}$ are selected activation functions by users.
Here, $A_{1},\ldots, A_{n}$ are flexible to capture different types of activations associated with different types of neurons. Also, it could capture the practical fact that activations are always shifted by a bias parameter, which may be critical for successful learning \cite{wang2019bias}.
However, note that while ideally the activation functions $A_{1},\ldots, A_{n}$ for each hidden layer, or even for each neuron,  should be obtained by a chosen training process from the given dataset \cite{ertuugrul2018novel,nanni2020stochastic} for the purpose of improving the discriminative capabilities, in the traditional ST framework and current NAST framework, the training/learning process is not involved. 
%

%
%

\subsection{More existing Theoretical Results on random processes}\label{sec:Existing}
In \cite[Section 4]{mallat2012group}, the relationship between the second-order moments
\begin{equation}
\{\mathbb E[\left(U[j_{1},j_{2},\ldots,j_{n}]X\right)^{2}]\mid j_{1},j_{2},\ldots,j_{n}\in \mathbb{Z}\},\ n\in \mathbb{N},
\end{equation}
and $\mathbb E[|X|^{2}]$
is discussed when $X$ is a stationary process with finite second-order moments.
Moreover, the ST ended with a convolution with a father wavelet is shown to be nearly Lipschitz-continuous under the time warping.
In \ref{main_section_stability}, we show that NAST ended with a pooling layer
also has this important property.
For square integrable and non-random function $X$, the exponential decay of $\|U[j_{1},j_{2},\ldots,j_{n}]X\|_{2}$ along $n$
has been discussed in \cite{waldspurger2017exponential}.
From an application point of view, the first and second order ST coefficients are sufficient for musical genre and phone classification \cite{anden2014deep}. We also find that the first and second order ST coefficients are sufficient for EEG signal analysis \cite{liu2020diffuse,liu2020save}.

Also, recall a closely related quantity, the {\em normalized second-order scattering moments}, defined by
\begin{align}\label{def:second_scattering_moment}
\widetilde{S}X(j_{1},j_{2}) = \frac{\mathbb E\left[U[j_{1},j_{2}]X\right]}{\mathbb E\left[U[j_{1}]X\right]}= \frac{\mathbb E\left||X\star \psi_{j_{1}}|\star \psi_{j_{2}}\right|}{\mathbb E|X\star \psi_{j_{1}}|}\,,
\end{align}
where $j_{1},j_{2}\in \mathbb{Z}$.
The empirical version of these moments have been used as features for musical genre and phone classification \cite[Section VI]{anden2014deep}
because the normalization increases the desired invariance properties for machine learning models.
Because $|X\star \psi_{j_{1}}|$ provides the information about the occurrence time of burst activity at the scale $2^{j_{1}}$,
$\mathbb E\big||X\star \psi_{j_{1}}|\star \psi_{j_{2}}\big|$ measures the time variability of the burst activity over time scales
$2^{j_{2}}\geq 2^{j_{1}}$. Hence, $\widetilde{S}X(j_{1},j_{2})$ gives multiscale measurements of intermittency.

\section{Main Results}\label{sec:mainresult}

%


As mentioned in Section \ref{sec:introduction}, we will present two main results: (1) the horizontal translation invariance and insensitivity to deformation of NAST coefficients for stationary random processes;
(2) large-scale limiting theorems about the second-order NAST of a stationary Gaussian process and the fractional Brownian motions.

\subsection{Horizontal translation invariance and insensitivity to deformation}

%

We show that 
for any stationary random process $X$ with finite second moment, NAST exhibits horizontal translation invariance when the scale parameter $J$ increases. Different from the vertical translation invariance \cite{wiatowski2017mathematical}, the horizontal translation invariance means that the asymptotic translation invariance in every network layer is guaranteed as long as $J\rightarrow\infty$.
Second, we consider a randomly deformed process $\{X(t-\tau(t))\}_{t\in \mathbb{R}}$,
where $X$ and $\tau$ are independent stationary random processes, and provide a deformation sensitivity bound for the NAST coefficients.

\subsubsection{Non-expansiveness property}

\begin{Lemma}\label{thm:nonexpansive}
If the Littlewood-Paley condition (\ref{Littlewood_condition}) holds and the nonlinear functions $A_{1},A_{2},...,A_{n}$ are Lipschitz continuous with Lipschitz constant $L_{A_{i}}\leq 1$ and $A_{i}(0)=0$ for $i\in\{1,2,...,n\}$, then for any strictly stationary process $X$ with $\mathbb{E}[|X|^{2}]<\infty$,
\begin{align*}
\mathbb{E}\left[\|U^{A_{1:n}}[\Lambda_{J}^{n}]X\|^{2}\right] := \underset{(j_{1},...,j_{n})\in \Lambda_{J}^{n}}{\sum}\mathbb{E}\left[|U^{A_{1},...,A_{n}}[j_{1},...,j_{n}]X|^{2}\right]
\end{align*}
satisfies
\begin{align}\label{nonincreasing_U}
\mathbb{E}\left[|X|^{2}\right]\geq
\mathbb{E}\left[\|U^{A_{1}}[\Lambda_{J}]X\|^{2}\right]\geq\cdots\geq
\mathbb{E}\left[\|U^{A_{1:n-1}}[\Lambda_{J}^{n-1}]X\|^{2}\right]\geq\mathbb{E}\left[\|U^{A_{1:n}}[\Lambda_{J}^{n}]X\|^{2}\right]
\end{align}
for $n\in \mathbb{N}\cup\{0\}.$
For the NAST coefficients, we have
\begin{align}\label{nonexpansive_S}
\mathbb{E}\left[\overset{n}{\underset{m=0}{\sum}}\|S^{A_{1:m}}_{J}[\Lambda_{J}^{m}]X\|^{2}\right]
\leq \mathbb{E}\left[|X|^{2}\right],\ n\in \mathbb{N}\cup\{0\}.
\end{align}

\end{Lemma}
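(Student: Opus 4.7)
The plan is to reduce both inequalities to a single one-step contraction and then assemble via induction/telescoping. The structural prerequisite is that every intermediate node $U^{A_{1:k}}[j_{1},\ldots,j_{k}]X$ is a strictly stationary process with finite second moment. Strict stationarity is preserved because convolution with a deterministic kernel and pointwise application of a measurable function both commute with time shifts; finiteness of the second moment is propagated inductively through the one-step contraction, together with the pointwise bound $|A_{i}(z)|^{2}\leq z^{2}$ that follows from $A_{i}(0)=0$ and $L_{A_{i}}\leq 1$.

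The one-step contraction I would establish is: for any strictly stationary $Y$ with $\mathbb{E}[|Y|^{2}]<\infty$ and any Lipschitz $A$ with $L_{A}\leq 1$ and $A(0)=0$,
\begin{align*}
\mathbb{E}[|Y\star\phi_{J}|^{2}]+\sum_{j<J}\mathbb{E}[|A(Y\star\psi_{j})|^{2}]\leq \mathbb{E}[|Y|^{2}].
\end{align*}
The Lipschitz bound immediately replaces $|A(Y\star\psi_{j})|^{2}$ by $|Y\star\psi_{j}|^{2}$. Writing $Y=\mu+Z$ with $\mu=\mathbb{E}[Y]$ and $Z$ zero-mean, and using that $\int\psi_{j}=0$ kills the constant while $\int\phi_{J}=1$ preserves it, the Bochner-Khinchin representation (\ref{eq:BK}) applied to $Z$ gives $\mathbb{E}[|Z\star h|^{2}]=\int|H(\lambda)|^{2}dF_{Z}(\lambda)$ for any $h\in L^{1}\cap L^{2}$ with Fourier transform $H$. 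Applied to $h=\phi_{J}$ and $h=\psi_{j}$, this yields
\begin{align*}
\mathbb{E}[|Y\star\phi_{J}|^{2}]+\sum_{j<J}\mathbb{E}[|Y\star\psi_{j}|^{2}]=\mu^{2}+\int_{\mathbb{R}}\Bigl(|\Phi(2^{J}\lambda)|^{2}+\sum_{j<J}|\Psi(2^{j}\lambda)|^{2}\Bigr)dF_{Z}(\lambda),
\end{align*}
and the Littlewood-Paley condition (\ref{Littlewood_condition}) bounds the bracket by $1$, so the right-hand side is at most $\mu^{2}+F_{Z}(\mathbb{R})=\mu^{2}+\textup{Var}(Y)=\mathbb{E}[|Y|^{2}]$.

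With the one-step contraction in hand, the two claims follow by assembly. For (\ref{nonincreasing_U}), apply the contraction with $Y=U^{A_{1:k}}[j_{1},\ldots,j_{k}]X$ at every node and sum over $(j_{1},\ldots,j_{k})\in\Lambda_{J}^{k}$:
\begin{align*}
\mathbb{E}[\|S^{A_{1:k}}_{J}[\Lambda_{J}^{k}]X\|^{2}]+\mathbb{E}[\|U^{A_{1:k+1}}[\Lambda_{J}^{k+1}]X\|^{2}]\leq \mathbb{E}[\|U^{A_{1:k}}[\Lambda_{J}^{k}]X\|^{2}].
\end{align*}
Dropping the nonnegative $S$-term and iterating in $k$ from the base case $U^{A_{1:0}}[\Lambda_{J}^{0}]X=X$ gives the full monotone chain. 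Keeping both terms and telescoping over $k=0,\ldots,n-1$ yields $\sum_{k=0}^{n-1}\mathbb{E}[\|S^{A_{1:k}}_{J}\|^{2}]+\mathbb{E}[\|U^{A_{1:n}}[\Lambda_{J}^{n}]X\|^{2}]\leq\mathbb{E}[|X|^{2}]$, and the pointwise estimate $|\Phi(2^{J}\lambda)|^{2}\leq 1$ (again from (\ref{Littlewood_condition})) supplies the tail bound $\mathbb{E}[\|S^{A_{1:n}}_{J}\|^{2}]\leq\mathbb{E}[\|U^{A_{1:n}}[\Lambda_{J}^{n}]X\|^{2}]$, which absorbs the remaining $U$-term into the sum and produces (\ref{nonexpansive_S}). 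The main technical point requiring care is the possibly nonzero mean of the intermediate stationary processes: since $A_{k}$ is nonlinear it does not preserve centeredness, so the constant piece $\mu^{2}$ must be tracked separately through the spectral computation rather than being absorbed into the spectral measure; once this is handled cleanly the rest is a standard Plancherel-plus-Littlewood-Paley argument in the spirit of Mallat's treatment of classical ST.
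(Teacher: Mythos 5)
Your proposal is correct and follows essentially the same route as the paper's proof: the Lipschitz bound $|A_i(z)|\leq |z|$ to remove the activation, the spectral representation of the second moments with the (possibly nonzero) mean tracked separately, the Littlewood--Paley condition (\ref{Littlewood_condition}) to get the one-step contraction, and then iteration/telescoping over layers. The only minor deviation is in concluding (\ref{nonexpansive_S}): the paper telescopes one level further and drops the nonnegative term $\mathbb{E}\left[\|U^{A_{1:n+1}}[\Lambda_{J}^{n+1}]X\|^{2}\right]$, whereas you stop at level $n$ and absorb the last term via $\mathbb{E}[|P_{J}Y|^{2}]\leq\mathbb{E}[|Y|^{2}]$ (from $|\Phi(2^{J}\lambda)|\leq 1$); both closings are valid, and your use of the general spectral measure rather than a spectral density is a slightly more careful rendering of the same computation.
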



\ref{sec:proof_nonexpansive_UJA} provides the proof of Lemma \ref{thm:nonexpansive}.
Note that the modulus function $A(x) = |x|$, the rectified linear unit $A(x) = \max\{0,x\}$, the hyperbolic tangent $A(x) = \textup{tanh}(x)$,
and the shifted logistic sigmoid $A(x) = \frac{1}{1+e^{-x}}-\frac{1}{2}$ are Lipschitz continuous with Lipschitz constant $L_{A}\leq 1$ and $A(0)=0$.
The inequality (\ref{nonincreasing_U}) implies that $\mathbb{E}\left[\|U^{A_{1:n}}[\Lambda_{J}^{n}]X\|^{2}\right]$
is decreasing along with the order $n$ of NAST.
The inequality (\ref{nonexpansive_S}) shows that the total energy of the NAST coefficients over all layers
is bounded by $\mathbb{E}\left[|X|^{2}\right]$ for each fixed $J\in \mathbb{Z}$.


\subsubsection{Insensitivity to deformation and translation}

Let $\tau$ be a stationary random process.
We consider the randomly deformed process $L_{\tau}X(t) = X(t-\tau(t))$, where $t\in \mathbb{R}$,
and investigate the impact of random deformation on the NAST coefficients.
Denote the set of the NAST coefficients within the first $n$ layers by
\begin{equation}\label{def:pooled_NAST_first_m}
S^{A_{1:n}}_{J}[\Lambda_{J}^{0:n}]X = \left\{S^{A_{1:m}}_{J}[\Lambda_{J}^{m}]X\right\}_{0\leq m\leq n}=\{P_{J}U^{A_{1:m}}[\Lambda_{J}^{m}]X\}_{0\leq m\leq n}.
\end{equation}

\begin{Lemma}\label{thm:deformation}
Assume the same conditions as in Lemma \ref{thm:nonexpansive}.
If $\tau$ is statistically independent of $X$, $\tau\in \mathbf{C}^{2}(\mathbb{R})$ and $\|\tau^{'}\|_{\infty}\leq\frac{1}{2}$ almost surely, then for $n\in\mathbb{N}$,
\begin{align}\notag
\mathbb{E}\left[\|S_{J}^{A_{1:n}}[\Lambda_{J}^{0:n}]L_{\tau}X-S_{J}^{A_{1:n}}[\Lambda_{J}^{0:n}]X\|^{2}\right]
\leq
C\left\{(n+1)^{2}K(\tau)+2^{-2J}(n+1)\mathbb{E}\left[\|\tau\|_{\infty}^{2}\right]\right\}\mathbb{E}\left[|X|^{2}\right]
\end{align}
for some constant $C>0$, where
\begin{equation*}
K(\tau) =
\left\{
\begin{array}{ll}
\mathbb{E}\left\{\left[\|\tau^{'}\|_{\infty}\left(\textup{log}\frac{\|\bigtriangleup \tau\|_{\infty}}{\|\tau^{'}\|_{\infty}}\vee 1\right)+\|\tau^{''}\|_{\infty}\right]^{2}\right\}, & \|\tau^{'}\|_{\infty} \neq 0,
\\
0, & \|\tau^{'}\|_{\infty} = 0,
\end{array}\right.
\end{equation*}
and $\|\bigtriangleup \tau\|_{\infty}:= \underset{s,t \in \mathbb{R}}{\textup{sup}}|\tau(s)-\tau(t)|$.

\end{Lemma}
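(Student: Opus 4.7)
I would follow a Mallat-style deformation-stability argument, adapted to the NAST pipeline and to stationary random inputs. The plan is to decompose the total error telescopically across the $n+1$ NAST layers, reduce each layer's contribution to a small number of commutators between the warping $L_{\tau}$ and the convolution operators $\psi_{j}\star$ and $\phi_{J}\star$, estimate each such commutator by kernel analysis in terms of the three seminorms appearing in $K(\tau)$, and finally use the non-expansiveness of Lemma \ref{thm:nonexpansive} to propagate errors without blow-up.

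For fixed $0\leq m\leq n$, I would split
\begin{align*}
S_{J}^{A_{1:m}}[\Lambda_{J}^{m}]L_{\tau}X - S_{J}^{A_{1:m}}[\Lambda_{J}^{m}]X
&= P_{J}\bigl(U^{A_{1:m}}L_{\tau}X - L_{\tau}U^{A_{1:m}}X\bigr) \\
&\quad + [P_{J},L_{\tau}]\,U^{A_{1:m}}X + (L_{\tau}-I)P_{J}U^{A_{1:m}}X.
\end{align*}
The third summand is a translation of a low-pass signal and, via a mean-value argument using $\|\nabla\phi_{J}\|_{1}=O(2^{-J})$, has expected squared norm $O(2^{-2J}\mathbb{E}[\|\tau\|_{\infty}^{2}]\,\mathbb{E}[|U^{A_{1:m}}X|^{2}])$, which Lemma \ref{thm:nonexpansive} further bounds by $O(2^{-2J}\mathbb{E}[\|\tau\|_{\infty}^{2}]\,\mathbb{E}[|X|^{2}])$. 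For the first summand, the Lipschitz bound $L_{A_{k}}\leq 1$ together with $A_{k}(0)=0$ yields the inductive pointwise inequality
\begin{equation*}
\bigl|U^{A_{1:k}}L_{\tau}X - L_{\tau}U^{A_{1:k}}X\bigr|
\leq \bigl|(U^{A_{1:k-1}}L_{\tau}X - L_{\tau}U^{A_{1:k-1}}X)\star\psi_{j_{k}}\bigr|
+ \bigl|[L_{\tau},\psi_{j_{k}}\star]U^{A_{1:k-1}}X\bigr|,
\end{equation*}
and since the base case vanishes, iterating collapses the first summand into a telescoping sum of at most $m$ commutator terms of the form $[L_{\tau},\psi_{j_{k}}\star]U^{A_{1:k-1}}X$ followed by further convolutions with the remaining $\psi_{j_{k+1}},\ldots,\psi_{j_{m}}$, each of which is non-expansive.

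The core technical ingredient is a commutator estimate: for every stationary process $Y$ independent of $\tau$ and every $\theta\in\{\psi_{j}:j<J\}\cup\{\phi_{J}\}$,
\[
\mathbb{E}\bigl\|[L_{\tau},\theta\star]Y\bigr\|^{2} \leq C\,K(\tau)\,\mathbb{E}[|Y|^{2}].
\]
To prove it, I would condition on $\tau$ and perform the change of variables $u=y-\tau(y)$, which is a diffeomorphism by $\|\tau'\|_{\infty}\leq 1/2$. The kernel of $[L_{\tau},\theta\star]$ becomes a difference of the form $\theta(x-\tau(x)-u)-\theta(x-u)/(1-\tau'(y(u)))$; splitting into the near-diagonal regime (where a first-order Taylor expansion of $\theta$ produces $\tau'$ against $\theta'$, plus a $\tau''$ correction) and the far-diagonal regime (where $|\theta'|$ is integrated against a displacement controlled by $\|\Delta\tau\|_{\infty}$), and then applying Schur's test along rows and columns, yields an operator bound of the form $\|\tau'\|_{\infty}\log(\|\Delta\tau\|_{\infty}/\|\tau'\|_{\infty})+\|\tau''\|_{\infty}$. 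Squaring, taking expectation in $\tau$, and using independence produces exactly $K(\tau)$.

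Assembling the pieces, the error at layer $m$ in expected squared norm is bounded by $(m+1)K(\tau)+2^{-2J}\mathbb{E}[\|\tau\|_{\infty}^{2}]$ times $\mathbb{E}[|X|^{2}]$; the factor $m+1$ comes from applying Cauchy-Schwarz to the telescoped sum to convert $|\sum_{k}\cdot|^{2}$ into $(m+1)\sum_{k}|\cdot|^{2}$. Summing over $m=0,1,\ldots,n$ then produces the $(n+1)^{2}$ factor in front of $K(\tau)$ and the $(n+1)$ factor in front of $2^{-2J}\mathbb{E}[\|\tau\|_{\infty}^{2}]$. I expect the main obstacle to be the commutator estimate, and in particular the borderline logarithmic factor $\log(\|\Delta\tau\|_{\infty}/\|\tau'\|_{\infty})$: it reflects the marginal failure of the commutator kernel to be absolutely integrable without a logarithmic correction, and carefully decoupling $\|\tau'\|_{\infty}$, $\|\tau''\|_{\infty}$, and $\|\Delta\tau\|_{\infty}$ in the near- and far-diagonal regimes is delicate. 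The passage from the deterministic Mallat-type bound to the present stochastic setting is otherwise routine, since stationarity of $Y$ makes the conditional-on-$\tau$ estimate uniform in the position variable and independence of $\tau$ and $X$ lets the expectation in $\tau$ be taken last.
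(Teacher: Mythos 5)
Your overall architecture matches the paper's: both split the error into a commutator term plus a low-pass translation term, telescope the commutator across the $n+1$ layers using $L_{A_k}\le 1$, $A_k(0)=0$ and the non-expansiveness of Lemma \ref{thm:nonexpansive}, and obtain the $(n+1)^2$ and $2^{-2J}(n+1)$ bookkeeping by Cauchy--Schwarz; your treatment of the low-pass term is Mallat's Lemma 2.11 (the paper's Lemma \ref{theorem_cite3}). The genuine gap is in your ``core technical ingredient''. You state it per filter: for each fixed $\theta\in\{\psi_j:\ j<J\}\cup\{\phi_J\}$, $\mathbb{E}\|[L_\tau,\theta\star]Y\|^2\le C\,K(\tau)\,\mathbb{E}[|Y|^2]$, to be proved by a near/far-diagonal kernel splitting and Schur's test. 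But after telescoping, the commutator created at layer $k$ must be summed over the scale $j_k$ ranging over all of $\Lambda_J=\{j\in\mathbb{Z}:\ j<J\}$, an infinite set. A per-filter bound with a constant uniform in $j$ makes that sum diverge (at fine scales the individual commutator norms do not decay, they are of order $\|\tau'\|_\infty$), and a $j$-dependent constant gives no control either. What the assembly actually needs is the frame-level estimate $\mathbb{E}\big[\|[W_J,L_\tau]Y\|^2\big]\le C\,K(\tau)\,\mathbb{E}[|Y|^2]$, in which the norm already contains $\sum_{j<J}|[W[j],L_\tau]Y|^2+|[P_J,L_\tau]Y|^2$. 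This is precisely what the paper imports from \cite[Lemmas 2.14 and 4.8]{mallat2012group}, and its proof is not a single-kernel Schur test: it exploits the Littlewood--Paley almost-orthogonality of the different frequency bands to sum across scales, and the factor $\|\tau'\|_\infty\log(\|\Delta\tau\|_\infty/\|\tau'\|_\infty)$ in $K(\tau)$ is tied to that multiscale aggregation (roughly a count of active scales), not to a marginal integrability failure of one commutator kernel as you suggest.

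A secondary point: for stationary inputs the quantities $\mathbb{E}\|K_\tau Y\|^2$ are not pathwise $L^2(\mathbb{R})$ norms, so ``condition on $\tau$ and take the expectation last'' needs the kernel-correlation hypotheses of the paper's Lemma \ref{theorem_cite1} (Mallat's Lemma 4.8) to make $\mathbb{E}|K_\tau Y(t)|^2$ well defined, independent of $t$, and bounded by $\mathbb{E}[\|K_\tau\|^2]\,\mathbb{E}[|Y|^2]$; your sketch gestures at this but it must be verified for the commutator kernels. With the frame-level commutator bound and this stochastic lemma in hand, the rest of your plan goes through essentially as in the paper.
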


Lemma \ref{thm:deformation} means that for any fixed network depth $n$, if the strength of time warping decays, i.e., $K(\tau)\rightarrow 0$
and $\|\tau\|_{\infty}\rightarrow 0$, the NAST coefficients of the deformed process $L_{\tau}X$
will be close to that of the original process $X$.

If $\tau$ is a constant function, i.e., $\|\tau^{'}\|_{\infty} = 0$,
Lemma \ref{thm:deformation} shows that
\begin{align}\notag
\mathbb{E}\left[\|S_{J}^{A_{1:n}}[\Lambda_{J}^{0:n}]L_{\tau}X-S_{J}^{A_{1:n}}[\Lambda_{J}^{0:n}]X\|^{2}\right]
\leq
C2^{-2J}(n+1)\|\tau\|_{\infty}^{2}\mathbb{E}\left[|X|^{2}\right]
\end{align}
for some constant $C>0$.
It means that for any fixed network depth $n$, the NAST coefficients are asymptotically translation invariant when $J\rightarrow\infty$. This property is referred as the horizontal translation invariance \cite{wiatowski2017mathematical}.

\subsection{Large-scale limiting theorems}

%
We show that the rescaled random process derived from the second-order NAST
converges to a Gaussian process in the finite dimensional distribution sense if
(a) the commonly used wavelets with high vanishing moments, including the Morlet wavelet, the Mexican hat wavelet, and the Daubechies wavelet,
are used in NAST,
and
(b) the nonlinear function $A_{2}$ used in the last layer is differentiable at the origin with $A_{2}^{'}(0)\neq0$.
Under the condition (a), if $A_{2}$ is a continuous function but not differentiable at the origin or $A_{2}^{'}(0)=0$, e.g., the modulus operator $|\cdot|$ and the rectified linear Unit (ReLU) function, the limit of the rescaled coefficient process is a subordinated Gaussian process.
If the condition (a) does not hold,
the non-Gaussian scenarios may arise from the limit of the rescaled coefficient process if the Hermite rank of $A_{1}$ is greater than 1.
Proofs of all theorems and corollaries mentioned in this section are postponed to Section \ref{sec:proof}.

\begin{Remark}
The technical reason for {having such diverse results} is that if the condition (a) holds,
then the low frequency components in the stationary Gaussian processes will be filtered out significantly if wavelets with high vanishing moments are used in NAST.
After performing the nonlinear transformation by $A_{1}$, the filtered Gaussian process becomes a weakly dependent random process,
whose convolution with wavelet functions at larger scales belongs to the domain of attraction of Gaussian limits.
%
For wavelets with low vanishing moments, like the  Morse wavelet, 
%
 the long-range dependence of the initial input {will} be partially reserved after performing
the convolution with the wavelet and the subsequent nonlinear transformation by $A_{1}$.
Hence, the convolution of the {resulting process, which is usually} non-Gaussian {and of} long-range {dependence,} with
wavelet functions at larger scales belongs to the domain of attraction of non-Gaussian limits.
The proof is based on the truncation of Hermite expansions and the multiple Wiener-It$\hat{\textup{o}}$ integrals \cite{leonenko1999limit}.

\end{Remark}

%




\subsubsection{Gaussian limits arising from the second-order NAST with stationary Gaussian processes as inputs}

\begin{Theorem}\label{thm1:A_gaussian}
Let $\psi$ be a real-valued mother wavelet function satisfying Assumption \ref{Assumption:1:wavelet}, $X$ be a second-order stationary Gaussian process satisfying Assumption \ref{Assumption:2:spectral}, and $A_{1}$ be a function satisfying Assumption \ref{Assumption:3:Hermite}. Denote $\mathbf{r} = \textup{rank}(A_{1}(\sigma_{j_{1}}\cdot))$.
If the parameters $\alpha$ and $\beta$ in Assumptions 1 and 2 satisfy $2\alpha+\beta>1/\mathbf{r}$, then for each fixed $j_{1}\in \mathbb{Z}$, when $j_{2}\rightarrow \infty$,
the macroscopic rescaled random process
\begin{equation}\label{thm1:rescaled_process}
2^{j_{2}/2}U^{A_{1}}[j_{1}]X\star \psi_{j_{2}}(2^{j_{2}}t),\ t\in \mathbb{R},
\end{equation}
converges to a Gaussian process $\{V_{1}(t)\}_{t\in \mathbb{R}}$
in the finite dimensional distribution sense.
Moreover, $V_{1}$ can be represented by a Wiener-It$\hat{o}$ integral as follows
\begin{equation}\label{thm1_V1}
V_{1}(t) = \kappa\int_{\mathbb{R}}
e^{i\lambda t}
\Psi(\lambda)W(d\lambda),
\end{equation}
where $\Psi$ is the Fourier transform of $\psi$, $W(d\lambda)$ is a complex-valued Gaussian random measure on $\mathbb{R}$, and
\begin{align}\label{thm1:kappa}
\kappa=\left[\overset{\infty}{\underset{\ell=\mathbf{r}}{\sum}}
\sigma_{j_{1}}^{-2\ell}f_{X\star \psi_{j_{1}}}^{\star\ell}(0)
C^{2}_{\sigma_{j_{1}},\ell}\right]^{\frac{1}{2}}= \left[\frac{1}{2\pi}\int_{\mathbb{R}}R_{A_{1}(X\star \psi_{j_{1}})}(t)dt\right]^{\frac{1}{2}}.
\end{align}
Here, $\{C_{\sigma_{j_{1}},\ell}\}_{\ell= \mathbf{r},\mathbf{r}+1,\ldots}$ are the coefficients of the Hermite polynomials expansion of  $A_{1}(\sigma_{j_{1}}\cdot)$.
$\sigma_{j_{1}}^{2}$ and $f_{X\star \psi_{j_{1}}}$ are the variance and the spectral density function of the stationary process $X\star \psi_{j_{1}}$, respectively, and $R_{A_{1}(X\star \psi_{j_{1}})}$ is the covariance function of the stationary process $A_{1}(X\star \psi_{j_{1}})$.
\end{Theorem}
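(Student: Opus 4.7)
The plan is to combine a Hermite polynomial expansion of $A_{1}$ with a spectral-domain dilation of the multiple Wiener-It$\hat{\textup{o}}$ representation (\ref{itoformula}). Setting $Y_{j_{1}}=(X\star\psi_{j_{1}})/\sigma_{j_{1}}$, Assumption~\ref{Assumption:3:Hermite} provides the $L^{2}$ expansion
\begin{equation*}
A_{1}\bigl(X\star\psi_{j_{1}}(s)\bigr)=\sum_{\ell\geq\mathbf{r}}\frac{C_{\sigma_{j_{1}},\ell}}{\sqrt{\ell!}}\,H_{\ell}\bigl(Y_{j_{1}}(s)\bigr),
\end{equation*}
and linearity of convolution splits the rescaled process in (\ref{thm1:rescaled_process}) term by term. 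Applying (\ref{itoformula}) to each Hermite block, evaluating at $2^{j_{2}}t$, multiplying by $2^{j_{2}/2}$, and performing the dilation $\mu_{k}=2^{j_{2}}\lambda_{k}$---whose associated white-noise scaling produces a factor $2^{-j_{2}\ell/2}$---the $\ell$-th block equals in distribution
\begin{equation*}
2^{(1-\ell)j_{2}/2}\,\sigma_{j_{1}}^{-\ell}\int^{'}_{\mathbb{R}^{\ell}}e^{i(\mu_{1}+\cdots+\mu_{\ell})t}\,\Psi(\mu_{1}+\cdots+\mu_{\ell})\prod_{k=1}^{\ell}\Psi(2^{j_{1}-j_{2}}\mu_{k})\sqrt{f_{X}(2^{-j_{2}}\mu_{k})}\,W(d\mu_{k}).
\end{equation*}

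Next, I would compute the limiting covariance of each block via the Wiener-It$\hat{\textup{o}}$ isometry and dominated convergence. Working in the physical variables, the variance of the $\ell$-th Hermite contribution equals $C_{\sigma_{j_{1}},\ell}^{2}\int_{\mathbb{R}}R_{Y_{j_{1}}}^{\ell}(\tau)\,g(\tau/2^{j_{2}})\,d\tau$, where $g(\tau)=\int\psi(u)\psi(u+\tau)\,du$ with $g(0)=\|\psi\|_{L^{2}}^{2}$; by continuity of $g$ at $0$ and integrability of $R_{Y_{j_{1}}}^{\ell}$ this tends to $C_{\sigma_{j_{1}},\ell}^{2}\|\psi\|_{L^{2}}^{2}\int R_{Y_{j_{1}}}^{\ell}(\tau)\,d\tau$ as $j_{2}\to\infty$. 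The hypothesis $2\alpha+\beta>1/\mathbf{r}$ is the sharp threshold making $R_{Y_{j_{1}}}^{\ell}\in L^{1}$ for every $\ell\geq\mathbf{r}$, because Assumptions~\ref{Assumption:1:wavelet} and~\ref{Assumption:2:spectral} give $f_{X\star\psi_{j_{1}}}(\lambda)\asymp|\lambda|^{2\alpha+\beta-1}$ near the origin, which by a Tauberian argument forces $R_{Y_{j_{1}}}(\tau)\asymp|\tau|^{-(2\alpha+\beta)}$ at infinity. Hermite orthogonality across distinct ranks eliminates all cross covariances, so summing the blocks and invoking Plancherel ($\int|\Psi|^{2}d\mu=2\pi\|\psi\|_{L^{2}}^{2}$, together with $\int R_{Y_{j_{1}}}^{\ell}\,d\tau=2\pi f_{Y_{j_{1}}}^{\star\ell}(0)$) yields the limit covariance $\kappa^{2}\int e^{i\mu(t_{1}-t_{2})}|\Psi(\mu)|^{2}d\mu$, matching $V_{1}$ in (\ref{thm1_V1}) with $\kappa$ as in (\ref{thm1:kappa}).

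The principal obstacle is upgrading this covariance convergence to convergence in distribution to a \emph{Gaussian} limit, since for $\ell\geq 2$ each pre-limit block lives in the $\ell$-th Wiener chaos rather than the first. For this I plan to invoke the fourth-moment theorem of Nualart-Peccati: it suffices to verify that every nontrivial contraction of the symmetric kernel in the $\ell$-th block vanishes in $L^{2}$, and these contraction norms reduce under the same dilation to spectral integrals controlled by dominated convergence and the decay of $|\Psi|^{2}$. Joint finite-dimensional convergence then follows from the Cram\'er-Wold device combined with the multivariate form of the fourth-moment theorem. Finally, the interchange of limit and Hermite sum is justified by a second-moment tail bound uniform in $j_{2}$: Parseval with Assumption~\ref{Assumption:3:Hermite} gives $\sum_{\ell\geq\mathbf{r}}C_{\sigma_{j_{1}},\ell}^{2}<\infty$, and the positive-definiteness bound $|R_{Y_{j_{1}}}|\leq 1$ yields $|R_{Y_{j_{1}}}^{\ell}(\tau)|\leq|R_{Y_{j_{1}}}^{\mathbf{r}}(\tau)|$ for $\ell\geq\mathbf{r}$; the tail of the block variances is therefore dominated by $\|\psi\|_{L^{2}}^{2}\|R_{Y_{j_{1}}}^{\mathbf{r}}\|_{L^{1}}\sum_{\ell\geq L}C_{\sigma_{j_{1}},\ell}^{2}$, which vanishes as $L\to\infty$, identifying the finite-dimensional limit as the Gaussian process $V_{1}$ with the claimed covariance.
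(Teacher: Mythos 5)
Your architecture (Hermite expansion, per-chaos dilation, limiting covariance, tail truncation) parallels the paper's, but you replace its two workhorses --- the appendix lemma on $\ell$-fold convolutions of singular spectral densities and the moment/diagram method --- by a time-domain covariance computation and the Nualart--Peccati fourth-moment theorem. The fourth-moment route is a legitimate, genuinely different way to get Gaussianity, but as written two steps are real gaps. First, your integrability claims rest on ``a Tauberian argument forces $R_{Y_{j_{1}}}(\tau)\asymp|\tau|^{-(2\alpha+\beta)}$'': under Assumptions 1--2 the functions $C_{\Psi}$ and $C_{X}$ are only bounded and continuous, so no pointwise decay of the covariance follows; Tauberian theorems require regular-variation/monotonicity hypotheses that are not available here. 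What your variance limit and your tail bound actually need is $R_{Y_{j_{1}}}^{\ell}\in L^{1}$ for $\ell\geq\mathbf{r}$. For $\mathbf{r}\geq 2$ this can be rescued without Tauberian arguments (Hausdorff--Young: $f_{Y_{j_{1}}}\in L^{\mathbf{r}/(\mathbf{r}-1)}$ exactly when $2\alpha+\beta>1/\mathbf{r}$, hence $R_{Y_{j_{1}}}\in L^{\mathbf{r}}$, and then $|R_{Y_{j_{1}}}|\leq 1$ gives the rest), but for $\mathbf{r}=1$ the hypothesis $2\alpha+\beta>1$ does not give $R_{Y_{j_{1}}}\in L^{1}$, so both your dominated-convergence step and the bound $\|R_{Y_{j_{1}}}^{\mathbf{r}}\|_{L^{1}}$ are unjustified in that case. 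The paper avoids this entirely by staying in the spectral domain: it writes the block covariance as $\sigma_{j_{1}}^{-2\ell}\ell!\int e^{i\eta(t_{k}-t_{h})}f_{X\star\psi_{j_{1}}}^{\star\ell}(2^{-j_{2}}\eta)|\Psi(\eta)|^{2}d\eta$ and only needs boundedness and continuity of $f^{\star\ell}$ when $\ell(2\alpha+\beta)>1$, which is what its convolution lemma delivers.

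Second, the sentence asserting that the contraction norms are ``controlled by dominated convergence and the decay of $|\Psi|^{2}$'' is precisely where the hypothesis $2\alpha+\beta>1/\mathbf{r}$ has to do its work, and it is not carried out. After your dilation, $\|h_{j_{2},\ell}\otimes_{r}h_{j_{2},\ell}\|^{2}$ involves rescaled convolutions $f_{Y_{j_{1}}}^{\star r}(2^{-j_{2}}\cdot)$ and $f_{Y_{j_{1}}}^{\star(\ell-r)}(2^{-j_{2}}\cdot)$ whose normalization and limits must be handled separately for indices below, at, and above the critical value where $\ell(2\alpha+\beta)=1$ (a logarithmic boundary case appears); this is exactly the content of the paper's analysis of non-regular diagrams, with its counting inequality showing those terms carry a strictly negative power of $2^{j_{2}}$. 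You would also need the multivariate (Peccati--Tudor) statement to get joint convergence across time points and across chaos orders, which you mention but do not implement. So the plan is viable and distinct from the paper's moment/diagram proof, but the two hard estimates --- integrability/continuity of the convolved spectral densities (including the $\mathbf{r}=1$ case) and the vanishing of the contractions --- are still missing and constitute the core of the argument.
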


For the Mayer wavelet, whose Fourier transform $\Psi$ vanishes in a neighborhood
of the origin, $A_{1}(X\star \psi_{j_{1}})$ becomes a weakly dependent process for all stationary Gaussian process $X$.
Following the proof of Theorem \ref{thm1:A_gaussian},
we can show that the rescaled process $2^{j_{2}/2}U^{A_{1}}[j_{1}]X\star\psi_{j_{2}}(2^{j_{2}}t)$, where $t\in \mathbb{R}$,
also converges to a Gaussian process
in the finite dimensional distribution sense when $j_{2}$ tends to infinity.
A part of results in Theorem \ref{thm1:A_gaussian} is validated in Fig. \ref{fig:validation_theorem_gaussian},
in which $\psi$ is the Daubechies-4 wavelet ($\alpha = 2$), $A_{1}(\cdot) = |\cdot|$, $j_{1} = 1$, $j_{2}=10$,
and $X$ is a stationary Gaussian process with the spectral density (\ref{example_spectral}) and $(\beta_{1},\beta_{2},c_{2}) = (0.75,4,1)$.
The quantile-quantile plot in Fig. \ref{fig:validation_theorem_gaussian} shows that
the empirical distribution of the random variable $U^{A_{1}}[j_{1}]X\star \psi_{j_{2}}(0)$ normalized by the standard deviation
is very close to the Gaussian distribution.

\begin{figure}
\centering
\subfigure[][histogram]
{\includegraphics[scale=0.6]{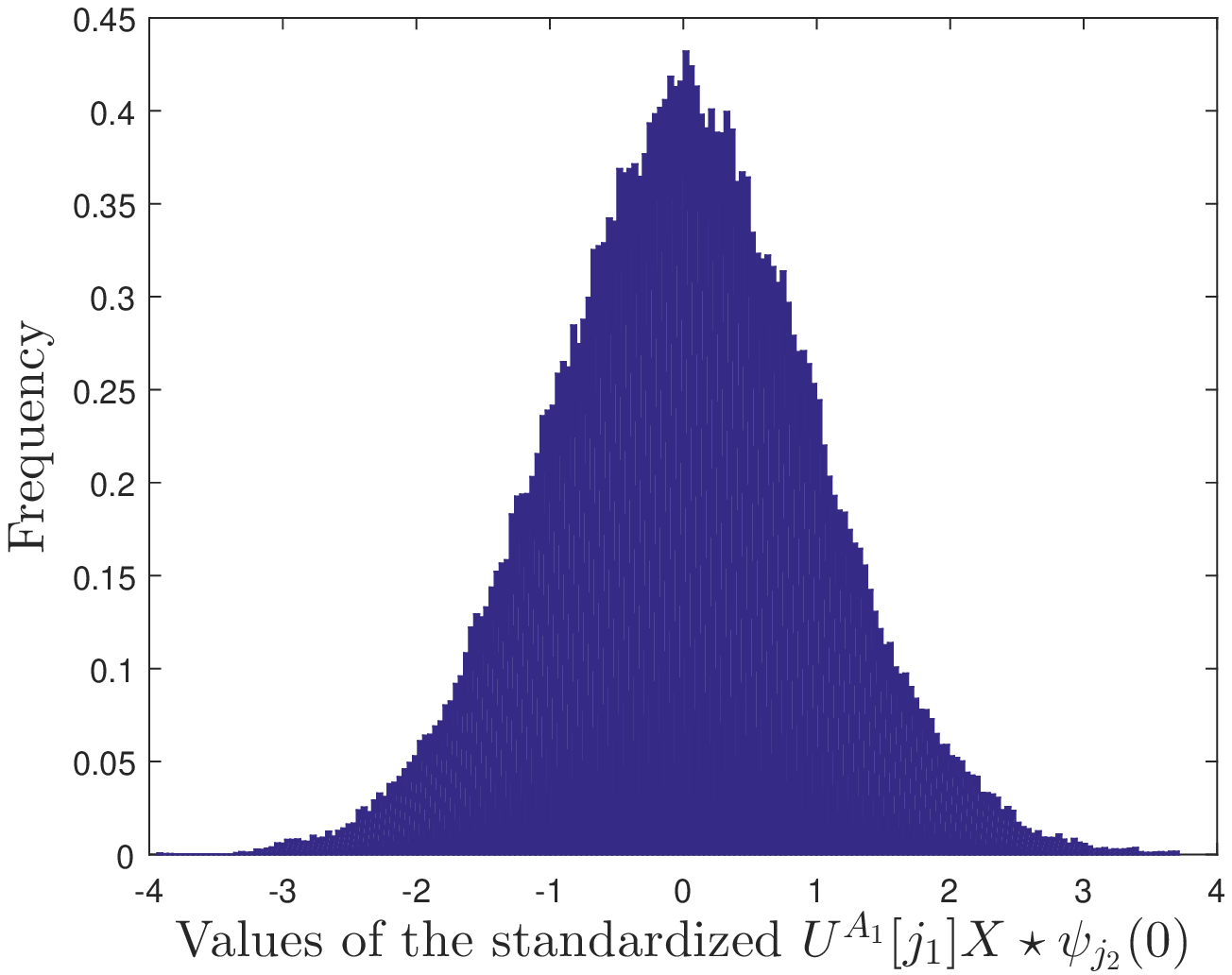}} \hspace{0.4cm}
\subfigure[][quantile-quantile plot]
{\includegraphics[scale=0.6]{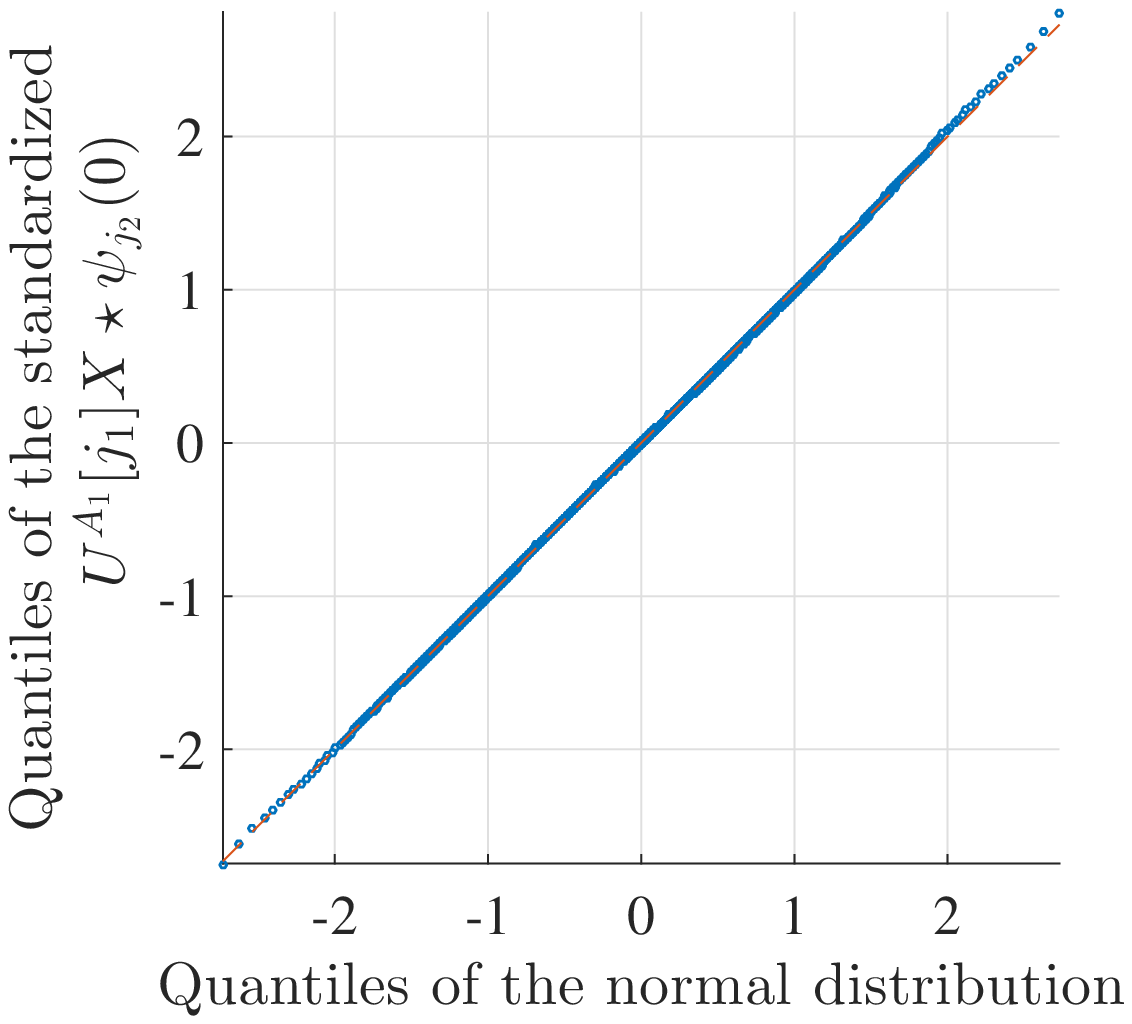}} 
\caption{Numerical validation of Theorem \ref{thm1:A_gaussian}.
Here, $X$ is a stationary Gaussian process with the spectral density (\ref{example_spectral}) and $(\beta_{1},\beta_{2},c_{2}) = (0.75,4,1)$ and
the Daubechies-4 wavelet is used for the scattering transform. We also set $A_{1}(\cdot)=|\cdot|$, $j_{1} = 1$ and $j_{2} = 10$.
}
\label{fig:validation_theorem_gaussian}
\end{figure}

For the function $A_{2}$, we assume that one of the following assumptions holds.

\begin{Assumption} \label{Assumption:4.1:A2:differentiable}
$A_{2}$ is differentiable at the origin with $A_{2}^{'}(0)\neq0$.
\end{Assumption}

\begin{Assumption} \label{Assumption:4.2:A2:Fchi}
$A_{2}$ belongs to the function space $F_{\chi}$ for some $\chi>0$, where
\begin{equation}
F_{\chi} := \{\tilde{A}\in C(\mathbb{R})\mid \tilde{A}(cx) = c^{\chi}\tilde{A}(x)\ \textup{for all}\ c>0\}.
\end{equation}
\end{Assumption}

Clearly, functions satisfying Assumption \ref{Assumption:4.1:A2:differentiable} and Assumption \ref{Assumption:4.2:A2:Fchi} are exclusive. The function space $F_{1}$ includes continuous functions of the form
\begin{equation}
A_{2}(x) = \left\{
\begin{array}{ll}
a_{+}x  &\ \ \ \textup{if}\ x\geq 0
\\
a_{-}x  &\ \ \ \textup{if}\ x< 0\,,
\end{array}\right.
\end{equation}
where $a_{+},a_{-}\in \mathbb{R}$. Typical examples include $A_{2}(\cdot) =|\cdot|$ and ReLU($\cdot$).

If $A_{2}$ satisfies Assumption \ref{Assumption:4.1:A2:differentiable}, we can apply the delta method
summarized in \ref{Lemma:Delta_method} (see also \cite[Theorem 3.7]{dasgupta2008asymptotic} and \cite[Section 8.3.1]{resnick2003probability}) to find the finite dimensional distribution and the covariance structure
of the scaling limit of $2^{j_{2}/2}U^{A_{1},A_{2}}[j_{1},j_{2}]X(2^{j_{2}}t)$ from the result of Theorem \ref{thm1:A_gaussian}.
Specifically, by the delta method, we immediately 
show that the covariance structure of the limit of $2^{j_{2}/2}U^{A_{1},A_{2}}[j_{1},j_{2}]X(2^{j_{2}}\cdot)$
is the same as that of $V_{1}$ up to a multiplicative constant.
If $A_{2}$ satisfies Assumption \ref{Assumption:4.2:A2:Fchi}, we have
\begin{align}
2^{\chi j_{2}/2}U^{A_{1},A_{2}}[j_{1},j_{2}]X(2^{j_{2}}t)
\label{bridge_first_second}
=A_{2}\Big(2^{j_{2}/2}U^{A_{1}}[j_{1}]X\star \psi_{j_{2}}(2^{j_{2}}t)\Big)
\end{align}
and the distribution comes immediately from the result of Theorem \ref{thm1:A_gaussian} and applying the continuous mapping theorem \cite{shao2006mathematical} to the right hand side of (\ref{bridge_first_second}). 

\begin{Corollary}\label{corollary_delta_generalA}
Assume the same conditions as in Theorem \ref{thm1:A_gaussian}. If Assumption \ref{Assumption:4.1:A2:differentiable} holds,
then for each fixed $j_{1}\in \mathbb{Z}$,
\begin{equation}\label{thm1.2.1}
2^{j_{2}/2}\Big\{U^{A_{1},A_{2}}[j_{1},j_{2}]X(2^{j_{2}}t)-A_{2}(0)\Big\}\overset{d}{\Rightarrow} A_{2}^{'}(0)V_{1}(t)
\end{equation}
when $j_{2}\rightarrow \infty$ in the finite dimensional distribution sense.
If Assumption \ref{Assumption:4.2:A2:Fchi} holds,
then for each fixed $j_{1}\in \mathbb{Z}$,
\begin{equation}\label{thm1.2.1}
2^{\chi j_{2}/2}U^{A_{1},A_{2}}[j_{1},j_{2}]X(2^{j_{2}}t)\overset{d}{\Rightarrow} A_{2}(V_{1}(t))
\end{equation}
when $j_{2}\rightarrow \infty$ in the finite dimensional distribution sense.
Moreover, if $A_{1}(\cdot) = A_{2}(\cdot)=|\cdot|$ 
and $C_{X}(0)>0$, for all $j_{1}\in \mathbb{Z}$,
\begin{equation}\label{thm1.2.1}
\underset{j\rightarrow\infty}{\lim}2^{j/2}\widetilde{S}X(j_{1},j_{1}+j) = \Theta(j_{1},\alpha,\beta),
\end{equation}
where
\begin{equation}\label{def:Theta}
\Theta(j_{1},\alpha,\beta) = 2^{-\frac{j_{1}}{2}}\kappa\sigma_{j_{1}}^{-1}\|\Psi\|_{L^{2}}
\end{equation}
and $\kappa$ is defined in (\ref{thm1:kappa}) with $A_{1}(\cdot)=|\cdot|$. Moreover,
\begin{equation}\label{thm1.2.2}
\underset{j_{1}\rightarrow\infty}{\textup{lim}}\Theta(j_{1},\alpha,\beta) = \|\Psi\|_{L^{2}}\left[\overset{\infty}{\underset{\ell=2}{\sum}}
\|Q\|_{L^{1}}^{-\ell}
Q^{\star \ell}(0)
C^{2}_{\ell}\right]^{\frac{1}{2}}<\infty,
\end{equation}
where $\{C_{\ell}\}_{\ell=2}^{\infty}$ are the coefficients of the Hermite polynomials expansion of  $|\cdot|$ and
\begin{equation}\label{corollary:Q}
Q(\lambda) = \frac{|\Psi(\lambda)|^{2}}{|\lambda|^{1-\beta}}.
\end{equation}

\end{Corollary}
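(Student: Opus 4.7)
The plan is to dispatch parts (i) and (ii) by standard continuity arguments applied to Theorem \ref{thm1:A_gaussian}. For (i), Theorem \ref{thm1:A_gaussian} shows $U^{A_1}[j_1]X\star\psi_{j_2}(2^{j_2}t)=O_P(2^{-j_2/2})$, hence tends to $0$ in probability; expanding $A_2(y)=A_2(0)+A_2'(0)y+o(y)$ about the origin, multiplying by $2^{j_2/2}$, and applying Slutsky's theorem jointly at finitely many times (i.e., the delta method of Lemma \ref{Lemma:Delta_method}) yields convergence to $A_2'(0)V_1(t)$. For (ii), the homogeneity $A_2(cx)=c^\chi A_2(x)$ pulls the scaling inside, giving $2^{\chi j_2/2}U^{A_1,A_2}[j_1,j_2]X(2^{j_2}t)=A_2\bigl(2^{j_2/2}U^{A_1}[j_1]X\star\psi_{j_2}(2^{j_2}t)\bigr)$, and the continuous mapping theorem applied to Theorem \ref{thm1:A_gaussian} finishes the argument.

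For the first limit in (iii), I specialize to $A_1(\cdot)=A_2(\cdot)=|\cdot|\in F_1$. Part (ii) with $\chi=1$ gives $2^{j_2/2}||X\star\psi_{j_1}|\star\psi_{j_2}|(2^{j_2}t)\overset{d}{\Rightarrow}|V_1(t)|$, and both sides are stationary in $t$. I would promote this distributional convergence (at $t=0$) to convergence of first absolute moments via a uniform second-moment bound (the non-expansive inequality of Lemma \ref{thm:nonexpansive} controls the prelimit). Gaussianity gives $\mathbb{E}|X\star\psi_{j_1}|=\sqrt{2/\pi}\,\sigma_{j_1}$ and, since $V_1(0)\sim N(0,\kappa^2\|\Psi\|_{L^2}^2)$, $\mathbb{E}|V_1(0)|=\sqrt{2/\pi}\,\kappa\|\Psi\|_{L^2}$. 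Taking $j_2=j_1+j$, dividing by the denominator, and multiplying by $2^{j/2}$ produces $\Theta(j_1,\alpha,\beta)=2^{-j_1/2}\kappa\sigma_{j_1}^{-1}\|\Psi\|_{L^2}$, matching the stated formula.

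For the $j_1\to\infty$ asymptotic of $\Theta$, I use that $A_1(y)=|y|$ is $1$-homogeneous to factor $C_{\sigma_{j_1},\ell}=\sigma_{j_1}C_\ell$, so that
\begin{equation*}
\Theta(j_1,\alpha,\beta)^2 \;=\; \|\Psi\|_{L^2}^2\sum_{\ell=2}^{\infty}\Bigl(2^{-j_1}\sigma_{j_1}^{-2\ell}f^{\star\ell}_{X\star\psi_{j_1}}(0)\Bigr)C_\ell^2.
\end{equation*}
The main computation is the change of variables $\mu=2^{j_1}\lambda$ in both $\sigma_{j_1}^2$ and in the $(\ell-1)$-fold integral defining $f^{\star\ell}_{X\star\psi_{j_1}}(0)$. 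Setting $q_{j_1}(\mu):=Q(\mu)C_X(2^{-j_1}\mu)/C_X(0)$, the cancellation of powers of $2$ and of $C_X(0)$ produces the exact identity $2^{-j_1}\sigma_{j_1}^{-2\ell}f^{\star\ell}_{X\star\psi_{j_1}}(0)=q_{j_1}^{\star\ell}(0)/\|q_{j_1}\|_{L^1}^\ell$. Continuity of $C_X$ at $0$ and its boundedness then give, via dominated convergence on the $(\ell-1)$-fold integral, $q_{j_1}\to Q$ pointwise and in $L^1$, with $q_{j_1}^{\star\ell}(0)\to Q^{\star\ell}(0)$ for each fixed $\ell$.

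The main obstacle is exchanging the $j_1\to\infty$ limit with the sum over $\ell$. My plan is dominated convergence with a uniform-in-$(j_1,\ell)$ majorant. The crucial observation is that the Theorem \ref{thm1:A_gaussian} hypothesis $2\alpha+\beta>1/2$ (the $\textup{rank}(|\cdot|)=2$ case) is precisely the condition for $Q\in L^2(\mathbb{R})$, since $|\Psi(\mu)|^4|\mu|^{-2(1-\beta)}$ is integrable at the origin iff $4\alpha-2(1-\beta)>-1$. Therefore $\tilde q_{j_1}:=q_{j_1}/\|q_{j_1}\|_{L^1}$ is a probability density with $\|\tilde q_{j_1}\|_{L^2}$ uniformly bounded; Young's inequality gives $\|\tilde q_{j_1}^{\star 2}\|_\infty\leq\|\tilde q_{j_1}\|_{L^2}^2$, and further convolutions with the probability density $\tilde q_{j_1}$ cannot increase the $L^\infty$-norm, so $\tilde q_{j_1}^{\star\ell}(0)\leq\|\tilde q_{j_1}\|_{L^2}^2$ uniformly in $\ell\geq 2$ and $j_1$. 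Combined with the Parseval identity $\sum_{\ell\geq 0}C_\ell^2=\int_{\mathbb{R}}y^2(2\pi)^{-1/2}e^{-y^2/2}\,dy=1$, this supplies the summable majorant needed for dominated convergence and simultaneously proves finiteness of the limit in (\ref{thm1.2.2}).
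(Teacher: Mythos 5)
Your treatment of parts (i) and (ii) (delta method, homogeneity plus continuous mapping) and your computation of the $j_{1}\to\infty$ limit of $\Theta$ follow essentially the paper's route; in fact your handling of the interchange of $\lim_{j_{1}\to\infty}$ with the series over $\ell$ (the identity $2^{-j_{1}}\sigma_{j_{1}}^{-2\ell}f^{\star\ell}_{X\star\psi_{j_{1}}}(0)=q_{j_{1}}^{\star\ell}(0)/\|q_{j_{1}}\|_{L^{1}}^{\ell}$, the uniform $L^{2}$ bound on $\tilde q_{j_{1}}$ via $2\alpha+\beta>1/2$, Young's inequality, and the monotonicity of the sup-norm under convolution with a probability density) is more explicit than the paper, which passes the limit inside the sum and only bounds the limiting series via its Lemma \ref{lemma:1}; you should just add that $\|q_{j_{1}}\|_{L^{1}}\to\|Q\|_{L^{1}}>0$ (using $C_{X}(0)>0$ and dominated convergence) so that the normalization is bounded away from zero for large $j_{1}$. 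A small slip elsewhere: $\sum_{\ell\geq 2}C_{\ell}^{2}=1-2/\pi$, not $1$, but only finiteness is needed.

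The genuine gap is in the first limit of part (iii), where you promote the distributional convergence $2^{(j_{1}+j)/2}\big||X\star\psi_{j_{1}}|\star\psi_{j_{1}+j}\big|(0)\overset{d}{\Rightarrow}|V_{1}(0)|$ to convergence of first moments ``via a uniform second-moment bound (the non-expansive inequality of Lemma \ref{thm:nonexpansive} controls the prelimit).'' Lemma \ref{thm:nonexpansive} only yields $\mathbb{E}\big[(|X\star\psi_{j_{1}}|\star\psi_{j_{1}+j})^{2}\big]\leq\mathbb{E}[|X|^{2}]$, a bound on the \emph{unrescaled} coefficients; what uniform integrability requires is $\sup_{j}2^{j_{1}+j}\,\mathbb{E}\big[(|X\star\psi_{j_{1}}|\star\psi_{j_{1}+j})^{2}\big]<\infty$, and multiplying the non-expansive bound by the diverging factor $2^{j_{1}+j}$ gives nothing, so the step as written fails. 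The correct source of the uniform bound is the second-moment computation inside the proof of Theorem \ref{thm1:A_gaussian}: the identity (\ref{eq:ZZ}) together with the estimates proving claims (a)--(b) there shows that $2^{j_{2}}\mathbb{E}\big[(U^{A_{1}}[j_{1}]X\star\psi_{j_{2}})^{2}\big]$ converges (to $\mathbb{E}[V_{1}^{2}]$) and is therefore bounded in $j_{2}$; this is exactly how the paper argues, invoking Lemma \ref{lemma:expect_converge}. Replacing your appeal to Lemma \ref{thm:nonexpansive} by this moment convergence repairs the argument, and the rest of your derivation of $\Theta(j_{1},\alpha,\beta)=2^{-j_{1}/2}\kappa\sigma_{j_{1}}^{-1}\|\Psi\|_{L^{2}}$ (including $\mathbb{E}|V_{1}(0)|=\sqrt{2/\pi}\,\kappa\|\Psi\|_{L^{2}}$ and $\mathbb{E}|X\star\psi_{j_{1}}|=\sqrt{2/\pi}\,\sigma_{j_{1}}$) is correct.
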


\begin{Remark} For the case $A_{2}(\cdot) = |\cdot|\in F_{1}$ and $t_{1},t_{2}\in \mathbb{R}$, $(A_{2}(V_{1}(t_{1})),A_{2}(V_{1}(t_{2}))$
has the bivariate folded normal distribution.
An integral representation for the covariance between $|V_{1}(t_{1})|$ and $|V_{1}(t_{2})|$ is derived in \cite{murthy2015note}.
\end{Remark}

\subsubsection{Gaussian limits arising from the second-order NAST with the fractional Brownian motions as inputs}

By (\ref{fbm_prop1})-(\ref{fbm_psi_cov}),
the finite dimensional distributions of $B_{H}\star \psi_{j_{1}}$ and $X\star \psi_{j_{1}}$ coincide with each other when $\alpha>H$,
where $X$ is a Gaussian $1/f$ noise with spectral density $f_{X}(\lambda) = (2\pi)^{-1}|\lambda|^{-(2H+1)}$
(i.e., $C_{X}(\cdot) = (2\pi)^{-1}$ and $\beta = -2H$).
The self-similar structure of $f_{X}$ not only leads to the scaling relation $\sigma_{j}^{2}=2^{2Hj}\sigma_{0}^{2}$
for the variance of $X\star \psi_{j_{1}}$, which is derived from
(\ref{def:sigmaj1_fbm}), but also makes
\begin{equation}
f_{X\star \psi_{j}}^{\star\ell}(0) = 2^{j(2H\ell+1)}f^{\star \ell}_{X\star \psi_{0}}(0),\ j\in \mathbb{Z},\ \ell\in \mathbb{N}.
\end{equation}
Hence, we have the results for fractional Brownian motion parallel to Theorem \ref{thm1:A_gaussian} and Corollary \ref{corollary_delta_generalA} as follows.

\begin{Theorem}\label{thm1:A_gaussian_fbm}
Let $\psi$ be a real-valued mother wavelet function satisfying Assumption \ref{Assumption:1:wavelet}, $B_{H}$ be a two-sided fractional Brownian motion with Hurst index $H\in(0,1)$, and $A_{1}$ be a function satisfying Assumption \ref{Assumption:3:Hermite}. Denote $\mathbf{r} = \textup{rank}(A_{1}(\sigma_{j_{1}}\cdot))$.
If the parameter $\alpha$ in Assumption 1 satisfies $2(\alpha-H)>1/\mathbf{r}$, then for each fixed $j_{1}\in \mathbb{Z}$, when $j_{2}\rightarrow \infty$,
the macroscopic rescaled random process
\begin{equation}\label{thm1:rescaled_process_fbm}
2^{j_{2}/2}U^{A_{1}}[j_{1}]B_{H}\star \psi_{j_{2}}(2^{j_{2}}t),\ t\in \mathbb{R},
\end{equation}
converges to a Gaussian process $\{V_{1}(t)\}_{t\in \mathbb{R}}$
in the finite dimensional distribution sense.
Moreover, $V_{1}$ can be represented by a Wiener-It$\hat{\textup{o}}$ integral as follows
\begin{equation*}
V_{1}(t) = \kappa\int_{\mathbb{R}}
e^{i\lambda t}
\Psi(\lambda)W(d\lambda),
\end{equation*}
where
\begin{align}\label{thm1:kappa_fbm}
\kappa=\left[2^{j_{1}}\overset{\infty}{\underset{\ell=\mathbf{r}}{\sum}}
\sigma_{0}^{-2\ell}f_{X\star \psi_{0}}^{\star\ell}(0)
C^{2}_{\sigma_{j_{1}},\ell}\right]^{\frac{1}{2}}= \left[\frac{1}{2\pi}\int_{\mathbb{R}}R_{A_{1}(X\star \psi_{j_{1}})}(t)dt\right]^{\frac{1}{2}}.
\end{align}
Here, $X$ is the Gaussian 1/$f$ noise derived from $B_{H}$ through (\ref{X_fbm}).
The definitions about $\Psi$, $W(d\lambda)$, $\{C_{\sigma_{j_{1}},\ell}\}_{\ell= \mathbf{r},\mathbf{r}+1,\ldots}$, $\sigma_{0}^{2}$, $f_{X\star \psi_{0}}$, and $R_{A_{1}(X\star \psi_{j_{1}})}$ are the same as that in Theorem \ref{thm1:A_gaussian}.
\end{Theorem}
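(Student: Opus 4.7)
The plan is to reduce Theorem \ref{thm1:A_gaussian_fbm} to Theorem \ref{thm1:A_gaussian} by exploiting the distributional identification already established in Section \ref{sec:fbm}: under $\alpha>H$, the stationary Gaussian process $B_H\star\psi_{j_1}$ and the wavelet coefficient $X\star\psi_{j_1}$ of the Gaussian $1/f$ noise $X$ (with spectral density $f_X(\lambda)=(2\pi)^{-1}|\lambda|^{-(2H+1)}$) share the same finite-dimensional distribution, as both are centered stationary Gaussian with spectral density (\ref{X_fbm}). Consequently, applying the pointwise Lipschitz map $A_1$ and then convolving with $\psi_{j_2}$ preserves this identity, so that the rescaled processes in (\ref{thm1:rescaled_process_fbm}) and (\ref{thm1:rescaled_process}) have identical finite-dimensional laws. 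Thus it suffices to establish the claim for the $1/f$ noise input $X$.

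The subtle point is that the $1/f$ noise formally corresponds to Assumption \ref{Assumption:2:spectral} with $C_X(\lambda)\equiv (2\pi)^{-1}$ and $\beta=-2H<0$, which lies outside the range $(0,1)$ originally imposed. I would therefore revisit the proof of Theorem \ref{thm1:A_gaussian} with $\beta$ replaced by $-2H$ and verify that every integrability estimate still holds under the assumption $2(\alpha-H)>1/\mathbf{r}$ (which is precisely $2\alpha+\beta>1/\mathbf{r}$ with $\beta=-2H$). The crucial integrals to check are the variance $\sigma_{j_1}^2=\int|\Psi(2^{j_1}\lambda)|^2 f_X(\lambda)\,d\lambda$ and the $\ell$-fold convolution $f_{X\star\psi_{j_1}}^{\star\ell}(0)$: near zero the factor $|\Psi(2^{j_1}\lambda)|^2\sim|C_\Psi(0)|^2 2^{2\alpha j_1}|\lambda|^{2\alpha}$ tames the singularity $|\lambda|^{-(2H+1)}$ exactly when $2(\alpha-H)>0$, and the Hermite-rank sharpening $2(\alpha-H)>1/\mathbf{r}$ guarantees finiteness of the tail of the Hermite series at the origin; this is the main obstacle and amounts to rerunning the chain of spectral estimates from the proof of Theorem \ref{thm1:A_gaussian} with the sign of $\beta$ reversed but the same algebraic bookkeeping.

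Finally, I would simplify the resulting constant $\kappa$ by invoking the self-similarity of the $1/f$ spectrum. From (\ref{def:sigmaj1_fbm}) one has $\sigma_{j_1}^{2}=2^{2Hj_1}\sigma_0^{2}$, and a change of variables in the $\ell$-fold convolution together with $f_{X\star\psi_{j_1}}(\lambda)=|\Psi(2^{j_1}\lambda)|^2 f_X(\lambda)$ yields the identity $f_{X\star\psi_{j_1}}^{\star\ell}(0)=2^{j_1(2H\ell+1)}f_{X\star\psi_0}^{\star\ell}(0)$. Substituting these two scaling relations into the expression for $\kappa$ supplied by Theorem \ref{thm1:A_gaussian}, the powers $2^{-2H j_1\ell}$ and $2^{j_1(2H\ell+1)}$ telescope to a single $2^{j_1}$, producing (\ref{thm1:kappa_fbm}). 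The equivalent representation $\kappa^2=\frac{1}{2\pi}\int_\mathbb{R}R_{A_1(X\star\psi_{j_1})}(t)\,dt$ is then inherited verbatim from (\ref{thm1:kappa}), completing the derivation.
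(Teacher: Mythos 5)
Your proposal is correct and follows essentially the same route as the paper: the paper also identifies $B_{H}\star\psi_{j_{1}}$ in law with $X\star\psi_{j_{1}}$ for the Gaussian $1/f$ noise ($C_{X}\equiv(2\pi)^{-1}$, $\beta=-2H$), reruns the argument of Theorem \ref{thm1:A_gaussian} under $2(\alpha-H)>1/\mathbf{r}$, and uses the scaling relations $\sigma_{j_{1}}^{2}=2^{2Hj_{1}}\sigma_{0}^{2}$ and $f_{X\star\psi_{j_{1}}}^{\star\ell}(0)=2^{j_{1}(2H\ell+1)}f_{X\star\psi_{0}}^{\star\ell}(0)$ to telescope the constant into (\ref{thm1:kappa_fbm}). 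Your explicit flagging of the sign change $\beta=-2H\notin(0,1)$ and the need to recheck the spectral estimates is exactly the (implicit) content of the paper's reduction, so no further comparison is needed.
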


\begin{Corollary}\label{corollary_delta_generalA_fbm}
Assume the same conditions as in Theorem \ref{thm1:A_gaussian_fbm}. If Assumption \ref{Assumption:4.1:A2:differentiable} holds,
then for each fixed $j_{1}\in \mathbb{Z}$,
\begin{equation*}
2^{j_{2}/2}\Big\{U^{A_{1},A_{2}}[j_{1},j_{2}]B_{H}(2^{j_{2}}t)-A_{2}(0)\Big\}\overset{d}{\Rightarrow} A_{2}^{'}(0)V_{1}(t)
\end{equation*}
when $j_{2}\rightarrow \infty$ in the finite dimensional distribution sense.
If Assumption \ref{Assumption:4.2:A2:Fchi} holds,
then for each fixed $j_{1}\in \mathbb{Z}$,
\begin{equation*}
2^{\chi j_{2}/2}U^{A_{1},A_{2}}[j_{1},j_{2}]B_{H}(2^{j_{2}}t)\overset{d}{\Rightarrow} A_{2}(V_{1}(t))
\end{equation*}
when $j_{2}\rightarrow \infty$ in the finite dimensional distribution sense.
Moreover, if $A_{1}(\cdot) = A_{2}(\cdot)=|\cdot|$, 
for all $j_{1}\in \mathbb{Z}$,
\begin{equation}\label{thm1.2.1_fbm}
\underset{j\rightarrow\infty}{\lim}2^{j/2}\widetilde{S}B_{H}(j_{1},j_{1}+j) = \Theta(\alpha,-2H),
\end{equation}
where
\begin{equation}\label{def:Theta_fbm}
\Theta(\alpha,-2H) = \sigma_{0}^{-1}\|\Psi\|_{L^{2}}\left[\overset{\infty}{\underset{\ell=2}{\sum}}
\sigma_{0}^{-2\ell}f_{X\star \psi_{0}}^{\star\ell}(0)
C^{2}_{\ell}\right]^{\frac{1}{2}},
\end{equation}
which is independent to $j_{1}$.
Here, $\{C_{\ell}\}_{\ell= 2,3,\ldots}$ are the coefficients of the Hermite polynomials expansion of  $|\cdot|$.

\end{Corollary}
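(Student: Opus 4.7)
The plan is to treat Corollary \ref{corollary_delta_generalA_fbm} as a consequence of Theorem \ref{thm1:A_gaussian_fbm} together with three standard post-processing steps: the delta method, the continuous mapping theorem, and an upgrade from weak convergence to convergence of first moments via uniform integrability. Throughout, let $Y_{j_{2}}(t):=U^{A_{1}}[j_{1}]B_{H}\star\psi_{j_{2}}(2^{j_{2}}t)$, so Theorem \ref{thm1:A_gaussian_fbm} reads $2^{j_{2}/2}Y_{j_{2}}(t)\overset{d}{\Rightarrow} V_{1}(t)$ in finite-dimensional distribution as $j_{2}\to\infty$, which also implies $Y_{j_{2}}(t)\to 0$ in probability.

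For the first claim (Assumption \ref{Assumption:4.1:A2:differentiable}), I would invoke the multivariate delta method summarized in Lemma \ref{Lemma:Delta_method}. Writing
\[
2^{j_{2}/2}\bigl(U^{A_{1},A_{2}}[j_{1},j_{2}]B_{H}(2^{j_{2}}t)-A_{2}(0)\bigr)=2^{j_{2}/2}\bigl(A_{2}(Y_{j_{2}}(t))-A_{2}(0)\bigr),
\]
a first-order Taylor expansion at the origin gives $A_{2}(Y_{j_{2}}(t))-A_{2}(0)=A_{2}^{'}(0)Y_{j_{2}}(t)+o(Y_{j_{2}}(t))$, so multiplying by $2^{j_{2}/2}$ and using Slutsky's theorem together with Theorem \ref{thm1:A_gaussian_fbm} yields the claimed $A_{2}^{'}(0)V_{1}(t)$ limit. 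For the second claim (Assumption \ref{Assumption:4.2:A2:Fchi}), the homogeneity relation $A_{2}(cx)=c^{\chi}A_{2}(x)$ for $c>0$ applied with $c=2^{j_{2}/2}$ gives
\[
2^{\chi j_{2}/2}U^{A_{1},A_{2}}[j_{1},j_{2}]B_{H}(2^{j_{2}}t)=2^{\chi j_{2}/2}A_{2}(Y_{j_{2}}(t))=A_{2}\bigl(2^{j_{2}/2}Y_{j_{2}}(t)\bigr),
\]
and since $A_{2}$ is continuous, the finite-dimensional continuous mapping theorem combined with Theorem \ref{thm1:A_gaussian_fbm} produces the limit $A_{2}(V_{1}(t))$.

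For the third claim, I would specialize the second to $A_{1}(\cdot)=A_{2}(\cdot)=|\cdot|\in F_{1}$ (so $\chi=1$) to obtain $2^{j_{2}/2}\bigl||B_{H}\star\psi_{j_{1}}|\star\psi_{j_{2}}(2^{j_{2}}t)\bigr|\overset{d}{\Rightarrow}|V_{1}(t)|$. Stationarity of the process $||B_{H}\star\psi_{j_{1}}|\star\psi_{j_{2}}|$ makes its first moment independent of time, so evaluating at $t=0$ and upgrading the weak convergence to convergence of first moments by uniform integrability (controlled through bounds on higher-order $L^{p}$ norms of the iterated convolution, which follow from the Littlewood–Paley control and non-expansiveness in Lemma \ref{thm:nonexpansive}) gives $2^{j_{2}/2}\mathbb{E}\bigl||B_{H}\star\psi_{j_{1}}|\star\psi_{j_{2}}\bigr|\to\mathbb{E}|V_{1}(0)|=\sqrt{2/\pi}\,\kappa\|\Psi\|_{L^{2}}$. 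Combined with $\mathbb{E}|B_{H}\star\psi_{j_{1}}|=\sqrt{2/\pi}\,\sigma_{j_{1}}$ and the relation $j_{2}=j_{1}+j$, the ratio becomes $2^{-j_{1}/2}\kappa\|\Psi\|_{L^{2}}/\sigma_{j_{1}}$, and substituting the explicit expression (\ref{thm1:kappa_fbm}) for $\kappa$ together with $C_{\sigma_{j_{1}},\ell}=\sigma_{j_{1}}C_{\ell}$ and $\sigma_{j_{1}}=2^{Hj_{1}}\sigma_{0}$ collapses all $j_{1}$-dependence, giving the claimed $j_{1}$-free constant $\Theta(\alpha,-2H)$.

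The main obstacle is the uniform integrability required in the third claim: weak convergence alone does not pass to expectations, and one must either exhibit a dominating $L^{p}$ bound uniform in $j_{2}$ or verify a suitable moment condition for the truncated Hermite expansion used inside the proof of Theorem \ref{thm1:A_gaussian_fbm}. I expect the cleanest route is to bound $\mathbb{E}\bigl[(2^{j_{2}/2}||B_{H}\star\psi_{j_{1}}|\star\psi_{j_{2}}|)^{2}\bigr]$ by its covariance representation and show it is uniformly bounded in $j_{2}$, using Assumption \ref{Assumption:1:wavelet} and the scaling $\sigma_{j}^{2}=2^{2Hj}\sigma_{0}^{2}$ from (\ref{def:sigmaj1_fbm}); the other two parts are essentially immediate given Theorem \ref{thm1:A_gaussian_fbm}.
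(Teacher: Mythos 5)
Your proposal is correct and follows essentially the same route as the paper: the delta method (Lemma \ref{Lemma:Delta_method}) under Assumption \ref{Assumption:4.1:A2:differentiable}, the continuous mapping theorem via the homogeneity $A_{2}(cx)=c^{\chi}A_{2}(x)$ under Assumption \ref{Assumption:4.2:A2:Fchi}, and for the normalized scattering moments an upgrade from weak convergence to convergence of first moments together with $\mathbb{E}|B_{H}\star\psi_{j_{1}}|=\sqrt{2/\pi}\,\sigma_{j_{1}}$, the explicit $\kappa$ of (\ref{thm1:kappa_fbm}), $C_{\sigma_{j_{1}},\ell}=\sigma_{j_{1}}C_{\ell}$ and $\sigma_{j_{1}}=2^{Hj_{1}}\sigma_{0}$ to cancel the $j_{1}$-dependence. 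The only small remark is that the uniform integrability you worry about does not need Lemma \ref{thm:nonexpansive} (which controls unrescaled quantities and would not absorb the factor $2^{j_{2}/2}$); it is exactly the paper's device, since the moment-method proof of Theorem \ref{thm1:A_gaussian_fbm} already yields convergence (hence uniform boundedness in $j_{2}$) of the second moment of the rescaled process, so Lemma \ref{lemma:expect_converge} applies directly — which coincides with the ``cleanest route'' you propose.
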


Theorem \ref{thm1:A_gaussian_fbm} is comparable with \cite[Lemma 3.3]{bruna2015intermittent}, where the authors proved that
if the wavelet $\psi$ have compact support and two vanishing moments, for any fixed $t\in \mathbb{R}$, when $j\rightarrow\infty$,
\begin{align}
2^{j/2}|B_{H}\star \psi| \star \psi_{j}(t)
\end{align}
is asymptotically normal with mean zero and variance $\|\psi\|^{2}_{L^{2}}\int_{\mathbb{R}}R_{|B_{H}\star \psi|}(\tau)d\tau$.
Theorem \ref{thm1:A_gaussian_fbm}
further provides the covariance function of the limiting process, which is not discussed in \cite{bruna2015intermittent}.
Corollary \ref{corollary_delta_generalA_fbm} shows that the limit of $\textup{log}_{2}\widetilde{S}B_{H}(j_{1},j_{1}+j)$
be irrelevant to $j_{1}$ and have a slope of $-1/2$. This part of result coincides with the result in \cite[Theorem 3.2]{bruna2015intermittent}, while Corollary \ref{corollary_delta_generalA_fbm} further provides a series representation for the limit $\underset{j\rightarrow\infty}{\lim}2^{j/2}\widetilde{S}B_{H}(j_{1},j_{1}+j)$.

Different from the case of fractional Brownian motion, Corollary \ref{corollary_delta_generalA} shows that the limit of $\textup{log}_{2}\widetilde{S}X(j_{1},j_{1}+j)$ indeed depends on $j_{1}$
when the stationary long-range dependent Gaussian process $X$ and the selected wavelet $\psi$ satisfy the conditions mentioned in Theorem \ref{thm1:A_gaussian}.
Basically, this dependence is caused by the fact that the random process $X$ is not self-similar.
The results of Corollary \ref{corollary_delta_generalA} are demonstrated by Fig. \ref{fig:demo_corollary1}, in which
we simulate a stationary Gaussian process with the spectral density (\ref{example_spectral}) with $(\beta_{1},\beta_{2},c_{2}) = (0.5,1,1)$ and
the Daubechies-4 wavelet is used for the scattering transform.
In \cite[Figs. 4-6]{bruna2015intermittent}, the behavior of $\textup{log}_{2}\widetilde{S}X(j_{1},j_{1}+j)$ for other random processes, including the $\alpha$-stable L$\acute{\textup{e}}$vy processes, stochastic self-similar processes and log-infinitely divisible
multifractal random processes, was explored mainly through the simulation approach.

\begin{figure}
\centering
\subfigure[][$\widetilde{S}X(j_{1})=\mathbb{E}|X\star \psi_{j_{1}}|/\mathbb{E}|X\star \psi|$]
{\includegraphics[scale=0.45]{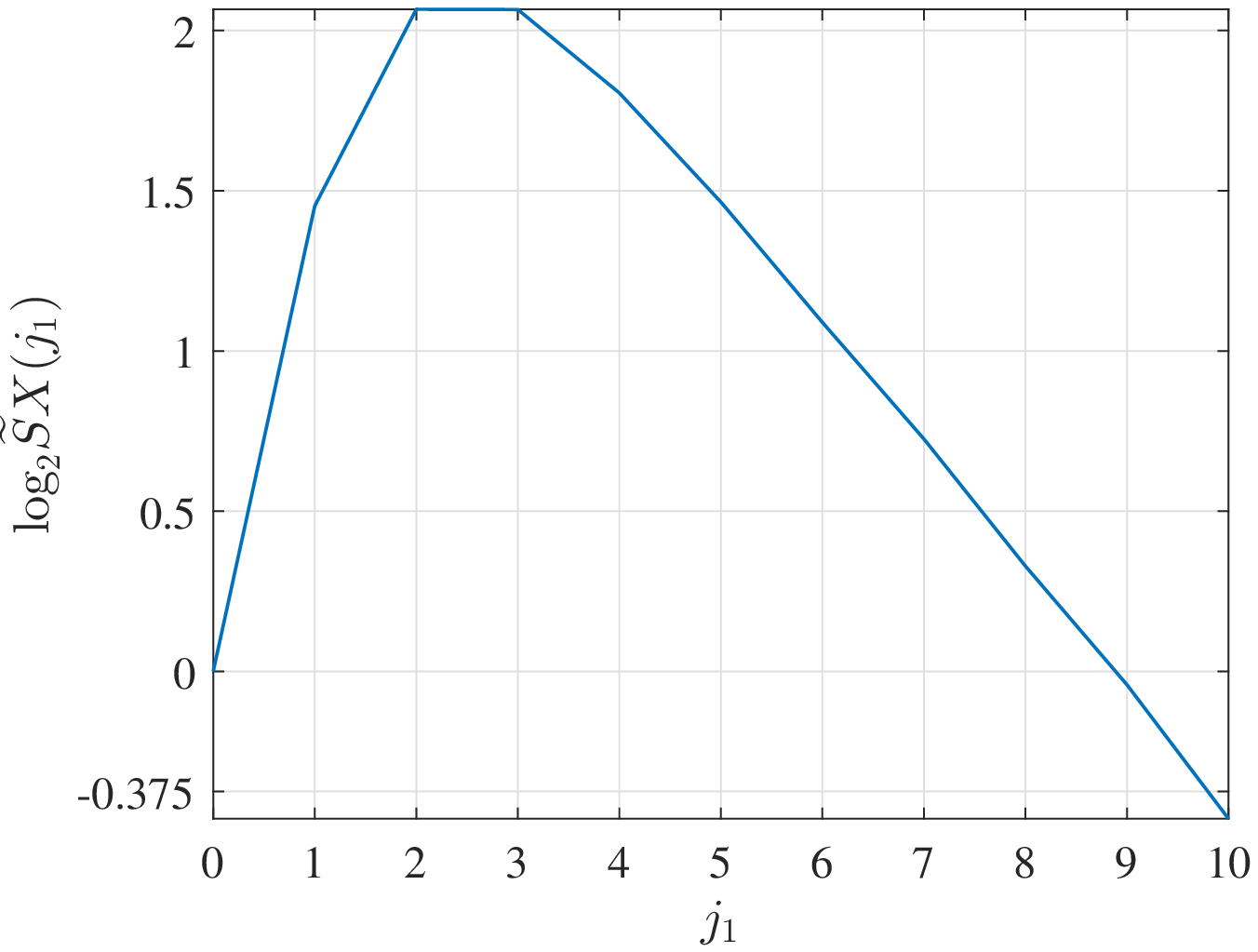}} 
\subfigure[][$\widetilde{S}X(j_{1},j_{1}+j)$]
{\includegraphics[scale=0.70]{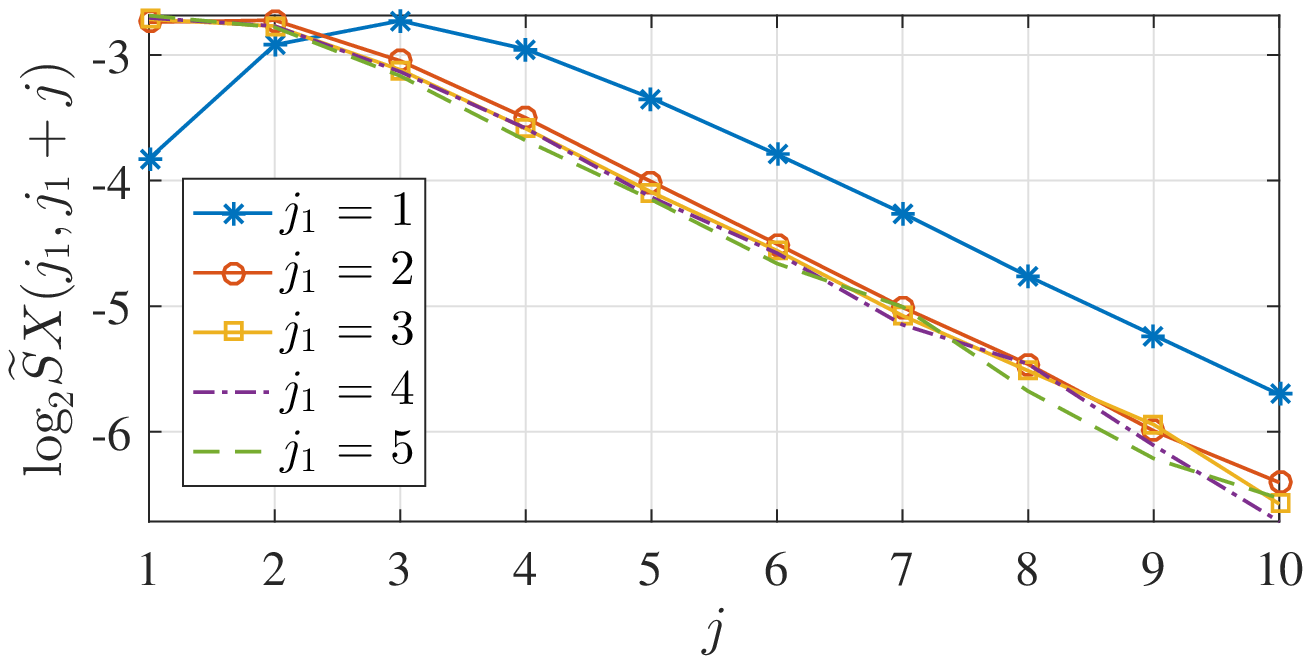}} 
\caption{Demonstration of Corollary \ref{corollary_delta_generalA} when $A_{1}(\cdot)=A_{2}(\cdot)=|\cdot|$. (a) is the $\textup{log}_{2}$ of the normalized first order scattering moment $\widetilde{S}X(j_{1})= \mathbb{E}\left[|X\star \psi_{j_{1}}|\right]/\mathbb{E}\left[|X\star \psi|\right]$. (b) is the $\textup{log}_{2}$ of the normalized second order scattering moment $\widetilde{S}X(j_{1},j_{1}+j)$ defined in (\ref{def:second_scattering_moment}). Here, $X$ is a stationary Gaussian process with the spectral density (\ref{example_spectral}) and $(\beta_{1},\beta_{2},c_{2}) = (0.75,4,1)$ and
the Daubechies-4 wavelet is used for the ST.
The slope of the curve in (a) is approximately equal to $-\beta_{1}/2=-0.375$ when $j_{1}$
is large enough.
For each $j_{1}\in\{1,2,\ldots,6\}$, the slope of the curve in (b) is approximately equal to $-0.5$ when $j$
is large enough.}
\label{fig:demo_corollary1}
\end{figure}

\subsubsection{Non-Gaussian limits arising from the second-order NAST with the stationary Gaussian processes as inputs}

\begin{Theorem}\label{thm:nonGaussian}
Let $\psi$ be a real-valued mother wavelet function satisfying Assumption \ref{Assumption:1:wavelet}, $X$ be a second-order stationary Gaussian process satisfying Assumption \ref{Assumption:2:spectral}, and $A_{1}$ be a function satisfying Assumption \ref{Assumption:3:Hermite} with $\mathbf{r}:=\textup{rank}(A_{1}(\sigma_{j_{1}}\cdot))$.
If the parameters $\alpha$ and $\beta$ in Assumptions 1 and 2 satisfy $2\alpha+\beta<1/\mathbf{r}$ and $C_{X}(0)>0$, then for each fixed $j_{1}\in \mathbb{Z}$, when $j_{2}\rightarrow\infty$,
the macroscopic rescaled random process
\begin{equation}
2^{j_{2}(2\alpha+\beta)\mathbf{r}/2}U^{A_{1}}[j_{1}]X\star\psi_{j_{2}}(2^{j_{2}}t)\,,
\end{equation}
where $t\in \mathbb{R}$,
converges to a random process $\{V_{2}(t)\}_{t\in \mathbb{R}}$
in the finite dimensional distribution sense.
Moreover, $V_{2}$ can be represented by a $\mathbf{r}$-fold Wiener-It$\hat{\textup{o}}$ integrals as follows
\begin{equation}\label{thm:limitprocess}
V_{2}(t) = \nu
\int^{'}_{\mathbb{R}^{\mathbf{r}}}e^{i(\lambda_{1}+\cdots+\lambda_{\mathbf{r}})t}
\frac{\Psi(\lambda_{1}+\cdots+\lambda_{\mathbf{r}})}{|\lambda_{1}\cdots\lambda_{\mathbf{r}}|^{\frac{1-2\alpha-\beta}{2}}}W(d\lambda_{1})\cdots W(d\lambda_{\mathbf{r}}),
\end{equation}
where $\Psi$ is the Fourier transform of $\psi$, $W(d\lambda)$ is a complex-valued Gaussian random measure on $\mathbb{R}$, and
\begin{align*}
\nu = \frac{C_{\sigma_{j_{1}},\mathbf{r}}}{\sqrt{\mathbf{r}!}}2^{j_{1}\alpha \mathbf{r}}\sigma_{j_{1}}^{-\mathbf{r}}C_{\Psi}(0)^{\mathbf{r}}C_{X}(0)^{\frac{\mathbf{r}}{2}}.
\end{align*}
\end{Theorem}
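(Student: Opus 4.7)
\bigskip

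My plan is to reduce the problem to an asymptotic analysis of multiple Wiener–It\^o integrals via the Hermite expansion of $A_{1}$ and then apply a Dobrushin–Major style kernel-convergence argument. Concretely, writing $Y_{j_{1}}(t) = X \star \psi_{j_{1}}(t)/\sigma_{j_{1}}$ and using the Hermite expansion of $A_{1}(\sigma_{j_{1}}\cdot)$ together with the It\^o formula (\ref{itoformula}), I would express
\begin{align*}
U^{A_{1}}[j_{1}]X\star\psi_{j_{2}}(t)
&= \sum_{\ell=\mathbf{r}}^{\infty} \frac{C_{\sigma_{j_{1}},\ell}}{\sqrt{\ell!}}\sigma_{j_{1}}^{-\ell}\,I_{\ell}(t;j_{1},j_{2}), \\
I_{\ell}(t;j_{1},j_{2})
&= \int'_{\mathbb{R}^{\ell}} e^{i(\lambda_{1}+\cdots+\lambda_{\ell})t}\,\Psi\!\bigl(2^{j_{2}}(\lambda_{1}+\cdots+\lambda_{\ell})\bigr)
\prod_{k=1}^{\ell}\Psi(2^{j_{1}}\lambda_{k})\sqrt{f_{X}(\lambda_{k})}\,W(d\lambda_{k}).
\end{align*}
After the time rescaling $t \mapsto 2^{j_{2}}t$, I would change variables $\eta_{k}=2^{j_{2}}\lambda_{k}$ and invoke the self-similarity of the complex Gaussian measure $W(d\eta_{k}/2^{j_{2}}) \stackrel{d}{=} 2^{-j_{2}/2}\widetilde{W}(d\eta_{k})$. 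The local expansions $\Psi(2^{j_{1}-j_{2}}\eta_{k}) \sim C_{\Psi}(0)\,2^{\alpha(j_{1}-j_{2})}|\eta_{k}|^{\alpha}$ (Assumption 1) and $f_{X}(\eta_{k}/2^{j_{2}}) \sim C_{X}(0)\,2^{j_{2}(1-\beta)}|\eta_{k}|^{\beta-1}$ (Assumption 2 with $C_{X}(0)>0$) show that the $\ell$-th summand carries the overall scaling factor $2^{-\ell j_{2}(2\alpha+\beta)/2}$.

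Thus multiplication by $2^{j_{2}(2\alpha+\beta)\mathbf{r}/2}$ preserves the $\ell=\mathbf{r}$ term, while every higher-order term ($\ell>\mathbf{r}$) picks up the factor $2^{-j_{2}(\ell-\mathbf{r})(2\alpha+\beta)/2}\to 0$. For the leading term I would show that the rescaled kernel converges pointwise to
\begin{equation*}
\nu\,\sqrt{\mathbf{r}!}\,\frac{\Psi(\eta_{1}+\cdots+\eta_{\mathbf{r}})}{|\eta_{1}\cdots\eta_{\mathbf{r}}|^{(1-2\alpha-\beta)/2}},
\end{equation*}
where the constant $\nu$ collects the constants from $C_{\sigma_{j_{1}},\mathbf{r}}$, $\sigma_{j_{1}}^{-\mathbf{r}}$, $C_{\Psi}(0)^{\mathbf{r}}$, $C_{X}(0)^{\mathbf{r}/2}$, and $2^{\alpha j_{1}\mathbf{r}}$. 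The hypothesis $2\alpha+\beta<1/\mathbf{r}$ is precisely what guarantees that this limit kernel is square-integrable on $\mathbb{R}^{\mathbf{r}}$: the exponent $(2\alpha+\beta-1)/2$ controls integrability of the $|\eta_{k}|$-singularity against the factor $|\Psi(\sum\eta_{k})|^{2}$, which contributes the compensating $\alpha$-decay both near origin and (via $\Psi\in L^{2}$) at infinity. With pointwise convergence plus a dominating integrable bound derived from $|C_{\Psi}|\leq \|C_{\Psi}\|_{\infty}$, $|\Psi|^{2} \leq \min(|\lambda|^{2\alpha}\|C_{\Psi}\|_{\infty}^{2},\|\Psi\|_{\infty}^{2})$ and the stipulated decay of $C_{X}$ in Assumption 2, I would conclude $L^{2}(\mathbb{R}^{\mathbf{r}})$ convergence of the kernels and therefore $L^{2}(\Omega)$ convergence of the $\mathbf{r}$-fold Wiener–It\^o integrals to $V_{2}(t)$ as displayed in (\ref{thm:limitprocess}).

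To complete the argument for the full sum, I would control the tail by bounding the $L^{2}(\Omega)$ norm of the $\ell$-th rescaled term by $(C_{\sigma_{j_{1}},\ell}^{2}/\ell!)\,\sigma_{j_{1}}^{-2\ell}\,\|\text{rescaled kernel}\|_{L^{2}(\mathbb{R}^{\ell})}^{2}$, show each such $L^{2}$ norm is uniformly bounded in $j_{2}$ (up to the $2^{-j_{2}(\ell-\mathbf{r})(2\alpha+\beta)}$ decay), and use Parseval's identity $\sum_{\ell\geq 0} C_{\sigma_{j_{1}},\ell}^{2} = \mathbb{E}[A_{1}(\sigma_{j_{1}}Y)^{2}]<\infty$ (Assumption 3) to sum the tail. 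Finite-dimensional convergence follows from the same $L^{2}$ argument applied to arbitrary linear combinations $\sum_{i} a_{i} V_{2}(t_{i})$, since the kernel is linear in the exponential factor $e^{i(\sum\eta_{k})t}$. The principal technical hurdle will be the last point: producing a uniform (in $j_{2}$) integrable majorant for the rescaled kernels in every Hermite chaos $\ell$, which requires carefully splitting the $\eta$-domain into a neighborhood of the origin (where the local expansions of $\Psi$ and $f_{X}$ apply) and its complement (where one uses the global bound $\sqrt{f_{X}(\lambda)}\leq \|C_{X}\|_{\infty}^{1/2}|\lambda|^{(\beta-1)/2}$ together with the decay of $C_{X}$ mandated by Assumption 2 and the $L^{2}$-membership of $\Psi$). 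Once this domination is established, Lebesgue's dominated convergence theorem at the level of the kernels and the isometry of the Wiener–It\^o integral deliver the stated convergence of $V_{2}$.
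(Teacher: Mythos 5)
Your treatment of the leading chaos is essentially the paper's: expand $A_{1}(\sigma_{j_{1}}\cdot)$ in Hermite polynomials, rescale via the self-similarity $W(2^{-j_{2}}d\lambda)\overset{d}{=}2^{-j_{2}/2}W(d\lambda)$, and pass to the limit in the $\mathbf{r}$-th kernel by dominated convergence, where the hypothesis $\mathbf{r}(2\alpha+\beta)<1$ guarantees (via Riesz's composition formula) that the limiting kernel $\Psi(\lambda_{1}+\cdots+\lambda_{\mathbf{r}})/|\lambda_{1}\cdots\lambda_{\mathbf{r}}|^{(1-2\alpha-\beta)/2}$ is square-integrable; Cram\'er--Wold and Slutsky then give finite-dimensional convergence. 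That part is sound.

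The genuine gap is in your control of the tail $\ell>\mathbf{r}$. You assert that the $\ell$-th summand "carries the overall scaling factor $2^{-\ell j_{2}(2\alpha+\beta)/2}$," so that after normalization by $2^{j_{2}(2\alpha+\beta)\mathbf{r}/2}$ it decays like $2^{-j_{2}(\ell-\mathbf{r})(2\alpha+\beta)/2}$, and you propose to justify this by a uniform-in-$j_{2}$ integrable majorant for the rescaled kernels "in every Hermite chaos $\ell$." This scaling is wrong as soon as $\ell(2\alpha+\beta)\geq 1$: for such $\ell$ the would-be limit kernel $\Psi(\lambda_{1}+\cdots+\lambda_{\ell})/|\lambda_{1}\cdots\lambda_{\ell}|^{(1-2\alpha-\beta)/2}$ is \emph{not} in $L^{2}(\mathbb{R}^{\ell})$, so no integrable majorant exists and the proposed kernel convergence cannot hold. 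Probabilistically, $H_{\ell}(Y_{j_{1}})$ is short-range dependent for $\ell(2\alpha+\beta)>1$ (its spectral $\ell$-fold convolution $f_{Y_{j_{1}}}^{\star\ell}$ is bounded, cf.\ Lemma~\ref{lemma:1}), so its wavelet coefficient at scale $2^{j_{2}}$ scales like $2^{-j_{2}/2}$, not $2^{-j_{2}\ell(2\alpha+\beta)/2}$, with a logarithmic correction at the critical index $\ell(2\alpha+\beta)=1$. The tail still vanishes under the normalization $2^{j_{2}(2\alpha+\beta)\mathbf{r}/2}$, but only because $(2\alpha+\beta)\mathbf{r}-1<0$; establishing this requires splitting the chaoses into the three regimes $\ell(2\alpha+\beta)<1$, $=1$, $>1$ and bounding $f_{Y_{j_{1}}}^{\star\ell}(2^{-j_{2}}\lambda)$ accordingly (the paper's estimates culminating in the rates $2^{j_{2}(2\alpha+\beta)(\mathbf{r}-\ell)}$ for the sub-critical chaoses and $2^{j_{2}((2\alpha+\beta)\mathbf{r}-1)}$, summed over $\ell$ via $\sum_{\ell}C_{\sigma_{j_{1}},\ell}^{2}<\infty$ together with $\sup_{\lambda}f_{Y_{j_{1}}}^{\star(\ell+1)}(\lambda)\leq\sup_{\lambda}f_{Y_{j_{1}}}^{\star\ell}(\lambda)$, for the super-critical ones). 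Your proposal needs to be repaired by replacing the uniform-domination step with this regime-by-regime second-moment estimate of the remainder.
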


From (\ref{thm:limitprocess}), we know that $V_{2}$ is non-Gaussian if $\mathbf{r}\geq2$.
For the case $\mathbf{r}=1$, $V_{2}$ is Gaussian, but it is long-range dependent because its spectral density is singular at the origin.
A part of results in Theorem \ref{thm:nonGaussian} is validated in Fig. \ref{fig:validation_theorem_nongaussian},
in which $\psi$ is the Cauchy wavelet of order $\alpha=0.05$, $A_{1}(\cdot) = |\cdot|$, $j_{1} = 1$, $j_{2}=10$,
and $X$ is a stationary Gaussian process with the spectral density (\ref{example_spectral}) and $(\beta_{1},\beta_{2},c_{2}) = (0.2,4,1)$.
Under these conditions, the limiting process $V_{2}$ in (\ref{thm:limitprocess})
can be represented as a double Wiener integrals as follows:
\begin{align}\notag
V_{2}(t) =& \nu
\int^{'}_{\mathbb{R}^{\mathbf{2}}}e^{i(\lambda_{1}+\lambda_{2})t}
\frac{\Psi(\lambda_{1}+\lambda_{2})}{|\lambda_{1}\lambda_{2}|^{\frac{1-2\alpha-\beta}{2}}}W(d\lambda_{1})W(d\lambda_{2})
=\nu \sigma^{2} H_{2}\Big(\frac{1}{\sigma}\int_{\mathbb{R}}e^{i\lambda t}
\frac{\Psi(\lambda)}{|\lambda|^{\frac{1-2\alpha-\beta}{2}}}W(d\lambda)\Big),
\end{align}
where $\sigma$ is a constant such that
\begin{equation}\notag
\frac{1}{\sigma}\int_{\mathbb{R}}e^{i\lambda t}
\frac{\Psi(\lambda)}{|\lambda|^{\frac{1-2\alpha-\beta}{2}}}W(d\lambda)
\end{equation}
follows the standard normal distribution.
Because $\mathbb{E}[V_{2}(0)] = 0$ and $H_{2}(x) = x^{2}-1$, the distribution of the standardized $V_{2}(0)$, i.e., $(V_{2}(0)-\mathbb{E}[V_{2}(0)])/\sqrt{\textup{Var}(V_{2}(0))}$,
has the standardized $\chi^{2}$ distribution.
The quantile-quantile plot in Fig. \ref{fig:validation_theorem_nongaussian} shows that
the empirical distribution of the random variable $U^{A_{1}}[j_{1}]X\star \psi_{j_{2}}(0)$ normalized by the standard deviation
is very close to the standardized $\chi^{2}$ distribution.
\begin{figure}
\centering
\subfigure[][histogram]
{\includegraphics[scale=0.44]{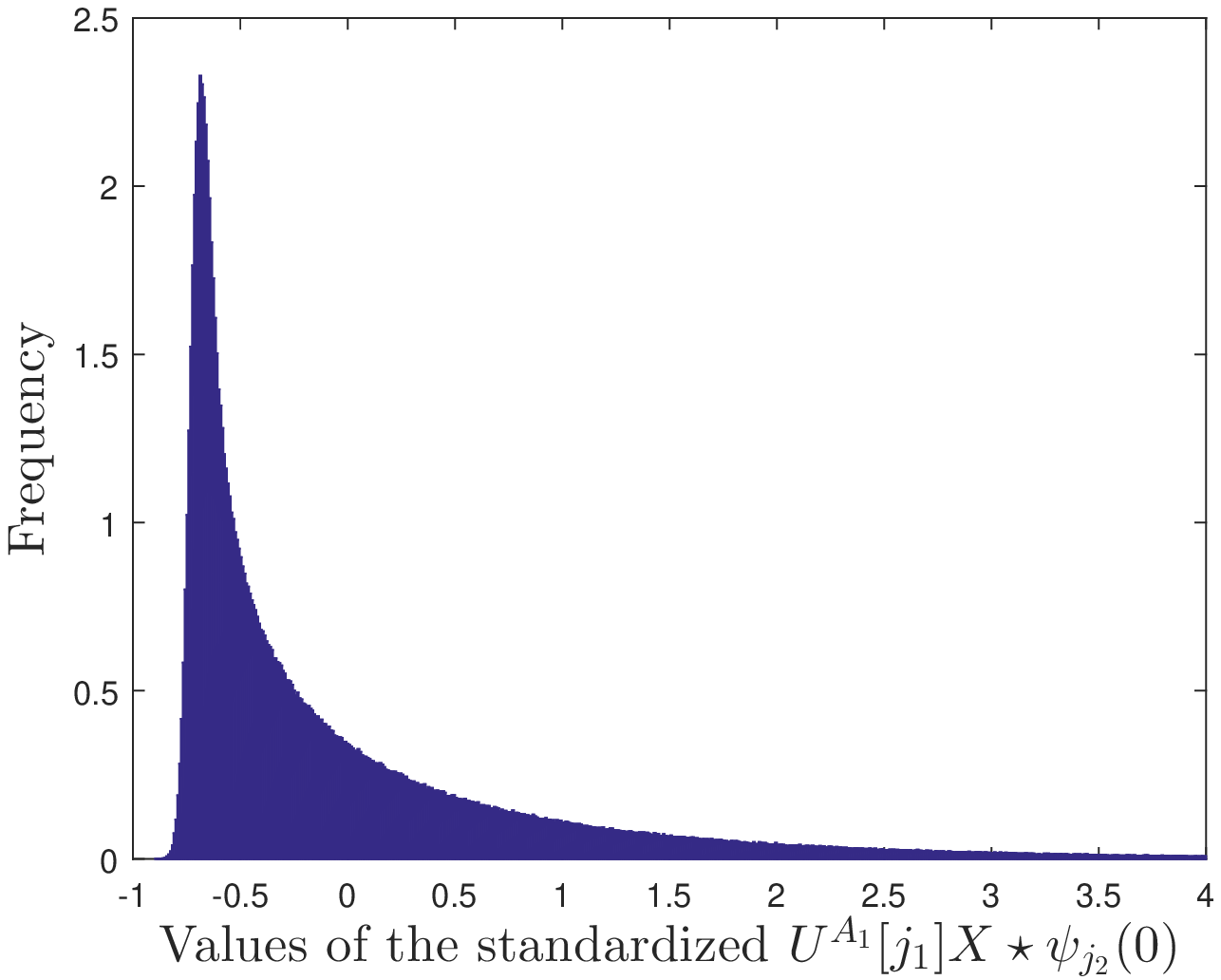}}
\subfigure[][quantile-quantile plot]
{\includegraphics[scale=0.44]{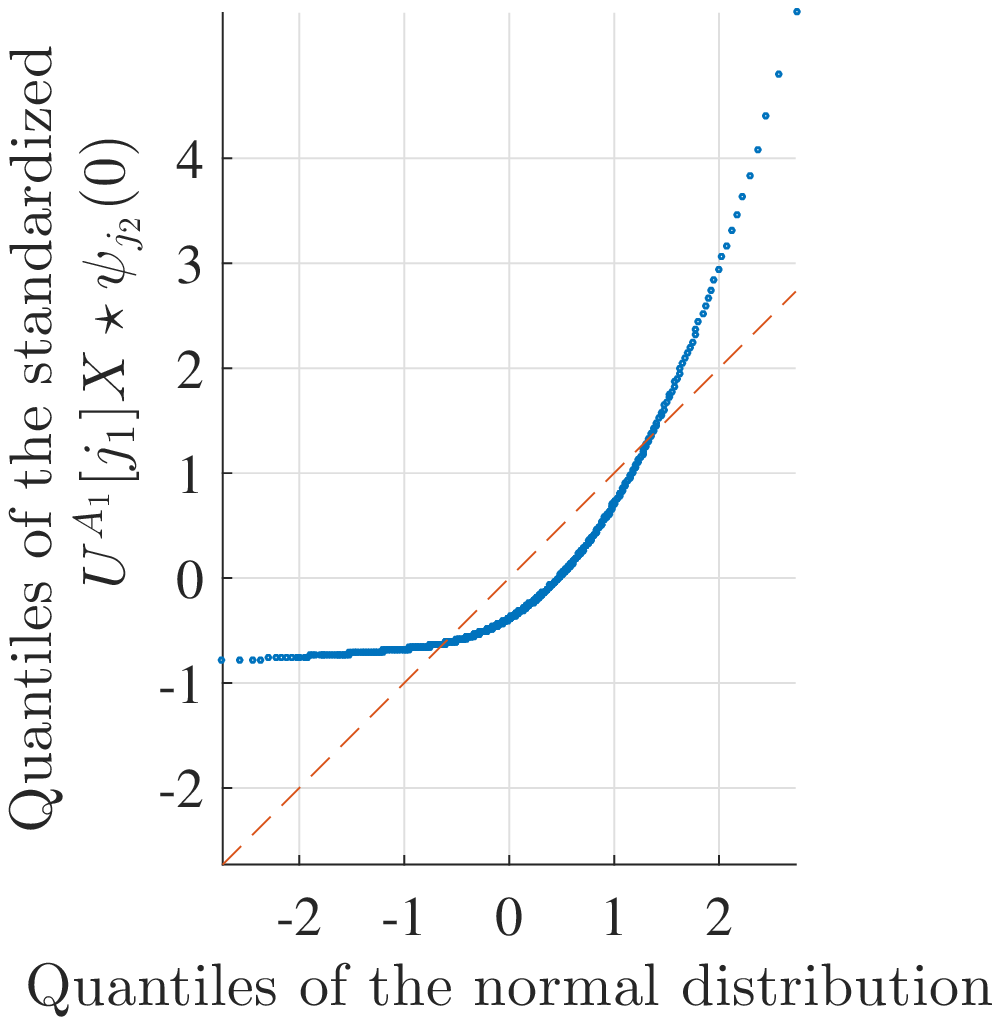}}
\subfigure[][quantile-quantile plot]
{\includegraphics[scale=0.44]{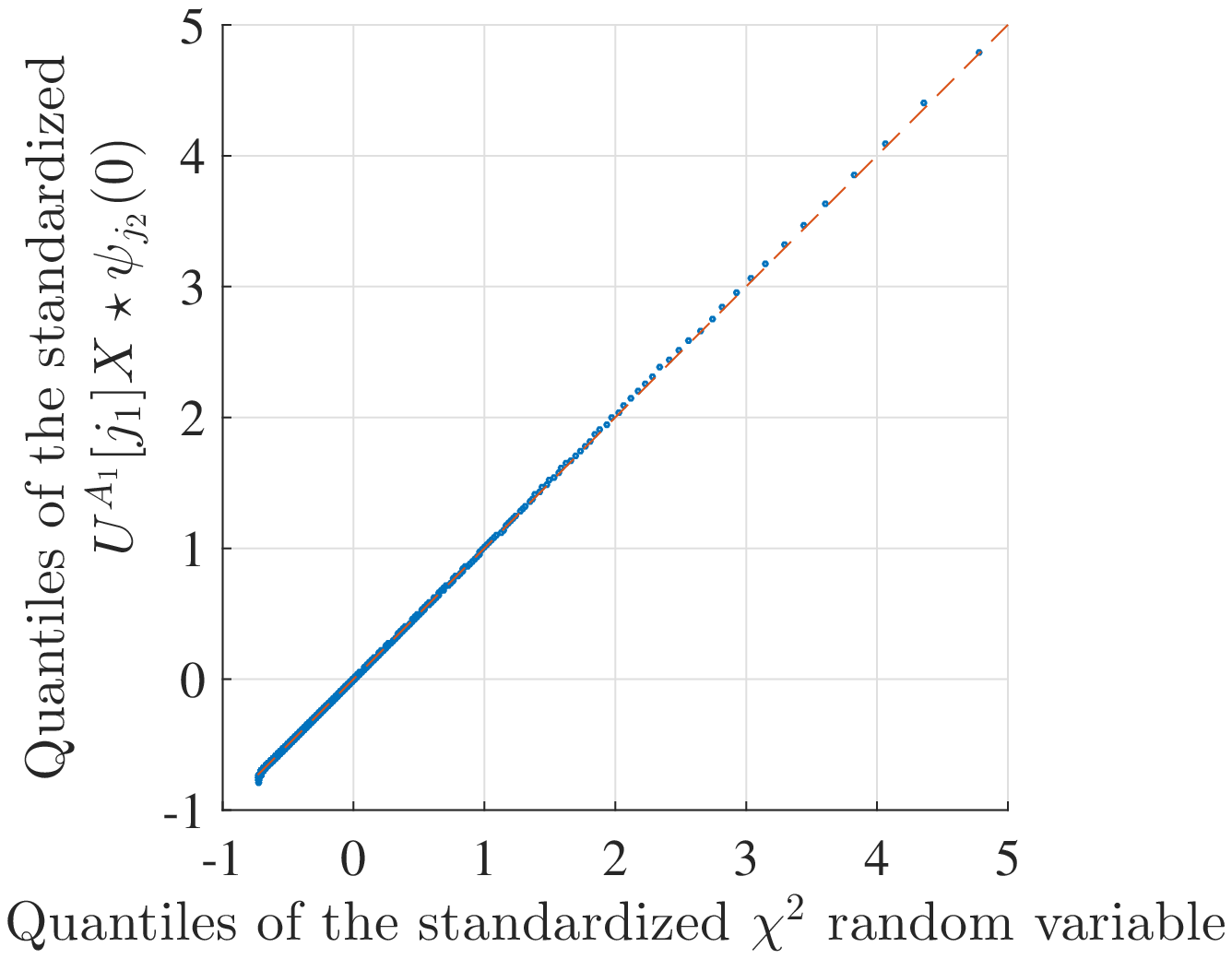}}
\caption{Numerical validation of Theorem \ref{thm:nonGaussian}.
Here, $X$ is a stationary Gaussian process with the spectral density (\ref{example_spectral}) and $(\beta_{1},\beta_{2},c_{2}) = (0.2,4,1)$, and
the real part of the Cauchy wavelet of order $\alpha=0.05$, whose Fourier transform is given by $\Psi(\lambda) = |\lambda|^{\alpha}e^{-|\lambda|}$, is used for the scattering transform.
We also set $A_{1}(\cdot)=|\cdot|$, $j_{1} = 1$ and $j_{2} = 10$.}
\label{fig:validation_theorem_nongaussian}
\end{figure}

If $A_{2}$ satisfies Assumption \ref{Assumption:4.1:A2:differentiable}, we can apply the delta method in \ref{Lemma:Delta_method} to find the finite dimensional distribution and the covariance structure
of the scaling limit of $2^{j_{2}(2\alpha+\beta)\mathbf{r}/2}U^{A_{1},A_{2}}[j_{1},j_{2}]X(2^{j_{2}}t)$ from the result of Theorem \ref{thm:nonGaussian}.
If $A_{2}$ satisfies Assumption \ref{Assumption:4.2:A2:Fchi}, then
\begin{align}\label{bridge_first_second_v2}
2^{\chi j_{2}(2\alpha+\beta)\mathbf{r}/2}U^{A_{1},A_{2}}[j_{1},j_{2}]X(2^{j_{2}}t) = A_{2}\Big(2^{ j_{2}(2\alpha+\beta)\mathbf{r}/2}U^{A_{1}}[j_{1}]X\star \psi_{j_{2}}(2^{j_{2}}t)\Big)\,,
\end{align}
where the distribution comes immediately from the result of Theorem \ref{thm:nonGaussian} and applying the continuous mapping theorem \cite{shao2006mathematical} to the right hand side of (\ref{bridge_first_second_v2}). 

\begin{Corollary}\label{corollary_nonGaussian_v1}
Assume the same conditions as in Theorem \ref{thm:nonGaussian}. If  $A_{2}$ satisfies Assumption \ref{Assumption:4.1:A2:differentiable},
then for each fixed $j_{1}\in \mathbb{Z}$,
\begin{equation}\label{nonGaussian_v1_delta}
2^{j_{2}(2\alpha+\beta)\mathbf{r}/2}\Big\{U^{A_{1},A_{2}}[j_{1},j_{2}]X(2^{j_{2}}t)-A_{2}(0)\Big\}\overset{d}{\Rightarrow} A_{2}^{'}(0)V_{2}(t)
\end{equation}
when $j_{2}\rightarrow \infty$ in the finite dimensional distribution sense.
(\ref{nonGaussian_v1_delta}) implies that the covariance function of the limiting process is the same as that of the process $V_{2}$
up to a multiplicative constant.
If Assumption \ref{Assumption:4.2:A2:Fchi} holds,
then for each fixed $j_{1}\in \mathbb{Z}$,
\begin{equation}
2^{\chi j_{2}(2\alpha+\beta)\mathbf{r}/2}U^{A_{1},A_{2}}[j_{1},j_{2}]X(2^{j_{2}}t)\overset{d}{\Rightarrow} A_{2}(V_{2}(t))
\end{equation}
when $j_{2}\rightarrow\infty$ in the finite dimensional distribution sense.
Moreover, if $A_{1}(\cdot)=A_{2}(\cdot)=|\cdot|$, for all $j_{1}\in \mathbb{Z}$,
\begin{equation}\label{thm2.2.1}
\underset{j\rightarrow\infty}{\lim}2^{j(2\alpha+\beta)}\widetilde{S}X(j_{1},j_{1}+j) = \Theta_{2}(j_{1},\alpha,\beta),
\end{equation}
where
\begin{equation}\label{def:Theta2}
\Theta_{2}(j_{1},\alpha,\beta) = 2^{-j_{1}\beta}\sigma_{j_{1}}^{-2}\frac{1}{2^{3/2}\sqrt{\pi}} C_{\Psi}(0)^{2}C_{X}(0)\mathbb E\Big|\int^{'}_{\mathbb{R}^{2}}
\frac{\Psi(\lambda_{1}+\lambda_{2})}{|\lambda_{1}\lambda_{2}|^{\frac{1-2\alpha-\beta}{2}}}W(d\lambda_{1})W(d\lambda_{2})\Big|
\end{equation}
and $C_{\Psi}$, $C_{X}$, and $Q$ are defined in (\ref{eq:Psi}), (\ref{condition_f}), and (\ref{corollary:Q}), respectively. Consequently,
\begin{equation}\label{thm2.2.2}
\underset{j_{1}\rightarrow\infty}{\textup{lim}}\Theta_{2}(j_{1},\alpha,\beta) = \frac{1}{2^{3/2}\sqrt{\pi}}  C_{\Psi}(0)^{2}C_{X}(0)^2 \|Q\|_{L^{1}}\mathbb E\Big|\int^{'}_{\mathbb{R}^{2}}
\frac{\Psi(\lambda_{1}+\lambda_{2})}{|\lambda_{1}\lambda_{2}|^{\frac{1-2\alpha-\beta}{2}}}W(d\lambda_{1})W(d\lambda_{2})\Big|.
\end{equation}

\end{Corollary}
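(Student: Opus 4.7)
My plan is to treat each of the three conclusions in turn, using Theorem \ref{thm:nonGaussian} as the workhorse input and propagating the limit through the outer activation $A_{2}$ by the appropriate mechanism. For the first conclusion, I would apply the delta method. Writing $U^{A_{1},A_{2}}[j_{1},j_{2}]X(2^{j_{2}}t) = A_{2}(Y_{j_{2}}(t))$ with $Y_{j_{2}}(t) := U^{A_{1}}[j_{1}]X\star \psi_{j_{2}}(2^{j_{2}}t)$, Theorem \ref{thm:nonGaussian} gives $2^{j_{2}(2\alpha+\beta)\mathbf{r}/2}Y_{j_{2}} \overset{d}{\Rightarrow} V_{2}$ in the finite-dimensional distribution sense, and in particular $Y_{j_{2}}(t)\to 0$ in probability. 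Using the differentiability of $A_{2}$ at the origin, write $A_{2}(y)=A_{2}(0)+A_{2}'(0)y+R(y)$ with $R(y)/y \to 0$ as $y\to 0$. Multiplying the identity $A_{2}(Y_{j_{2}}(t))-A_{2}(0)=A_{2}'(0)Y_{j_{2}}(t)+R(Y_{j_{2}}(t))$ by the scaling factor, the linear term converges to $A_{2}'(0)V_{2}(t)$, while the remainder factors as $[R(Y_{j_{2}}(t))/Y_{j_{2}}(t)]\cdot 2^{j_{2}(2\alpha+\beta)\mathbf{r}/2}Y_{j_{2}}(t)$, the product of a sequence tending to zero in probability and a tight sequence; Slutsky's theorem applied jointly at finitely many time points gives (\ref{nonGaussian_v1_delta}).

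For the second conclusion I would use the homogeneity of $A_{2}\in F_{\chi}$ directly. Since the rescaling $a := 2^{j_{2}(2\alpha+\beta)\mathbf{r}/2}$ is positive, the identity $A_{2}(a y)=a^{\chi}A_{2}(y)$ yields
\[
2^{\chi j_{2}(2\alpha+\beta)\mathbf{r}/2}U^{A_{1},A_{2}}[j_{1},j_{2}]X(2^{j_{2}}t) = A_{2}\bigl(2^{j_{2}(2\alpha+\beta)\mathbf{r}/2}Y_{j_{2}}(t)\bigr).
\]
Theorem \ref{thm:nonGaussian} together with the continuous mapping theorem applied to the continuous map $(y_{1},\ldots,y_{k})\mapsto(A_{2}(y_{1}),\ldots,A_{2}(y_{k}))$ delivers the finite-dimensional limit $A_{2}(V_{2}(\cdot))$.

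For the third conclusion I would specialize to $A_{1}=A_{2}=|\cdot|$, for which $\mathbf{r}=2$ and $\chi=1$, so that the second conclusion gives $2^{j_{2}(2\alpha+\beta)}|Y_{j_{2}}(t)| \overset{d}{\Rightarrow} |V_{2}(t)|$. After verifying uniform integrability of the rescaled family (from an $L^{p}$ bound for some $p>1$ on the double Wiener-It\^o integral representing $Y_{j_{2}}$), convergence of first absolute moments follows: $\mathbb{E}|U[j_{1},j_{2}]X|\sim 2^{-j_{2}(2\alpha+\beta)}\mathbb{E}|V_{2}(0)|$. The denominator $\mathbb{E}|X\star \psi_{j_{1}}|=\sigma_{j_{1}}\sqrt{2/\pi}$ is the first absolute moment of a centred Gaussian with variance $\sigma_{j_{1}}^{2}$. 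Inserting $j_{2}=j_{1}+j$, the explicit formula for $\nu$ from Theorem \ref{thm:nonGaussian} with the Hermite coefficient $C_{\sigma_{j_{1}},2}$ for the modulus (a direct Gaussian integral), and the decomposition $V_{2}(0)=\nu I$ where $I$ is the double Wiener-It\^o integral, yields (\ref{thm2.2.1}) with the stated expression (\ref{def:Theta2}). For the large-$j_{1}$ limit (\ref{thm2.2.2}), the change of variables $\xi=2^{j_{1}}\lambda$ in the definition of $\sigma_{j_{1}}^{2}$ gives
\[
2^{j_{1}\beta}\sigma_{j_{1}}^{2} = \int_{\mathbb{R}}|\Psi(\xi)|^{2}\frac{C_{X}(2^{-j_{1}}\xi)}{|\xi|^{1-\beta}}\,d\xi,
\]
and dominated convergence (justified by the continuity and decay of $C_{X}$ and integrability of $Q$ under Assumptions \ref{Assumption:1:wavelet} and \ref{Assumption:2:spectral}) identifies the limit with $C_{X}(0)\|Q\|_{L^{1}}$, which when substituted into (\ref{def:Theta2}) yields (\ref{thm2.2.2}).

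The principal obstacle is the passage from weak convergence to convergence of first absolute moments in the third assertion, since finite-dimensional weak convergence alone does not imply convergence of expectations. This requires a uniform-integrability bound for $\{2^{j_{2}(2\alpha+\beta)}|Y_{j_{2}}(0)|\}_{j_{2}}$, which I would extract from an explicit $L^{p}$ estimate (for some $p>1$) on the truncated Hermite/multiple Wiener-It\^o representation of $Y_{j_{2}}$ used inside the proof of Theorem \ref{thm:nonGaussian}, together with the spectral-density convergence already established there.
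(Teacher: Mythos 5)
Your proposal is correct and follows essentially the same route as the paper: the delta method (the paper's \ref{Lemma:Delta_method}) for Assumption \ref{Assumption:4.1:A2:differentiable}, homogeneity plus the continuous mapping theorem for Assumption \ref{Assumption:4.2:A2:Fchi}, and for the $\widetilde{S}X$ limit the passage from weak convergence to convergence of first absolute moments via uniform integrability, followed by $\mathbb{E}|X\star\psi_{j_1}|=\sqrt{2/\pi}\,\sigma_{j_1}$, the formula for $\nu$, and $\lim_{j_1\to\infty}2^{j_1\beta}\sigma_{j_1}^2=C_X(0)\|Q\|_{L^1}$. The only cosmetic difference is that where you propose establishing an $L^{p}$ bound for some $p>1$, the paper simply reuses the convergence $\mathbb{E}\big[|2^{j_2(2\alpha+\beta)}U[j_1,j_2]X|^2\big]\to\mathbb{E}[|V_2|^2]$ already obtained in the proof of Theorem \ref{thm:nonGaussian} and invokes Lemma \ref{lemma:expect_converge} with $\delta=1$, which is exactly the $p=2$ instance of your bound.
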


\begin{figure}
\centering
\subfigure[][$\widetilde{S}X(j_{1})=E|X\star \psi_{j_{1}}|/|X\star \psi|$]
{\includegraphics[scale=0.49]{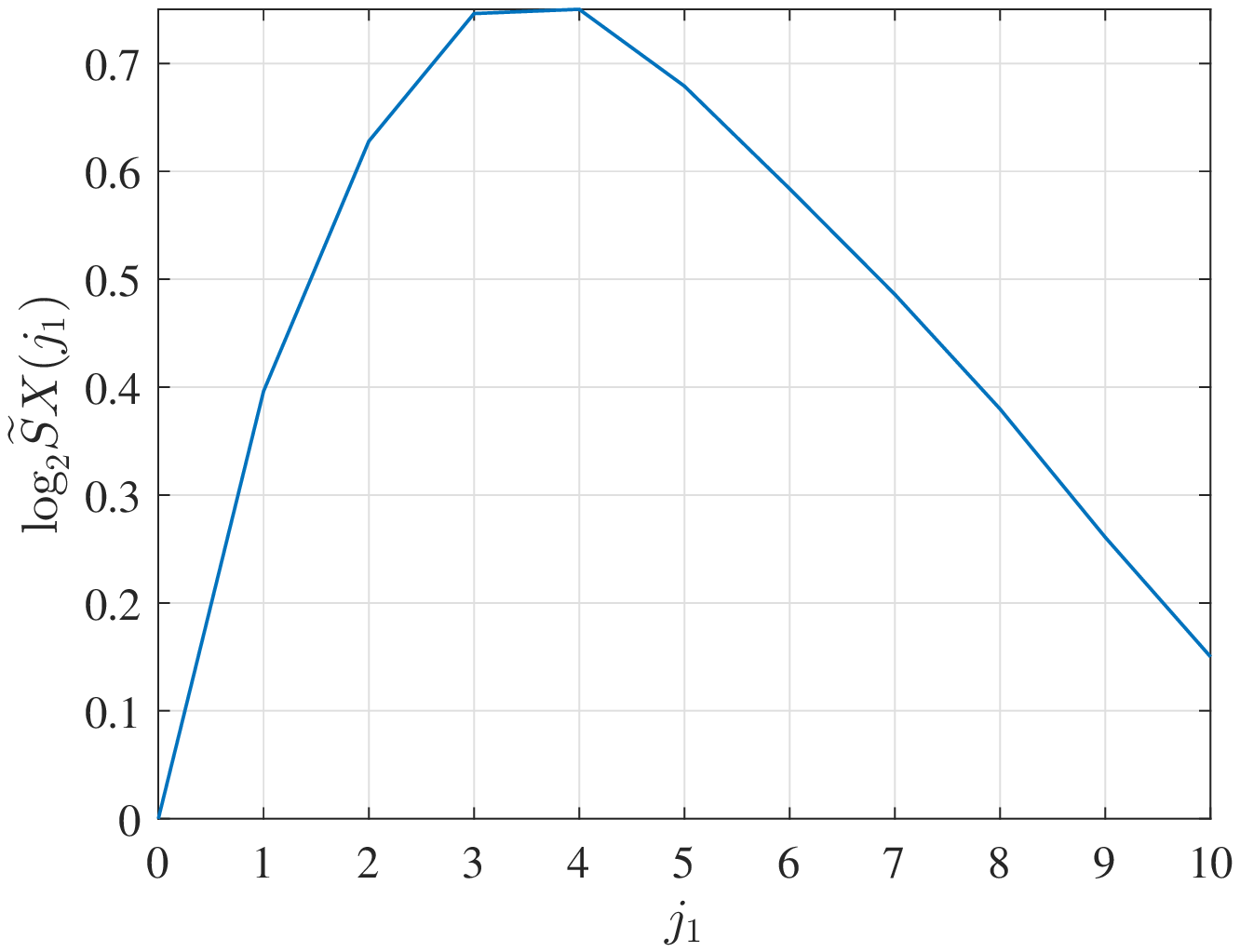}}
\subfigure[][$\widetilde{S}X(j_{1},j_{1}+j)$]
{\includegraphics[scale=0.49]{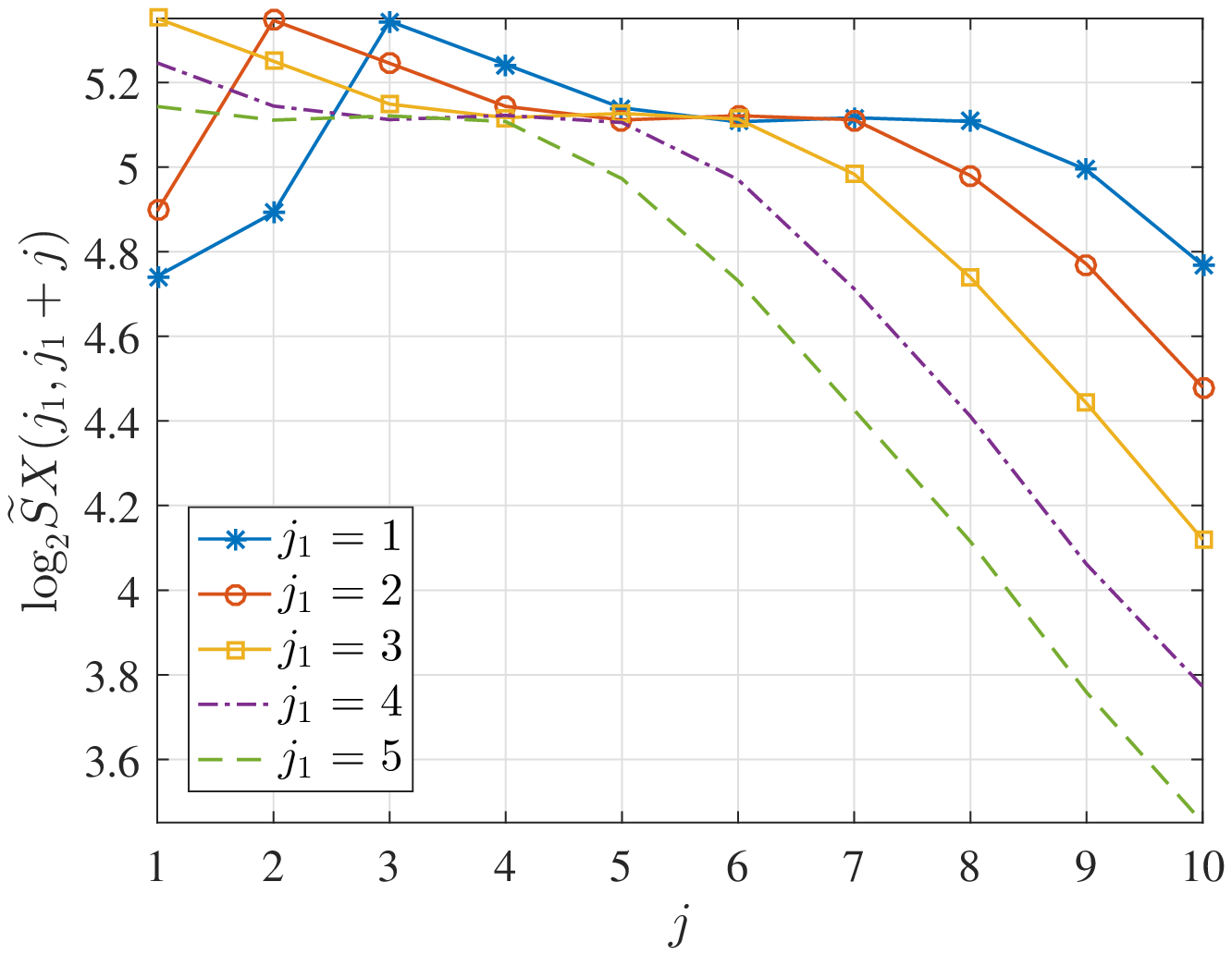}}
\caption{Numerical results for Corollary \ref{corollary_nonGaussian_v1} when $A_{1}(\cdot)=A_{2}(\cdot)=|\cdot|$. (a) is the $\textup{log}_{2}$ of the normalized first order scattering moment $\widetilde{S}X(j_{1})=E\left[|X\star \psi_{j_{1}}|/|X\star \psi|\right]$. (b) is the $\textup{log}_{2}$ of the normalized second order scattering moment $\widetilde{S}X(j_{1},j_{1}+j)$ defined in (\ref{def:second_scattering_moment}). Here, $X$ is a stationary Gaussian process with the spectral density (\ref{example_spectral}) with $(\beta_{1},\beta_{2},c_{2}) = (0.2,4,1)$ and
the real part of the Cauchy wavelet of order $\alpha=0.05$, whose Fourier transform is given by $\Psi(\lambda) = |\lambda|^{\alpha}e^{-|\lambda|}$, is used for the scattering transform.
The slope of the curve in (a) is very small ($\approx -\beta_{1}/2=-0.1$) when $j_{1}$
is large enough.
For each $j_{1}\in\{1,2,\ldots,5\}$, the slope of the curve in (b) is approximately equal to $-(2\alpha+\beta)=-0.3$ when $j$
is large enough.
}
\label{fig:demo_corollary2}
\end{figure}

When $A_{1}(\cdot)=A_{2}(\cdot)=|\cdot|$, under the condition $2\alpha+\beta>1/2$, Corollary \ref{corollary_delta_generalA} shows that the slope of $\textup{log}_{2}\widetilde{S}X(j_{1},j_{1}+j)$
with respect to $j$ is approximately equal to $-1/2$ when $j$ and $j_{1}$ are large enough.
Different from Corollary \ref{corollary_delta_generalA},
Corollary \ref{corollary_nonGaussian_v1} shows that under the condition $2\alpha+\beta<1/2$, the slope of $\textup{log}_{2}\widetilde{S}X(j_{1},j_{1}+j)$
with respect to $j$ is approximately equal to $-(2\alpha+\beta)$ when $j$ is large enough.
(\ref{thm2.2.2}) is demonstrated in Fig. \ref{fig:demo_corollary2}(b).

\subsubsection{Non-Gaussian limits arising from the second-order NAST with the fractional Brownian motions as inputs}

Similarly, 
we have parallel results of fractional Brownian motion for Theorem \ref{thm:nonGaussian} and Corollary \ref{corollary_nonGaussian_v1} as follows.


\begin{Theorem}\label{thm:nonGaussian_fbm}
Let $\psi$ be a real-valued mother wavelet function satisfying Assumption \ref{Assumption:1:wavelet}, $B_{H}$ be a two-sided fractional Brownian motion stationary with Hurst index $H\in(0,1)$, and $A_{1}$ be a function satisfying Assumption \ref{Assumption:3:Hermite} with $\mathbf{r}:=\textup{rank}(A_{1}(\sigma_{j_{1}}\cdot))$.
If the parameters $\alpha$ in Assumption 1 satisfies $0<2(\alpha-H)<1/\mathbf{r}$, then for each fixed $j_{1}\in \mathbb{Z}$, when $j_{2}\rightarrow\infty$,
the macroscopic rescaled random process
\begin{equation}
2^{j_{2}(\alpha-H)\mathbf{r}}U^{A_{1}}[j_{1}]B_{H}\star\psi_{j_{2}}(2^{j_{2}}t)\,,
\end{equation}
where $t\in \mathbb{R}$,
converges to a random process $\{V_{2}(t)\}_{t\in \mathbb{R}}$
in the finite dimensional distribution sense.
Moreover, $V_{2}$ can be represented by a $\mathbf{r}$-fold Wiener-It$\hat{\textup{o}}$ integrals as follows
\begin{equation}\label{thm:limitprocess_fbm}
V_{2}(t) = \nu
\int^{'}_{\mathbb{R}^{\mathbf{r}}}e^{i(\lambda_{1}+\cdots+\lambda_{\mathbf{r}})t}
\frac{\Psi(\lambda_{1}+\cdots+\lambda_{\mathbf{r}})}{|\lambda_{1}\cdots\lambda_{\mathbf{r}}|^{\frac{1-2\alpha+2H}{2}}}W(d\lambda_{1})\cdots W(d\lambda_{\mathbf{r}}),
\end{equation}
where 
\begin{align*}
\nu = \frac{C_{\sigma_{j_{1}},\mathbf{r}}}{\sqrt{\mathbf{r}!}}2^{j_{1}(\alpha-H) \mathbf{r}}\sigma_{0}^{-\mathbf{r}}C_{\Psi}(0)^{\mathbf{r}}(2\pi)^{-\frac{\mathbf{r}}{2}}.
\end{align*}
\end{Theorem}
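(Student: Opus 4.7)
The proof strategy is to exploit the spectral identification noted in Section \ref{sec:fbm}: the finite-dimensional distributions of $B_H \star \psi_{j_1}$ coincide with those of $X \star \psi_{j_1}$, where $X$ is the Gaussian $1/f$ noise with spectral density $f_X(\lambda) = (2\pi)^{-1}|\lambda|^{-(2H+1)}$. In the notation of Assumption \ref{Assumption:2:spectral}, this corresponds to the substitution $\beta = -2H$ and $C_X \equiv (2\pi)^{-1}$, which makes the condition $0 < 2(\alpha - H) < 1/\mathbf{r}$ coincide with the range $0 < 2\alpha + \beta < 1/\mathbf{r}$ required in Theorem \ref{thm:nonGaussian}. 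Accordingly, the plan is to repeat the proof of Theorem \ref{thm:nonGaussian} with this concrete choice of $f_X$, taking advantage of the purely self-similar power-law form, which simplifies several scaling arguments.

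First, starting from the spectral representation \eqref{fbm_psi}, I would normalize $Y_{j_1} := B_H \star \psi_{j_1}/\sigma_{j_1}$ (a stationary standard Gaussian process), expand
\[
A_1(\sigma_{j_1} Y_{j_1}(t)) = \sum_{\ell \geq \mathbf{r}} \frac{C_{\sigma_{j_1},\ell}}{\sqrt{\ell!}} H_\ell(Y_{j_1}(t)),
\]
and use the Ito--Wiener formula \eqref{itoformula} to represent each $H_\ell(Y_{j_1}(t))$ as an $\ell$-fold Wiener--Ito integral against $W(d\lambda_1)\cdots W(d\lambda_\ell)$ with kernel $\sigma_{j_1}^{-\ell}\prod_k \Psi(2^{j_1}\lambda_k)\sqrt{f_X(\lambda_k)}$. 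Convolving with $\psi_{j_2}$ introduces the extra factor $\Psi(2^{j_2}(\lambda_1+\cdots+\lambda_\ell))$ in each kernel, yielding a Hermite-chaos decomposition of $U^{A_1}[j_1]B_H \star \psi_{j_2}(t)$.

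Next, I would perform the rescaling $t \mapsto 2^{j_2} t$ together with the change of variables $\lambda_k = 2^{-j_2}\eta_k$. Using $f_X(2^{-j_2}\eta_k) = (2\pi)^{-1} 2^{j_2(2H+1)}|\eta_k|^{-(2H+1)}$, the Assumption \ref{Assumption:1:wavelet} expansion $\Psi(2^{j_1}2^{-j_2}\eta_k) = C_\Psi(2^{j_1-j_2}\eta_k) \, 2^{j_1\alpha}2^{-j_2\alpha}|\eta_k|^\alpha$, and the Jacobian $2^{-j_2 \ell/2}$ coming from $W$, the $\ell$-th chaos term is seen to scale like $2^{-j_2(\alpha-H)\ell}$. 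Thus $\ell=\mathbf{r}$ is the leading order under the normalization $2^{j_2(\alpha-H)\mathbf{r}}$, while all terms with $\ell > \mathbf{r}$ are of order $o(1)$. For the leading term, pointwise convergence of the integrand is immediate from continuity of $C_\Psi$ at the origin, and the limit integrand is
\[
\nu\, e^{i(\eta_1+\cdots+\eta_{\mathbf{r}})t} \frac{\Psi(\eta_1+\cdots+\eta_{\mathbf{r}})}{|\eta_1\cdots\eta_{\mathbf{r}}|^{(1-2\alpha+2H)/2}},
\]
with the constant $\nu$ arising as the product of $C_{\sigma_{j_1},\mathbf{r}}/\sqrt{\mathbf{r}!}$, $\sigma_0^{-\mathbf{r}}2^{j_1(\alpha-H)\mathbf{r}}$ (replacing $\sigma_{j_1}^{-\mathbf{r}}$ using \eqref{def:sigmaj1_fbm}), $C_\Psi(0)^{\mathbf{r}}$, and $(2\pi)^{-\mathbf{r}/2}$.

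The main technical obstacle is justifying the $L^2(\mathbb{R}^{\mathbf{r}})$ convergence of the rescaled kernel to this limit and controlling the remainder Hermite terms. The range condition $0 < 2(\alpha-H) < 1/\mathbf{r}$ plays a double role here: the upper bound is exactly what makes the singularity $|\eta_1\cdots\eta_{\mathbf{r}}|^{-(1-2\alpha+2H)/2}$ square-integrable near the origin once combined with the wavelet factor $\Psi(\eta_1+\cdots+\eta_{\mathbf{r}})$, so the limiting Wiener--Ito integral defining $V_2$ is well-defined; the lower bound guarantees that the $j_2$-dependent factor $2^{-j_2(\alpha-H)(\ell-\mathbf{r})}$ actually tends to zero for $\ell > \mathbf{r}$, so the higher-order chaos terms do not contribute. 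To pass to the limit rigorously, I would apply the diagram formula together with a dominated convergence argument in the chaos, splitting the integration domain into $|\eta_k| \leq R$ and $|\eta_k| > R$ and using the Schwartz decay of $\Psi$ to kill the tails uniformly in $j_2$. Once the $\mathbf{r}$-th chaos term converges in $L^2$, joint finite-dimensional convergence for several values of $t$ follows from the It\^o isometry, and the contribution of all higher-order chaos terms is controlled by $\sum_{\ell > \mathbf{r}} C_{\sigma_{j_1},\ell}^2/\ell!$, which is summable by Assumption \ref{Assumption:3:Hermite}. This completes the reduction to an argument that closely parallels Theorem \ref{thm:nonGaussian}, with the pure power-law form of $f_X$ making the self-similar scaling exact rather than merely asymptotic.
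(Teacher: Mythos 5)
Your proposal is correct and is essentially the paper's own argument: the paper gives no separate proof of this theorem, obtaining it exactly as you do by identifying $B_{H}\star\psi_{j_{1}}$ with $X\star\psi_{j_{1}}$ for the $1/f$ noise ($\beta=-2H$, $C_{X}\equiv(2\pi)^{-1}$) and rerunning the proof of Theorem \ref{thm:nonGaussian}, with the scaling relation $\sigma_{j_{1}}^{2}=2^{2Hj_{1}}\sigma_{0}^{2}$ converting the constant $\nu$ to the stated form. Your identification of the role of the condition $0<2(\alpha-H)<1/\mathbf{r}$ and of the exact self-similar scaling matches the paper's treatment.
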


\begin{Corollary}\label{corollary_nonGaussian_v1_fbm}
Assume the same conditions as in Theorem \ref{thm:nonGaussian_fbm}. If  $A_{2}$ satisfies Assumption \ref{Assumption:4.1:A2:differentiable},
then for each fixed $j_{1}\in \mathbb{Z}$,
\begin{equation*}
2^{j_{2}(\alpha-H)\mathbf{r}}\Big\{U^{A_{1},A_{2}}[j_{1},j_{2}]B_{H}(2^{j_{2}}t)-A_{2}(0)\Big\}\overset{d}{\Rightarrow} A_{2}^{'}(0)V_{2}(t)
\end{equation*}
when $j_{2}\rightarrow \infty$ in the finite dimensional distribution sense.
If Assumption \ref{Assumption:4.2:A2:Fchi} holds,
then for each fixed $j_{1}\in \mathbb{Z}$,
\begin{equation}
2^{\chi j_{2}(\alpha-H)\mathbf{r}}U^{A_{1},A_{2}}[j_{1},j_{2}]X(2^{j_{2}}t)\overset{d}{\Rightarrow} A_{2}(V_{2}(t))
\end{equation}
when $j_{2}\rightarrow\infty$ in the finite dimensional distribution sense.
Moreover, if $A_{1}(\cdot)=A_{2}(\cdot)=|\cdot|$, for all $j_{1}\in \mathbb{Z}$,
\begin{equation}\label{thm2.2.1}
\underset{j\rightarrow\infty}{\lim}2^{j(2\alpha-2H)}\widetilde{S}X(j_{1},j_{1}+j) = \Theta_{2}(\alpha,-2H),
\end{equation}
where
\begin{equation}\label{def:Theta2}
\Theta_{2}(\alpha,-2H) = \sigma_{0}^{-2}\frac{1}{2^{5/2}\pi^{3/2}} C_{\Psi}(0)^{2}\mathbb E\Big|\int^{'}_{\mathbb{R}^{2}}
\frac{\Psi(\lambda_{1}+\lambda_{2})}{|\lambda_{1}\lambda_{2}|^{\frac{1-2\alpha+2H}{2}}}W(d\lambda_{1})W(d\lambda_{2})\Big|,
\end{equation}
which is independent to $j_{1}$.
\end{Corollary}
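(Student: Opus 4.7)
The plan is to mirror the argument used for Corollary \ref{corollary_nonGaussian_v1}, replacing Theorem \ref{thm:nonGaussian} by its fractional Brownian motion analog Theorem \ref{thm:nonGaussian_fbm}. The bridge between the two settings is the fact, recorded in Section \ref{sec:fbm}, that $B_H \star \psi_{j_1}$ has the same finite-dimensional distribution as $X \star \psi_{j_1}$ for the generalized Gaussian $1/f$ noise $X$ with $\beta = -2H$, so that replacing $2\alpha+\beta$ by $2(\alpha-H)$ gives the correct rescaling exponent and the proof templates for the two non-Gaussian corollaries run in parallel.

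For the delta-method part, Theorem \ref{thm:nonGaussian_fbm} yields the finite-dimensional convergence of $2^{j_{2}(\alpha-H)\mathbf{r}} U^{A_{1}}[j_{1}]B_{H} \star \psi_{j_{2}}(2^{j_{2}}\cdot)$ to $V_{2}(\cdot)$. Writing $A_{2}(y) = A_{2}(0) + A_{2}^{'}(0)y + o(y)$ at the origin and invoking the delta method referenced in \ref{Lemma:Delta_method} at each finite set of time points gives the joint convergence with limit $A_{2}^{'}(0)V_{2}(\cdot)$. For the homogeneous part under Assumption \ref{Assumption:4.2:A2:Fchi}, the identity
\[
2^{\chi j_{2}(\alpha-H)\mathbf{r}}U^{A_{1},A_{2}}[j_{1},j_{2}]B_{H}(2^{j_{2}}t) = A_{2}\bigl(2^{j_{2}(\alpha-H)\mathbf{r}}U^{A_{1}}[j_{1}]B_{H} \star \psi_{j_{2}}(2^{j_{2}}t)\bigr)
\]
follows directly from $A_{2}(cx)=c^{\chi} A_{2}(x)$, and the continuous mapping theorem combined with Theorem \ref{thm:nonGaussian_fbm} produces the limit $A_{2}(V_{2}(t))$.

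For the final statement on the normalized second-order scattering moment, we specialize to $A_{1}(\cdot)=A_{2}(\cdot)=|\cdot|$, so that $\mathbf{r}=2$, $\chi=1$, and the second part delivers
\[
2^{j_{2}(2\alpha-2H)}U^{|\cdot|,|\cdot|}[j_{1},j_{2}]B_{H}(0) \overset{d}{\Rightarrow} |V_{2}(0)|.
\]
To pass from convergence in distribution to convergence of expectations one establishes uniform integrability of the rescaled family; this follows from a uniform $L^{2}$ bound obtained by applying the Wiener-It$\hat{\textup{o}}$ isometry and Assumption \ref{Assumption:1:wavelet} to the spectral representation of $|B_{H}\star\psi_{j_{1}}|\star\psi_{j_{2}}$. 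Dividing by $\mathbb{E}|B_{H} \star \psi_{j_{1}}| = 2^{Hj_{1}}\sigma_{0} \sqrt{2/\pi}$ and expanding $\nu$ from (\ref{thm:limitprocess_fbm}) via $C_{\sigma_{j_{1}},2} = \sigma_{j_{1}} C_{2}$ with the explicit Gaussian Hermite coefficient $C_{2}$ of $|\cdot|$, the factor $2^{j_{1}(2\alpha-H)}$ hidden inside $\nu$ cancels the factor $2^{-j_{1}(2\alpha-2H)} \cdot 2^{-Hj_{1}}$ generated by rescaling $2^{j_{2}(2\alpha-2H)}$ about $j = j_{2}-j_{1}$ and by $\sigma_{j_{1}}$, leaving the $j_{1}$-independent constant $\Theta_{2}(\alpha,-2H)$.

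The main (indeed only genuine) obstacle is the uniform integrability step needed to upgrade the distributional limit to a limit in mean; the remainder is careful bookkeeping of the powers of $2$ and the Hermite constant, and follows verbatim the template of Corollary \ref{corollary_nonGaussian_v1} once the Hurst-index substitution $\beta = -2H$ is made.
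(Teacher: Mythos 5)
Your proposal follows essentially the same route as the paper: transfer Theorem \ref{thm:nonGaussian_fbm} from the stationary case via the identification $\beta=-2H$, $C_{X}(\cdot)=(2\pi)^{-1}$ for $B_{H}\star\psi_{j_{1}}$, handle $A_{2}$ by the delta method (Assumption \ref{Assumption:4.1:A2:differentiable}) or the homogeneity identity plus continuous mapping (Assumption \ref{Assumption:4.2:A2:Fchi}), and upgrade the distributional limit to convergence of means through the uniform second-moment bound already produced in the theorem's chaos-expansion proof (the paper invokes Lemma \ref{lemma:expect_converge} for exactly this), then divide by $\mathbb{E}|B_{H}\star\psi_{j_{1}}|=\sqrt{2/\pi}\,2^{Hj_{1}}\sigma_{0}$ and cancel the powers of $2^{j_{1}}$ inside $\nu$. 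Your bookkeeping of the exponents ($2^{j_{1}(2\alpha-H)}$ in $\nu$ against $2^{-j_{1}(2\alpha-2H)}\cdot 2^{-Hj_{1}}$) is correct, so the argument is sound and matches the paper's implicit proof.
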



\section{Proofs of Results in Section \ref{sec:mainresult}}\label{sec:proof}

%

%
Overall, the key technique for the proof is the {\em chaos expansion} (or Wiener chaos decomposition) \cite{taqqu1979convergence,breuer1983central,clausel2012large}.
The chaos expansion is usually used to deal with the nonlinear functionals. In \cite{clausel2012large}, it was applied to show that the large-scale limit of the wavelet coefficients only depends on the first term of the chaos expansion of the nonlinear functional, and the non-Gaussian scenarios appear when the Hermite rank of the functional is greater than or equal to 2.
%
In addition, the moment method and the Feynman-type diagrams \cite{breuer1983central} are applied to find out the limits.

$If A_{1}(\sigma_{j_{1}}\cdot)$ satisfies Assumption \ref{Assumption:3:Hermite}, then
$A_{1}(\sigma_{j_{1}}\cdot)$ has the following expansion:
\begin{align}\label{hermiteexpansion}
A_{1}(\sigma_{j_{1}}y)=
\sum_{\ell=0}^{\infty}{C_{\sigma_{j_{1}},\ell}
\frac{H_{\ell}(y)}{\sqrt{\ell!}}},
\end{align}
where $C_{\sigma_{j_{1}},\ell}$ is defined in
(\ref{hermitecoeff}).
By definition, the first-order NAST is
\begin{equation}\label{def:generalized_1st_ST}
U^{A_{1}}[j_{1}]X(t) = A_{1}\left(X\star \psi_{j_{1}}(t)\right),\ j_{1}\in \mathbb{Z}.
\end{equation}
and
the second-order NAST of $X$ is
\begin{equation}\label{df:second_order_scattering}
     U^{A_{1},A_{2}}[j_{1},j_{2}]X(t) = A_{2}\left(U^{A_{1}}[j_{1}]X\star \psi_{j_{2}}(t)\right)
     = A_{2}\left(2^{-j_{2}}\int_{\mathbb{R}} U^{A_{1}}[j_{1}]X(s)\psi(\frac{t-s}{2^{j_{2}}})ds\right).
\end{equation}
By combining (\ref{itoformula}), (\ref{hermiteexpansion}), (\ref{def:generalized_1st_ST}), and the property $\int_{\mathbb{R}}\psi(s)ds=0$,
we get
\begin{align}\label{eq:decom_2nd_scat}
U^{A_{1}}[j_{1}]X\star \psi_{j_{2}}(t)
= \int_{\mathbb{R}}\left[\sum_{\ell=\mathbf{r}}^{\infty}
\frac{C_{\sigma_{j_{1}},\ell}}{\sqrt{\ell!}}
H_{\ell}(Y_{j_{1}}(s))\right]2^{-j_{2}}\psi(\frac{t-s}{2^{j_{2}}})ds
= \sum_{\ell=\mathbf{r}}^{\infty}
\frac{C_{\sigma_{j_{1}},\ell}}{\sqrt{\ell!}}Z_{\ell}(t),
\end{align}
where $\mathbf{r}=\textup{rank}\left(A_{1}(\sigma_{j_{1}}\ \cdot)\right)$,
$Y_{j_{1}} = \frac{1}{\sigma_{j_{1}}}X\star \psi_{j_{1}}$,
and
\begin{align}
Z_{\ell}(t) =& 2^{-j_{2}}\int_{\mathbb{R}}H_{\ell}(Y_{j_{1}}(s))\psi(\frac{t-s}{2^{j_{2}}})ds\label{eq:decom_2nd_scatQ}
\\\notag=& 2^{-j_{2}}\sigma_{j_{1}}^{-\ell}
\int_{\mathbb{R}}
\left\{\int^{'}_{\mathbb{R}^{\ell}}e^{i(\lambda_{1}+\ldots+\lambda_{\ell})s}\left[\prod_{k=1}^{\ell}{\Psi(2^{j_{1}}\lambda_{k})\sqrt{f_{X}(\lambda_{k})}}\right]
W(d\lambda_{1})\cdots W(d\lambda_{\ell})\right\}\psi(\frac{t-s}{2^{j_{2}}})ds
\\\label{df:Z}=& \sigma_{j_{1}}^{-\ell}
\int^{'}_{\mathbb{R}^{\ell}}e^{i(\lambda_{1}+\cdots+\lambda_{\ell})t}\left[\prod_{k=1}^{\ell}{\Psi(2^{j_{1}}\lambda_{k})\sqrt{f_{X}(\lambda_{k})}}\right]
\Psi(2^{j_{2}}(\lambda_{1}+\cdots+\lambda_{\ell}))W(d\lambda_{1})\cdots W(d\lambda_{\ell}).\notag
\end{align}
The last equality follows from a stochastic Fubini theorem \cite[Theorem 2.1]{pipiras2010regularization} (see also \cite{lechiheb2018wiener}). By a careful expansion and analysis of \eqref{eq:decom_2nd_scatQ}, we could understand the behavior of the first order NAST with the nonlinear function $A_1$.
Next, by handling $A_2$ by the continuous mapping theorem or the delta method, we are able to find the scaling limits of the second-order NAST.

\subsection{Proof of Theorem 1}

For any $M\in \mathbb{N}$ and any set of real numbers
$\{a_{1},a_{2},\ldots,a_{M}\}$, denote
\begin{align}
\xi_{j_{2}}:=
\overset{M}{\underset{k=1}{\sum}}a_{k}2^{j_{2}/2}U^{A_{1}}[j_{1}]X\star \psi_{j_{2}}(2^{j_{2}}t_{k}),
\end{align}
where $t_{1},\cdots,t_{M}\in \mathbb{R}$
are arbitrary.
By (\ref{eq:decom_2nd_scat}) and recalling $\mathbf{r} = \textup{rank}(A_{1}(\sigma_{j_{1}}\cdot))$,
$\xi_{j_{2}}$ can be expressed as follows
\begin{equation}\label{proofthm:weaksmall}
\xi_{j_{2}} = \xi_{j_{2},\leq N}+\xi_{j_{2},>N},
\end{equation}
where $N\geq \mathbf{r}$,
\begin{equation}\label{df:smallN}
\xi_{j_{2},\leq N} = 2^{j_{2}/2}
\overset{M}{\underset{k=1}{\sum}}a_{k}
\sum_{\ell=\mathbf{r}}^{N}
\frac{C_{\sigma_{j_{1}},\ell}}{\sqrt{\ell!}}Z_{\ell}(2^{j_{2}}t_{k})\,,
\end{equation}
\begin{equation}\label{df:largeN}
\xi_{j_{2},> N} = 2^{j_{2}/2}
\overset{M}{\underset{k=1}{\sum}}a_{k}
\sum_{\ell=N+1}^{\infty}
\frac{C_{\sigma_{j_{1}},\ell}}{\sqrt{\ell!}}Z_{\ell}(2^{j_{2}}t_{k})\,,
\end{equation}
and $Z_\ell$ is defined in \eqref{eq:decom_2nd_scatQ}.
Also, define a truncated version of the limiting process $V_{1}$ as follows
\begin{align}
V_{\leq N}(t)=
\kappa_{N}\int_{\mathbb{R}}
e^{i\lambda t}
\Psi(\lambda)W(d\lambda),
\end{align}
where
\begin{align}\label{df:kappa_truncated}
\kappa_{N}=\left[\overset{N}{\underset{\ell=\mathbf{r}}{\sum}}
\sigma_{j_{1}}^{-2\ell}f_{X\star \psi_{j_{1}}}^{\star\ell}(0)
C^{2}_{\sigma_{j_{1}},\ell}\right]^{\frac{1}{2}}.
\end{align}
To get the proof of Theorem 1, we will prove that
\begin{equation}\label{proof:thm1:3claim}
\begin{array}{ll}
\textup{(a)}\ \underset{N\rightarrow\infty}{\lim}\ \underset{j_{2}\rightarrow\infty}{\lim}\mathbb E[\xi_{j_{2},>N}^2]=0;\\
\textup{(b)}\ \textup{The limit of}\ \kappa_{N}\ \textup{exists when}\ N\rightarrow\infty;\\
\textup{(c)}\ \textup{For any}\ N\geq\mathbf{r}, \xi_{j_{2},\leq N}\ \textup{converges in distribution to}\ \overset{M}{\underset{k=1}{\sum}}a_{k}V_{\leq N}(t_{k})\ \textup{as}\ j_{2}\rightarrow\infty.
\end{array}
\end{equation}
By a modification of Slustky's argument (see Lemma \ref{lemma:slustky} in the appendix), (a)-(c) imply that
$\xi_{j_{2}}$ converges in distribution to $\overset{M}{\underset{k=1}{\sum}}a_{k}V_{1}(t_{k})$ when $j_{2}\rightarrow\infty$.
From Lemma 2 in the appendix, we can see that when $N\rightarrow\infty$, the series (\ref{df:kappa_truncated}) can be expressed in terms of the integral of the covariance function of $A_{1}(X\star \psi_{j_{1}})$.
\\


\noindent{\it Proof of (a) and (b):}
In the following, we prove that $\mathbb E[\xi_{j_{2},>N}^2]$ can be made arbitrarily small as long as $j_{2}$ and $N$ are chosen sufficiently large.
Meanwhile, we will see that
$$
\overset{\infty}{\underset{\ell=N+1}{\sum}}
\sigma_{j_{1}}^{-2\ell}f_{X\star \psi_{j_{1}}}^{\star\ell}(0)
C^{2}_{\sigma_{j_{1}},\ell}\rightarrow 0
$$
when $N\rightarrow \infty$, which implies (b).

By (\ref{expectionhermite}), we know that for any $s, t \in \mathbb{R}$, $\mathbb E[Z_{\ell_{1}}(s)Z_{\ell_{2}}(t)] = 0$ if
$\ell_{1}\neq \ell_{2}$. Hence,
\begin{align}\notag
\mathbb E[\xi_{j_{2},>N}^2]=& \mathbb E\left[\left(2^{j_{2}/2}
\overset{M}{\underset{k=1}{\sum}}a_{k}
\sum_{\ell=N+1}^{\infty}
\frac{C_{\sigma_{j_{1}},\ell}}{\sqrt{\ell!}}Z_{\ell}(2^{j_{2}}t_{k})\right)^{2}
\right]
\\\label{eq:N_large}=& 2^{j_{2}}\overset{M}{\underset{k=1}{\sum}}\overset{M}{\underset{h=1}{\sum}}a_{k}a_{h}
\sum_{\ell=N+1}^{\infty}\frac{C^{2}_{\sigma_{j_{1}},\ell}}{\ell!}\mathbb E\left[Z_{\ell}(2^{j_{2}}t_{k})Z_{\ell}(2^{j_{2}}t_{h})\right].
\end{align}
By the definition of $Z_{\ell}$ in (\ref{eq:decom_2nd_scatQ}) and the orthogonal property of the Gaussian random measure (\ref{ortho}),
we know that
\begin{align}\notag
&2^{j_{2}}\mathbb E\left[Z_{\ell}(2^{j_{2}}t_{k})Z_{\ell}(2^{j_{2}}t_{h})\right]
\\\notag=&
2^{j_{2}}\sigma_{j_{1}}^{-2\ell}\ell!
\int_{\mathbb{R}^{\ell}}e^{i2^{j_{2}}(\lambda_{1}+\ldots+\lambda_{\ell})(t_{k}-t_{h})}
\left[\prod_{k=1}^{\ell}{|\Psi(2^{j_{1}}\lambda_{k})|^{2}f_{X}(\lambda_{k})}\right]
|\Psi(2^{j_{2}}(\lambda_{1}+\ldots+\lambda_{\ell}))|^{2}d\lambda_{1}\cdots d\lambda_{\ell}.
\end{align}
By considering the change of variables $\eta_j=\sum_{i=1}^j\lambda_i$ for $j=1,\ldots,\ell-1$ and $\eta=\sum_{i=1}^\ell \lambda_i$,
the expectation above can be rewritten as
\begin{align}\notag
2^{j_{2}}\mathbb E\left[Z_{\ell}(2^{j_{2}}t_{k})Z_{\ell}(2^{j_{2}}t_{h})\right]
=&
2^{j_{2}}\sigma_{j_{1}}^{-2\ell}\ell!
\int_{\mathbb{R}}e^{i2^{j_{2}}\eta(t_{k}-t_{h})}
f_{X\star\psi_{j_{1}}}^{\star\ell}(\eta)
|\Psi(2^{j_{2}}\eta)|^{2}d\eta
\\\label{eq:ZZ}
=& \sigma_{j_{1}}^{-2\ell}\ell!
\int_{\mathbb{R}}e^{i\eta(t_{k}-t_{h})}
f_{X\star\psi_{j_{1}}}^{\star\ell}(2^{-j_{2}}\eta)
|\Psi(\eta)|^{2}d\eta\,.
\end{align}
By combining (\ref{eq:N_large}) and (\ref{eq:ZZ}),
\begin{align}\notag
\mathbb E[\xi_{j_{2},>N}^2]
=& \overset{M}{\underset{k=1}{\sum}}\overset{M}{\underset{h=1}{\sum}}a_{k}a_{h}
\sum_{\ell=N+1}^{\infty}C^{2}_{\sigma_{j_{1}},\ell}\sigma_{j_{1}}^{-2\ell}
\int_{\mathbb{R}}e^{i\eta(t_{k}-t_{h})}
f_{X\star\psi_{j_{1}}}^{\star\ell}(2^{-j_{2}}\eta)
|\Psi(\eta)|^{2}d\eta
\\\notag=&\sum_{\ell=N+1}^{\infty}C^{2}_{\sigma_{j_{1}},\ell}\sigma_{j_{1}}^{-2\ell}\int_{\mathbb{R}}|\overset{M}{\underset{k=1}{\sum}}a_{k}e^{i\eta t_{k}}|^{2}
f_{X\star\psi_{j_{1}}}^{\star\ell}(2^{-j_{2}}\eta)
|\Psi(\eta)|^{2}d\eta
\\\label{eq:N_large2}=&
\int_{\mathbb{R}}|\overset{M}{\underset{k=1}{\sum}}a_{k}e^{i\eta t_{k}}|^{2}
\left[\sum_{\ell=N+1}^{\infty}C^{2}_{\sigma_{j_{1}},\ell}\sigma_{j_{1}}^{-2\ell}f_{X\star\psi_{j_{1}}}^{\star\ell}(2^{-j_{2}}\eta)\right]
|\Psi(\eta)|^{2}d\eta,
\end{align}
where the last equality follows from applying the monotone convergence theorem to change the order of summation and integration.
We use the nonnegativity of the spectral density function $f_{X\star\psi_{j_{1}}}$ to estimate the integrand as follows:
\begin{align}\notag
f_{X\star\psi_{j_{1}}}^{\star\ell}(\lambda) = \int_{\mathbb{R}}f_{X\star\psi_{j_{1}}}^{\star(\ell-1)}(\lambda-\zeta)f_{X\star\psi_{j_{1}}}(\zeta)d\zeta
\leq \left[\underset{\zeta\in \mathbb{R}}{\textup{sup}}f_{X\star\psi_{j_{1}}}^{\star(\ell-1)}(\zeta)\right] \sigma_{j_{1}}^{2}.
\end{align}
It implies that for all $\ell\geq N+1$,
\begin{align}\notag
\underset{\zeta\in \mathbb{R}}{\textup{sup}}f_{X\star\psi_{j_{1}}}^{\star\ell}(\zeta)
\leq \left[\underset{\zeta\in \mathbb{R}}{\textup{sup}}f_{X\star\psi_{j_{1}}}^{\star N}(\zeta)\right] \sigma_{j_{1}}^{2(\ell-N)}.
\end{align}
Hence, the second summation in (\ref{eq:N_large2}) can be estimated as follows
\begin{align}\notag
\sum_{\ell=N+1}^{\infty}C^{2}_{\sigma_{j_{1}},\ell}\sigma_{j_{1}}^{-2\ell}f_{X\star\psi_{j_{1}}}^{\star\ell}(2^{-j_{2}}\eta)
\leq \sigma_{j_{1}}^{-2N} \left[\underset{\zeta\in \mathbb{R}}{\textup{sup}}f_{X\star\psi_{j_{1}}}^{\star N}(\zeta)\right]\sum_{\ell=N+1}^{\infty}C^{2}_{\sigma_{j_{1}},\ell}<\infty,
\end{align}
where the finiteness of $\underset{\zeta\in \mathbb{R}}{\textup{sup}}f_{X\star\psi_{j_{1}}}^{\star N}(\zeta)$ for $N\geq \mathbf{r}$
comes form Lemma \ref{lemma:1} in the appendix.
The Lebesgue dominated convergence theorem and the continuity of $f_{X\star\psi_{j_{1}}}^{\star\ell}$ for $\ell\geq \mathbf{r}$ (Case 3 of Lemma \ref{lemma:1} in the appendix) imply that
\begin{align}\notag
\underset{j_{2}\rightarrow\infty}{\lim}\mathbb E[\xi_{j_{2},>N}^2] =& \int_{\mathbb{R}}|\overset{M}{\underset{k=1}{\sum}}a_{k}e^{i\eta t_{k}}|^{2}
\left[\sum_{\ell=N+1}^{\infty}C^{2}_{\sigma_{j_{1}},\ell}\sigma_{j_{1}}^{-2\ell}\underset{j_{2}\rightarrow\infty}{\lim}f_{X\star\psi_{j_{1}}}^{\star\ell}(2^{-j_{2}}\eta)\right]
|\Psi(\eta)|^{2}d\eta
\\\label{eq:large_N_part1}=& \int_{\mathbb{R}}|\overset{M}{\underset{k=1}{\sum}}a_{k}e^{i\eta t_{k}}|^{2}
\left[\sum_{\ell=N+1}^{\infty}C^{2}_{\sigma_{j_{1}},\ell}\sigma_{j_{1}}^{-2\ell}f_{X\star\psi_{j_{1}}}^{\star\ell}(0)\right]
|\Psi(\eta)|^{2}d\eta.
\end{align}
Observe that
\begin{align}\notag
f^{\star\ell}_{X\star\psi_{j_{1}}}(0) =& \int_{\mathbb{R}} f^{\star \mathbf{r}}_{X\star\psi_{j_{1}}}(-\lambda) f^{\star(\ell-\mathbf{r})}_{X\star\psi_{j_{1}}}(\lambda) d\lambda
\\\notag\leq&
\left[\underset{\zeta\in \mathbb{R}}{\textup{max}} f^{\star \mathbf{r}}_{X\star\psi_{j_{1}}}(\zeta)\right]
\int_{\mathbb{R}}
f^{\star(\ell-\mathbf{r})}_{X\star\psi_{j_{1}}}(\lambda) d\lambda=
\left[\underset{\zeta\in \mathbb{R}}{\textup{max}} f^{\star \mathbf{r}}_{X\star\psi_{j_{1}}}(\zeta)\right] \sigma_{j_{1}}^{2(\ell-\mathbf{r})},
\end{align}
where the last equality follows by
the convolution theorem
\begin{equation}\label{relation_power_cov_spectral}
\left[R_{X\star\psi_{j_{1}}}(t)\right]^{\ell-\mathbf{r}} = \int_{\mathbb{R}} e^{i\eta t} f^{\star(\ell-\mathbf{r})}_{X\star\psi_{j_{1}}}(\eta)d\eta\,
\end{equation}
when $\ell\geq \mathbf{r}$,
and substituting $t = 0$.
Hence, the summation in (\ref{eq:large_N_part1}) has the following estimate
\begin{align}\label{proof_b_finish_o2}
\sum_{\ell=N+1}^{\infty}C^{2}_{\sigma_{j_{1}},\ell}\sigma_{j_{1}}^{-2\ell}f_{X\star\psi_{j_{1}}}^{\star\ell}(0)
\leq \sigma_{j_{1}}^{-2\mathbf{r}}\left[\underset{\zeta\in \mathbb{R}}{\textup{max}} f^{\star \mathbf{r}}_{X\star\psi_{j_{1}}}(\zeta)\right]\sum_{\ell=N+1}^{\infty}C^{2}_{\sigma_{j_{1}},\ell},
\end{align}
which proves the claim (b) because $\overset{\infty}{\underset{\ell=\mathbf{r}}{\sum}}C^{2}_{\sigma_{j_{1}},\ell}<\infty$.
By (\ref{proof_b_finish_o2}), (\ref{eq:large_N_part1}) can be rewritten as
\begin{align}\label{proofthm:weaksmall1}
\underset{j_{2}\rightarrow\infty}{\lim}
\mathbb E[\xi_{j_{2},>N}^2]
\leq \sigma_{j_{1}}^{-2\mathbf{r}}\left[\underset{\zeta\in \mathbb{R}}{\textup{max}} f^{\star \mathbf{r}}_{X\star\psi_{j_{1}}}(\zeta)\right]\left[
\int_{\mathbb{R}}|\overset{M}{\underset{k=1}{\sum}}a_{k}e^{i\eta t_{k}}|^{2}
|\Psi(\eta)|^{2}d\eta
\right]\sum_{\ell=N+1}^{\infty}C^{2}_{\sigma_{j_{1}},\ell},
\end{align}
which implies the claim (a).\\
%
%

\noindent{\it Proof of (c):}
We apply the method of moments \cite{breuer1983central} (see also \cite[Theorem 6.5]{dasgupta2008asymptotic}) to prove
the claim (c). Because the linear combination of $V_{\leq N}(t)$ has the Gaussian distribution, which can be uniquely determined by its moments, it suffices to prove that
\begin{align}\label{Markovmethod2}
\underset{j_{2}\rightarrow \infty}{\textup{lim}}\ \mathbb{E}\xi_{j_{2},\leq N}^{p}=
\left
\{\begin{array}{lr}0 &\textup{if}\ p\ \textup{is odd},
\\ (p-1)!!
\left\{\mathbb{E}\Big[\big(\overset{M}{\underset{k=1}{\sum}}a_{k}V_{\leq N}(t_{k})\big)^{2}\Big]\right\}^{p/2}
 &\textup{if}\ p\ \textup{is even}.
\end{array}
\right.
\end{align}
From the definition of
$\xi_{j_{2},\leq
N}$ in (\ref{df:smallN}),
\begin{align}\notag
\mathbb{E}(\xi_{j_{2},\leq N})^{p}=&2^{\frac{pj_{2}}{2}}
\overset{M}{\underset{{k_{1},\cdots,k_{p}=1}}{\sum}}\
\overset{N}{\underset{\ell_{1},\cdots,\ell_{p}=\mathbf{r}}{\sum}}
\Big[\overset{p}{\underset{i=1}{\prod}}a_{k_{i}} \frac{C_{\sigma_{j_{1}},\ell_{i}}}{\sqrt{\ell_{i}!}}
\Big]
\\\label{smallN_exp}
\times&
\int_{\mathbb{R}^{p}}
\Big[
\overset{p}{\underset{i=1}{\prod}}
2^{-j_{2}}\psi(\frac{2^{j_{2}}t_{k_{i}}-s_{i}}{2^{j_{2}}})\Big]
\Big[\mathbb{E}\overset{p}{\underset{i=1}{\prod}} H_{\ell_{i}}(Y_{j_{1}}(s_{i}))\Big]
ds_{1}\cdots ds_{p}.
\end{align}

To analyze $\mathbb{E}(\xi_{j_{2},\leq N})^{p}$,
we employ the {\em diagram
method} (see \cite{breuer1983central} or  \cite[p.72]{ivanov2012statistical}). A graph $\Gamma$ with $\ell_{1}+\cdots+\ell_{p}$ vertices is
called a {\em complete diagram
of order ($\ell_{1},\cdots,\ell_{p}$)} if:\\
(a) the set of vertices $V$ of the graph $\Gamma$ is of the form $V=\overset{p}{\underset{i=1}{\cup}}W_{i}$,
where $W_{i}=\{(i,\ell):1\leq \ell\leq \ell_{i}\}$ is the $i$-th level of the graph $\Gamma$;
\\
(b) each vertex is of degree 1, that is, each vertex is just an endpoint of one edge;
\\
(c) if $((i,\ell),\ (i',\ell^{'}))$ is an edge, then $i\neq i'$, that is, the edges of the graph $\Gamma$
connect only different levels.

Let $\mathrm{T}=\mathrm{T}(\ell_{1},\cdots,\ell_{p})$ be a set of
complete diagrams of order ($\ell_{1},\cdots,\ell_{p}$).
Denote by $E(\Gamma)$ the set of edges of the graph $\Gamma\in
\mathrm{T}$.
For the edge $e=((i,\ell),\
(i',\ell^{'}))\in E(\Gamma)$ with $i<i'$, $1\leq
\ell\leq \ell_{i}$ and $1\leq \ell^{'}\leq \ell_{i'}$, we set
$d_{1}(e)=i$ and $d_{2}(e)=i'$.
We call a complete diagram $\Gamma$ {\it
regular} if its levels can be split into pairs in such a manner
that no edge connects the levels belonging to different pairs (see Figure \ref{regular_diagram}).
Denote by $\mathrm{T}^{*}=\mathrm{T}^{*}(\ell_{1},\cdots,\ell_{p})$ the
set of all regular diagrams in $\mathrm{T}$.
If
$\Gamma\in \mathrm{T}^{*}$ is a regular diagram,
then $p$ is even and $\Gamma$ can be divided into $p/2$ sub-diagrams
(denoted by
$\Gamma_{1},\cdots,\Gamma_{p/2}$), which cannot be separated
again; in this case, we naturally define $d_{1}(\Gamma_{r})\equiv
d_{1}(e)$ and $d_{2}(\Gamma_{r})\equiv d_{2}(e)$ for any $e\in
E(\Gamma_{r}),\ r=1,\ldots,p/2$. We denote $\# E(\Gamma)$
(resp. $\# E(\Gamma_{r})$) the number of edges belonged to
the specific diagram $\Gamma$ (resp. the sub-diagram
$\Gamma_{r}$).
\begin{figure}
\centering
\subfigure[][Regular complete diagram
($\# A_{1,2}=3$, $\# A_{1,3} =0$, $\# A_{1,4} =0$, $\# A_{2,3} =0$, $\# A_{2,4} =0$, $\# A_{3,4}=4$)]{\label{regular_diagram}\includegraphics[width=0.425\textwidth]{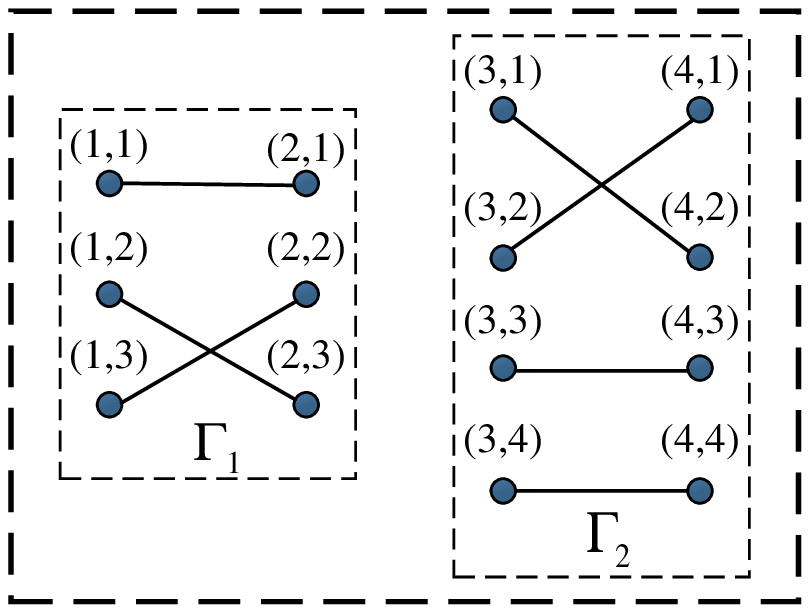}}
\hspace{0.2cm}
\subfigure[][Non-complete diagram
($\# A_{1,2}=2$, $\# A_{1,3} =0$, $\# A_{1,4} =1$, $\# A_{2,3} =1$, $\# A_{2,4} =0$, $\# A_{3,4}=3$)]{\label{nonregular_diagram}\includegraphics[width=0.402\textwidth]{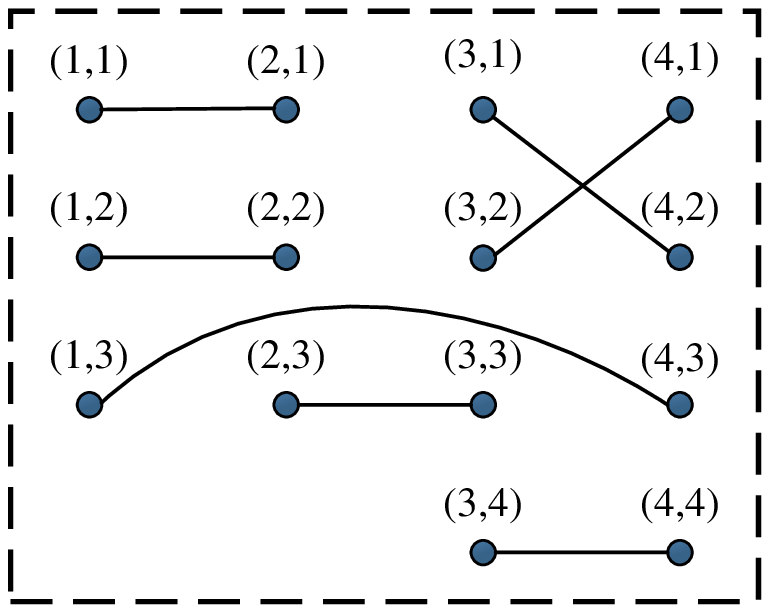}}
\caption{Illustration of complete and non-complete diagrams of order $(3,3,4,4)$. For integers $i<i'$, $\#A_{i,i'}$ means the number of edges connecting the $i$th and $i'$th levels.}
\label{diagram}
\end{figure}
Based on the above notation,
let
\begin{equation*}
D_{p}=\{(K,L):K=(k_{1},\cdots,k_{p}),1\leq k_{i}\leq M,\
L=(\ell_{1},\cdots, \ell_{p}),\ \mathbf{r}\leq \ell_{i}\leq N,\ i=1,\cdots,p\},
\end{equation*}
By \cite{major1981lecture}, (\ref{smallN_exp}) can be rewritten as
\begin{align}\label{proofthm:weaksmall6}
\mathbb{E}(\xi_{j_{2},\leq N})^{p}=
\underset{(K,L)\in D_{p}}{\sum}C(K,L)\underset{\Gamma\in \mathrm{T}^{*}}{\sum}F_{\Gamma}(K,L,j_{2})
+\underset{(K,L)\in D_{p}}{\sum}C(K,L)\underset{\Gamma\in \mathrm{T}\backslash \mathrm{T}^{*}}{\sum}F_{\Gamma}(K,L,j_{2}),
\end{align}
where
\begin{align}\notag
&C(K,L)=\overset{p}{\underset{i=1}{\prod}}a_{k_{i}} \frac{C_{\sigma_{j_{1}},\ell_{i}}}{\sqrt{\ell_{i}!}},
\\\label{proofthm:weakK(J,L)}
&F_{\Gamma}(K,L,j_{2})=2^{\frac{pj_{2}}{2}}
\int_{\mathbb{R}^{p}}
\Big[
\overset{p}{\underset{i=1}{\prod}}
2^{-j_{2}}\psi(\frac{2^{j_{2}}t_{k_{i}}-s_{i}}{2^{j_{2}}})\Big]
\Big[
\underset{e\in E(\Gamma)}{\prod}R_{Y_{j_{1}}}(s_{d_{1}(e)}-s_{d_{2}(e)})
\Big]ds_{1}\cdots ds_{p}.
\end{align}
To prove (\ref{Markovmethod2}), by (\ref{proofthm:weaksmall6}), it suffices to verify the following two claims:
\begin{align*}
\left\{
\begin{array}{l}
\textup{(c1)}\ \underset{j_{2}\rightarrow \infty}{\textup{lim}}
\underset{(K,L)\in D_{p}}{\sum}C(K,L)\underset{\Gamma\in \mathrm{T}^{*}}{\sum}F_{\Gamma}(K,L,j_{2})
=(p-1)!!
\Big\{\mathbb{E}\Big[\big(\overset{M}{\underset{k=1}{\sum}}a_{k}V_{N}(t_{k})\big)^{2}\Big]\Big\}^{p/2},
\\
\textup{(c2)}\ \underset{j_{2}\rightarrow \infty}{\textup{lim}}\underset{(K,L)\in D_{p}}{\sum}C(K,L)\underset{\Gamma\in \mathrm{T}\backslash \mathrm{T}^{*}}{\sum}F_{\Gamma}(K,L,j_{2})
=0.
\end{array}\right.
\end{align*}

\noindent{\it Proof of (c1):} If $\Gamma$ is a regular diagram in $\mathrm{T}^*(\ell_{1},\cdots,\ell_{p})$,
then $\Gamma$ has a unique
decomposition
$\Gamma=(\Gamma_{1},\ldots,\Gamma_{p/2})$, where
$\Gamma_{1},\ldots,\Gamma_{p/2}$
cannot be further decomposed.
Accordingly, $F_{\Gamma}(K,L,j_{2})$
can be rewritten as the following $p/2$ products
\begin{align}\label{proofthm:weaksmall7}
F_{\Gamma}(K,L,j_{2})=
2^{\frac{pj_{2}}{2}}
\overset{p/2}{\underset{r=1}{\prod}}\int_{\mathbb{R}^{2}}
2^{-2j_{2}}
\psi(\frac{2^{j_{2}}t_{k_{d_{1}(\Gamma_{r})}}-s}{2^{j_{2}}}) \psi(\frac{2^{j_{2}}t_{k_{d_{2}(\Gamma_{r})}}-s'}{2^{j_{2}}})
R_{Y_{j_{1}}}^{\# E(\Gamma_{r})}(s-s')
\ ds\ ds'.
\end{align}
By
\begin{align*}
R_{Y_{j_{1}}}^{\# E(\Gamma_{r})}(s-s')
=\int_{\mathbb{R}}e^{i\lambda(s-s')}f_{Y_{j_{1}}}^{\star\# E(\Gamma_{r})}(\lambda)d\lambda
\end{align*}
for $r=1,\cdots,p/2$,
\begin{align*}
&\int_{\mathbb{R}}e^{i\lambda s}
2^{-j_{2}}\psi(\frac{2^{j_{2}}t_{k_{d_{1}(\Gamma_{r})}}-s}{2^{j_{2}}})
ds
=e^{i2^{j_{2}}\lambda t_{k_{d_{1}(\Gamma_{r})}}}
\Psi(2^{j_{2}}\lambda),
\end{align*}
and
\begin{align*}
&\int_{\mathbb{R}}e^{-i\lambda s'}
2^{-j_{2}}\psi(\frac{2^{j_{2}}t_{k_{d_{2}(\Gamma_{r})}}-s'}{2^{j_{2}}})
ds'
=e^{-i2^{j_{2}}\lambda t_{k_{d_{2}(\Gamma_{r})}}}
\overline{\Psi(2^{j_{2}}\lambda)},
\end{align*}
we rewrite (\ref{proofthm:weaksmall7})  as  follows:
\begin{align}\notag
F_{\Gamma}(K,L,j_{2})
=&
2^{\frac{pj_{2}}{2}}
\overset{p/2}{\underset{r=1}{\prod}}
\Big[\int_{\mathbb{R}}
e^{i2^{j_{2}}\lambda(t_{k_{d_{1}(\Gamma_{r})}}-t_{k_{d_{2}(\Gamma_{r})}})}
|\Psi(2^{j_{2}}\lambda)|^{2}
 f_{Y_{j_{1}}}^{\star\# E(\Gamma_{r})}(\lambda)d\lambda\Big]
 \\\notag=&
\overset{p/2}{\underset{r=1}{\prod}}
\Big[\int_{\mathbb{R}}
e^{i\lambda(t_{k_{d_{1}(\Gamma_{r})}}-t_{k_{d_{2}(\Gamma_{r})}})}
|\Psi(\lambda)|^{2}
 f_{Y_{j_{1}}}^{\star\# E(\Gamma_{r})}(2^{-j_{2}}\lambda)d\lambda\Big]\,,
\end{align}
which tends to
\begin{align}
\label{proofthm:weaksmall7s}
\overset{p/2}{\underset{r=1}{\prod}}\Big[f_{Y_{j_{1}}}^{\star\# E(\Gamma_{r})}(0)
\int_{\mathbb{R}}
e^{i\lambda(t_{k_{d_{1}(\Gamma_{r})}}-t_{k_{d_{2}(\Gamma_{r})}})}
|\Psi(\lambda)|^{2}
 d\lambda\Big]
\end{align}
when $j_{2}$ tends to infinity,
where $f_{Y_{j_{1}}}^{\star\# E(\Gamma_{r})}(0)<\infty$ follows from Lemma \ref{lemma:1} in the appendix because $\# E(\Gamma_{r})\geq\mathbf{r}$ and $(2\alpha+\beta)\mathbf{r}>1$.
Meanwhile, because $\Gamma$ is a  regular
diagram in $\mathrm{T}(L)$, $C(K,L)$ can be rewritten as follows:
\begin{equation}\label{proofthm:regularcoeff}
C(K,L)=\overset{p/2}{\underset{r=1}{\prod}}
a_{k_{d_{1}(\Gamma_{r})}} a_{k_{d_{2}(\Gamma_{r})}}
\frac{C^{2}_{\sigma_{j_{1}},\# E(\Gamma_{r})}}{\# E(\Gamma_{r})!}.
\end{equation}
Combining (\ref{proofthm:weaksmall7s}) and (\ref{proofthm:regularcoeff}) yields
\begin{align}\notag
&\underset{j_{2}\rightarrow \infty}{\textup{lim}}
\underset{(K,L)\in D_{p}}{\sum}C(K,L)\underset{\Gamma\in \mathrm{T}^{*}}{\sum}F_{\Gamma}(K,L,j_{2})
\\\label{proofthm:weaksmall8}
=&
\underset{(K,L)\in D_{p}}{\sum}
\ \underset{\Gamma\in \mathrm{T}^{*}}{\sum}
\Big[
\overset{p/2}{\underset{r=1}{\prod}}
a_{k_{d_{1}(\Gamma_{r})}} a_{k_{d_{2}(\Gamma_{r})}}
\int_{\mathbb{R}}
e^{i\lambda(t_{k_{d_{1}(\Gamma_{r})}}-t_{k_{d_{2}(\Gamma_{r})}})}
|\Psi(\lambda)|^{2}
d\lambda
\Big]
\Big[
\overset{p/2}{\underset{r=1}{\prod}}
f_{Y_{j_{1}}}^{\star\# E(\Gamma_{r})}(0)
\frac{C^{2}_{\sigma_{j_{1}},\# E(\Gamma_{r})}}{\# E(\Gamma_{r})!}
\Big].
\end{align}
By changing the order of summation,
$(\ref{proofthm:weaksmall8})$ can be rewritten as follows:
\begin{align}\notag
&\underset{j_{2}\rightarrow \infty}{\textup{lim}}
\underset{(K,L)\in D_{p}}{\sum}C(K,L)\underset{\Gamma\in \mathrm{T}^{*}}{\sum}F_{\Gamma}(K,L,j_{2})
\\\notag
=&
\underset{L}{\sum}
\underset{\Gamma\in \mathrm{T}^{*}}{\sum}
\Big[\underset{K}{\sum}
\overset{p/2}{\underset{r=1}{\prod}}
a_{k_{d_{1}(\Gamma_{r})}} a_{k_{d_{2}(\Gamma_{r})}}
\int_{\mathbb{R}}
e^{i\lambda(t_{k_{d_{1}(\Gamma_{r})}}-t_{k_{d_{2}(\Gamma_{r})}})}
|\Psi(\lambda)|^{2}
d\lambda
\Big]
\Big[
\overset{p/2}{\underset{r=1}{\prod}}
f_{Y_{j_{1}}}^{\star\# E(\Gamma_{i})}(0)
\frac{C^{2}_{\sigma_{j_{1}},\# E(\Gamma_{r})}}{\# E(\Gamma_{r})!}
\Big]
\\\label{proofthm:regualr1}
=&
\Big[
\overset{M}{\underset{k,h=1}{\sum}}a_{k} a_{h}
\int_{\mathbb{R}}
e^{i\lambda(t_{k}-t_{h})}
|\Psi(\lambda)|^{2}
d\lambda
\Big]^{p/2}
\underset{L}{\sum}
\underset{\Gamma\in \mathrm{T}^{*}}{\sum}
\Big[
\overset{p/2}{\underset{r=1}{\prod}}
f_{Y_{j_{1}}}^{\star\# E(\Gamma_{r})}(0)
\frac{C^{2}_{\sigma_{j_{1}},\# E(\Gamma_{r})}}{\# E(\Gamma_{r})!}
\Big].
\end{align}
To handle the summation term
$\underset{L}{\sum}
\underset{\Gamma\in \mathrm{T}^{*}}{\sum}[\cdots]$ in (\ref{proofthm:regualr1}), we note that $
\overset{p/2}{\underset{r=1}{\prod}} f_{Y_{j_{1}}}^{\star\#
E(\Gamma_{r})}(0) \frac{C^{2}_{\#
E(\Gamma_{r})}}{\# E(\Gamma_{r})!} $ only depends on
$\{\#
E(\Gamma_{r}),r=1,\ldots,p/2\}$, but not on the internal structures of
the sub-diagrams $\Gamma_{r},\ r=1,\ldots,p/2$.
Let $s$ be
the number of different integers $\overline{\ell}_{1},\ldots,\overline{\ell}_{s}$ in
$\{\ell_{1},\ldots,\ell_{p}\}$ with $\mathbf{r} \leq \overline{\ell}_{1}<\ldots <\overline{\ell}_{s}\leq
N$, where $1\leq s\leq p/2$. It
implies that the set $\{\ell_{1},\ldots,\ell_{p}\}$ can be split into
$s$ subsets $Q_{1},\ldots,Q_{s}$ and all elements within $Q_{i}$
have the common value $\overline{\ell}_{i},\ i=1,\ldots,s.$ Denote the number of
{\it pairs} within each subset $Q_{i}$ by $q_{i}$,
which satisfies $q_{i}\geq 1$ for $i\in\{1,\ldots,s\}$ and
$q_{1}+\cdots+q_{s}=p/2$.
Based on the notation introduced above,
\begin{align}\notag
&\underset{L}{\sum}
\underset{\Gamma\in \mathrm{T}^{*}}{\sum}
\Big[
\overset{p/2}{\underset{r=1}{\prod}}
f_{Y_{j_{1}}}^{\star\# E(\Gamma_{r})}(0)
\frac{C^{2}_{\sigma_{j_{1}},\# E(\Gamma_{r})}}{\# E(\Gamma_{r})!}
\Big]
\\\notag
=&\underset{1\leq s\leq p/2}{\sum} s! \ {\underset{\mathbf{r}\leq
\overline{\ell}_{1}<\cdots<\overline{\ell}_{s}\leq N}{\sum}} \
\underset{q_{1}+\cdots+q_{s}=p/2}{\sum}
\frac{p!}{2^{p/2}q_{1}!\cdots q_{s}!}(\overline{\ell}_{1}!)^{q_{1}}\cdots
(\overline{\ell}_{s}!)^{q_{s}} \Big[ \overset{s}{\underset{i=1}{\prod}} \big(
f_{Y_{j_{1}}}^{\star\overline{\ell}_{i}}(0) \frac{C^{2}_{\sigma_{j_{1}},\overline{\ell}_{i}}}{\overline{\ell}_{i}!} \big)^{q_{i}}
\Big]
\\\notag
=&(p-1)!! \underset{1\leq s\leq p/2}{\sum} s! \
{\underset{\mathbf{r}\leq \overline{\ell}_{1}<\cdots<\overline{\ell}_{s}\leq N}{\sum}} \
\underset{q_{1}+\cdots+q_{s}=p/2}{\sum} \frac{(p/2)!}{q_{1}!\cdots
q_{s}!} \Big[ \overset{s}{\underset{i=1}{\prod}} \big(
f_{Y_{j_{1}}}^{\star\overline{\ell}_{i}}(0) C^{2}_{\sigma_{j_{1}},\overline{\ell}_{i}} \big)^{q_{i}} \Big]
\\\label{proofthm:weaksmalla}
=&
(p-1)!!
\Big[
\overset{N}{\underset{\ell=\mathbf{r}}{\sum}}
f_{Y_{j_{1}}}^{\star\ell}(0)
C^{2}_{\sigma_{j_{1}},\ell}
\Big]^{p/2},
\end{align}
where $(p-1)!! = 1\times 3\times 5 \times\cdots \times(p-1).$
Inserting (\ref{proofthm:weaksmalla}) into (\ref{proofthm:regualr1})
yields
\begin{align}\notag
&\underset{j_{2}\rightarrow \infty}{\textup{lim}}
\underset{(K,L)\in D_{p}}{\sum}C(K,L)\underset{\Gamma\in \mathrm{T}^{*}}{\sum}F_{\Gamma}(K,L,j_{2})
\\\label{proofthm:regualr2}
=&
(p-1)!!
\Big[
\overset{M}{\underset{k,h=1}{\sum}}a_{k} a_{h}
\int_{\mathbb{R}}
e^{i\lambda(t_{k}-t_{h})}
|\Psi(\lambda)|^{2}
d\lambda
\Big]^{p/2}
\Big[
\overset{N}{\underset{\ell=\mathbf{r}}{\sum}}
f_{Y_{j_{1}}}^{\star\ell}(0)
C^{2}_{\sigma_{j_{1}},\ell}
\Big]^{p/2}.
\end{align}
By the orthogonal property of the standard Gaussian random measure $W$ (see (\ref{sample path represent}))
and the identity $f_{Y_{j_{1}}}(\eta) = \sigma^{-2}_{j_{1}}f_{X\star \psi_{j_{1}}}(\eta)$,
which implies that $f^{\star\ell}_{Y_{j_{1}}} = \sigma^{-2\ell}_{j_{1}}f_{X\star \psi_{j_{1}}}^{\star\ell}$,
the right hand side of (\ref{proofthm:regualr2}) is equal to
\begin{align}
(p-1)!!
\Big[
\mathbb{E}
\Big(
\overset{M}{\underset{k=1}{\sum}}a_{k}
\kappa_{N}\int_{\mathbb{R}}
e^{i\lambda t_{k}}
\Psi(\lambda)W(d\lambda)
\Big)^{2}
\Big]^{p/2},
\end{align}
where $\kappa_{N}$ is defined in (\ref{df:kappa_truncated}). The proof of (c1) is complete.\\

\noindent{\it Proof of (c2)}:
Because the number of elements in the summation
$$
\underset{(K,L)\in D_{p}}{\sum}C(K,L)
\underset{\Gamma\in \mathrm{T}\backslash \mathrm{T}^{*}}{\sum}F_{\Gamma}(K,L,j_{2})
$$
is finite and $C(K,L)$ is independent of $j_{2}$, it  suffices to
show that $\underset{j_{2}\rightarrow \infty}{\textup{lim}}
F_{\Gamma}(K,L,j_{2})=0$ for each $\Gamma\in \mathrm{T}(\ell_{1},\cdots,\ell_{p})\backslash
\mathrm{T}^{*}$ and $(K,L)\in D_{p}$.
To prove this,
we first rearrange the $p$ levels of $\Gamma$ such that $\ell_{1}\leq \ell_{2}\leq \ell_{3}\leq \cdots \leq \ell_{p}.$
For any $i,i'\in \{1,\ldots,p\}$ with $i<i'$, let $A_{i,i'}$ be the set of edges, which connect the $i$th and $i'$th layers;
that is,
\begin{equation}\label{notationAB}
A_{i,i'}=\big\{e\in E(\Gamma)\mid d_{1}(e)=i,\
d_{2}(e)=i'\big\}.
\end{equation}
Also, we set that $A_{i,i'}= \emptyset$ for $i\geq i'$.
Let $\# A_{i,i'}$ be the cardinality of $A_{i,i'}$, i.e., the number of edges in
$A_{i,i'}$ (see Figure \ref{diagram}).
Denote
\[
\ell^{*} := \max\{\ell\in \mathbb{N}\cup\{0\}\mid \ell(2\alpha+\beta)\leq 1\}
\]
and
\begin{equation*}
\tilde{N} := \left\{\begin{array}{ll}
0\ & \textup{if}\ \ell^{*}=0,\\
\overset{\ell^{*}}{\underset{\ell=1}{\sum}}\left[1-\ell(2\alpha+\beta)\right]\#\{(i,i'): \#A_{i,i'}=\ell\}\ & \textup{if}\  \ell^{*}(2\alpha+\beta)< 1,\\
\overset{\ell^{*}-1}{\underset{\ell=1}{\sum}}\left[1-\ell(2\alpha+\beta)\right]\#\{(i,i'): \#A_{i,i'}=\ell\}+\varepsilon\#\{(i,i'): \#A_{i,i'}=\ell^{*}\}\ & \textup{if}\  \ell^{*}(2\alpha+\beta)= 1,
\end{array}\right.
\end{equation*}
where the parameter $\varepsilon>0$ will be specified in (\ref{boundary_case_Gaussian_limit}).
By the Fourier transform relationship between $R_{Y_{j_{1}}}^{\# A_{i,i'}}$
and $f_{Y_{j_{1}}}^{\star\# A_{i,i'}}$ (see also (\ref{relation_power_cov_spectral})) and a direct expansion, we have
\begin{align}\notag
&2^{j_{2}\left(-\frac{p}{2}+\#\{(i,i'): A_{i,i'}\neq \emptyset\}\right)}2^{-\tilde{N}j_{2}}F_{\Gamma}(K,L,j_{2})
\\\notag
=&2^{j_{2}\#\{(i,i'): A_{i,i'}\neq \emptyset\}}2^{-\tilde{N}j_{2}}
\int_{\mathbb{R}^{p}}
\Big[\overset{p}{\underset{i=1}{\prod}}2^{-j_{2}}\psi(\frac{2^{j_{2}}t_{k_{i}}-s_{i}}{2^{j_{2}}})\Big]
\Big[
\underset{(i,i'): A_{i,i'}\neq \emptyset}{\prod}
R_{Y_{j_{1}}}^{\# A_{i,i'}}(s_{i}-s_{i'})
\Big]ds_{1}\cdots ds_{p}
\\\notag
=&
2^{j_{2}\#\{(i,i'): A_{i,i'}\neq \emptyset\}}2^{-\tilde{N}j_{2}}
\int_{\mathbb{R}^{p}}
\Big[\overset{p}{\underset{i=1}{\prod}}2^{-j_{2}}\psi(\frac{2^{j_{2}}t_{k_{i}}-s_{i}}{2^{j_{2}}})\Big]
\Big[
\hspace{-0.2cm}\underset{(i,i'): A_{i,i'}\neq \emptyset}{\prod}\hspace{-0.1cm}
\int_{\mathbb{R}}e^{i\lambda_{i,i'}(s_{i}-s_{i'})}f_{Y_{j_{1}}}^{\star\# A_{i,i'}}(\lambda_{i,i'})d\lambda_{i,i'}
\Big]ds_{1}\cdots ds_{p}
\\\notag
=&2^{-\tilde{N}j_{2}}\int_{\mathbb{R}^{p}}
\Big[\overset{p}{\underset{i=1}{\prod}}\psi(t_{k_{i}}-s_{i})\Big]
\Big[
\underset{(i,i'): A_{i,i'}\neq \emptyset}{\prod}
\int_{\mathbb{R}}e^{i\lambda_{i,i'}(s_{i}-s_{i'})}f_{Y_{j_{1}}}^{\star\# A_{i,i'}}(2^{-j_{2}}\lambda_{i,i'})d\lambda_{i,i'}
\Big]ds_{1}\cdots ds_{p}
\\\notag=&
2^{-\tilde{N}j_{2}}\int_{\mathbb{R}^{p}}
\Big[\overset{p}{\underset{i=1}{\prod}}\psi(t_{k_{i}}-s_{i})\Big]
\int_{\mathbb{R}}\cdots \int_{\mathbb{R}} \overset{p}{\underset{i=1}{\prod}}\textup{exp}\left\{i s_{i}\left[\left(\underset{i': A_{i,i'}\neq \emptyset}{\sum}\lambda_{i,i'}\right)-\left(\underset{i': A_{i',i}\neq \emptyset}{\sum}\lambda_{i',i}\right)\right]\right\}
\\\notag&\times\left[\underset{(i,i'): A_{i,i'}\neq \emptyset}{\prod}f_{Y_{j_{1}}}^{\star\# A_{i,i'}}(2^{-j_{2}}\lambda_{i,i'})\ d\lambda_{i,i'}\right]
ds_{1}\cdots ds_{p}
\\\notag=&\int_{\mathbb{R}}\cdots \int_{\mathbb{R}}
\overset{p}{\underset{i=1}{\prod}}\textup{exp}\left\{i t_{k_{i}} \left[\Big(\underset{i': A_{i,i'}\neq \emptyset}{\sum}\lambda_{i,i'}\Big)-\Big(\underset{i': A_{i',i}\neq \emptyset}{\sum}\lambda_{i',i}\Big)\right]\right\}\Psi\left(\Big(\underset{i': A_{i,i'}\neq \emptyset}{\sum}\lambda_{i,i'}\Big)-\Big(\underset{i': A_{i',i}\neq \emptyset}{\sum}\lambda_{i',i}\Big)\right)
\\\label{nonregular_diagram_est}&\times\left[2^{-\tilde{N}j_{2}}\underset{(i,i'): A_{i,i'}\neq \emptyset}{\prod}f_{Y_{j_{1}}}^{\star\# A_{i,i'}}(2^{-j_{2}}\lambda_{i,i'})\ d\lambda_{i,i'}\right].
\end{align}
In the following, we study the limit of
$$H(j_{2}):=
2^{-\tilde{N}j_{2}} \underset{(i,i'): A_{i,i'}\neq \emptyset}{\prod}f_{Y_{j_{1}}}^{\star\# A_{i,i'}}(2^{-j_{2}}\lambda_{i,i'})=H_{1}(j_{2})H_{2}(j_{2}),
$$
where
$
H_{1}(j_{2})= 2^{-\tilde{N}j_{2}}\underset{(i,i'): \# A_{i,i'}\leq \ell^{*}}{\prod}
f^{\star\# A_{i,i'}}_{Y_{j_{1}}}(2^{-j_{2}}\lambda_{i,i'})
$
and
$
H_{2}(j_{2}) = \underset{(i,i'): \# A_{i,i'}\geq \ell^{*}+1}{\prod}f_{Y_{j_{1}}}^{\star\# A_{i,i'}}(2^{-j_{2}}\lambda_{i,i'})$.
By Lemma \ref{lemma:1} in the appendix, we directly have
\begin{align}\label{ell_large_case}
\underset{j_{2}\rightarrow\infty}{\lim}H_{2}(j_{2}) = \underset{(i,i'): \# A_{i,i'}\geq \ell^{*}+1}{\prod}f_{Y_{j_{1}}}^{\star\# A_{i,i'}}(0).
\end{align}
Next, we show that $H_{1}(j_{2})$ also converges when $j_{2}\rightarrow \infty$.
Note that
\begin{align*}
f_{Y_{j_{1}}}(\lambda) = f_{X}(\lambda) |\Psi(2^{j_{1}}\lambda)|^{2} =
\frac{C_{X}(\lambda)|C_{\Psi}(2^{j_{1}}\lambda)|^{2}}{|\lambda|^{1-(2\alpha+\beta)}}.
\end{align*}
If $\ell(2\alpha+\beta)<1$, by Lemma \ref{lemma:1} in the appendix, there exists a bounded function $B_{\ell}$ such that
\begin{align*}
f^{\star \ell}_{Y_{j_{1}}}(\lambda) =
\frac{B_{\ell}(\lambda)}{|\lambda|^{1-\ell(2\alpha+\beta)}}.
\end{align*}
In this case, for all $\lambda \neq 0$,
\begin{align}\label{ell_interior_case}
\underset{j_{2}\rightarrow\infty}{\overline{\lim}}2^{-j_{2}(1-\ell(2\alpha+\beta))}f^{\star \ell}_{Y_{j_{1}}}(2^{-j_{2}}\lambda) \leq
\frac{\|B_{\ell}\|_{\infty}}{|\lambda|^{1-\ell(2\alpha+\beta)}}.
\end{align}

If $\ell^{*}(2\alpha+\beta)=1$, by Lemma \ref{lemma:1} in the appendix, there exists a bounded function $B_{\ell^{*}}$ such that
\begin{align*}
f^{\star \ell^{*}}_{Y_{j_{1}}}(\lambda) =
B_{\ell^{*}}(\lambda) \ln(1+|\lambda|^{-1}).
\end{align*}
In this case,  for all $\lambda \neq 0$,
\begin{align}\label{ell_boundary_case}
\underset{j_{2}\rightarrow\infty}{\lim}2^{-\varepsilon j_{2}}f^{\star \ell^{*}}_{Y_{j_{1}}}(2^{-j_{2}}\lambda) =
0.
\end{align}
By (\ref{ell_large_case}), (\ref{ell_interior_case}), and (\ref{ell_boundary_case}),
we get that
$H(j_{2})$ converges when $j_{2}\rightarrow \infty$.
Thus, (\ref{nonregular_diagram_est}) implies that there exists a constant $C>0$ and a threshold $J<\infty$ such that
\begin{align}\label{F_upperbound}
F_{\Gamma}(K,L,j_{2})\leq C 2^{j_{2}\left(\frac{p}{2}-\#\{(i,i'): A_{i,i'}\neq \emptyset\}\right)}2^{\tilde{N}j_{2}}
\end{align}
for all $j_{2}\geq J$.
Therefore, if $\frac{p}{2}-\#\{(i,i'): A_{i,i'}\neq \emptyset\}+\tilde{N}<0$,  then we have
\begin{align}\notag
\underset{j_{2}\rightarrow\infty}{\lim} F_{\Gamma}(K,L,j_{2})= 0
\end{align}
as required.

Now, we claim that $\frac{p}{2}-\#\{(i,i'): A_{i,i'}\neq \emptyset\}+\tilde{N}<0$ does hold if $\Gamma$ is a non-regular diagram.
We prove it by considering two cases as follows.
\begin{itemize}
\item Case $\ell^{*}(2\alpha+\beta)<1$:
\begin{align}
&\#\{(i,i'): A_{i,i'}\neq \emptyset\}-\tilde{N}
\\\notag=&\#\{(i,i'): A_{i,i'}\neq \emptyset\}- \overset{\ell^{*}}{\underset{\ell=1}{\sum}}\left[1-\ell\left(2\alpha+\beta\right)\right]\#\{(i,i'): \#A_{i,i'}=\ell\}
\\\notag=&
\#\{(i,i'): \# A_{i,i'}>\ell^{*}\}+(2\alpha+\beta)\overset{\ell^{*}}{\underset{\ell=1}{\sum}}\ell\#\{(i,i'): \#A_{i,i'}=\ell\}
\\\notag\geq &
\#\{(i,i'): \# A_{i,i'}>\ell^{*}\}+\frac{1}{\mathbf{r}}\overset{\ell^{*}}{\underset{\ell=1}{\sum}}\ell\#\{(i,i'): \#A_{i,i'}=\ell\}
\\\notag\geq&\overset{p}{\underset{i,i'= 1}{\sum}}\frac{\# A_{i,i'}}{\ell_{i}}1_{\{(j,j^{'})\mid \# A_{j,j^{'}}>\ell^{*}\}}(i,i')
+\frac{1}{\mathbf{r}}\overset{\ell^{*}}{\underset{\ell=1}{\sum}}\overset{p}{\underset{i,i'= 1}{\sum}}\#A_{i,i'}1_{\{(j,j^{'})\mid \# A_{j,j^{'}}=\ell\}}(i,i')
\\\notag\geq&\overset{p}{\underset{i,i'= 1}{\sum}}\frac{\# A_{i,i'}}{\ell_{i}}1_{\{(j,j^{'})\mid \# A_{j,j^{'}}>\ell^{*}\}}(i,i')
+\overset{\ell^{*}}{\underset{\ell=1}{\sum}}\overset{p}{\underset{i,i'= 1}{\sum}}\frac{\#A_{i,i'}}{\ell_{i}}1_{\{(j,j^{'})\mid \# A_{j,j^{'}}=\ell\}}(i,i')
\\\label{eq:p/2_v1}=&\overset{p}{\underset{i= 1}{\sum}}\frac{1}{\ell_{i}}\left[\overset{p}{\underset{i'= 1}{\sum}}\# A_{i,i'}\right],
\end{align}
where we used the condition $\mathbf{r}(2\alpha+\beta)>1$ to get the first inequality and the fact $\# A_{i,i'}\leq \ell_{i}$ to get the second inequality. The last inequality follows from the fact $\ell_{i}\geq \mathbf{r}$ for all $i=1,\ldots,p$.
Furthermore,
\begin{align}\notag
\overset{p}{\underset{i= 1}{\sum}}\frac{1}{\ell_{i}}\left[\overset{p}{\underset{i'= 1}{\sum}}\# A_{i,i'}\right]
=&\overset{p}{\underset{i= 1}{\sum}}\frac{1}{\ell_{i}}\left[\underset{e\in E(\Gamma)}{\sum}1_{\{\overline{e}\mid d_{1}(\overline{e})=i\}}(e)\right]=\underset{e\in E(\Gamma)}{\sum}\overset{p}{\underset{i= 1}{\sum}}\frac{1}{\ell_{i}}1_{\{\overline{e}\mid d_{1}(\overline{e})=i\}}(e)
\\\label{eq:p/2_v2}=&\underset{e\in E(\Gamma)}{\sum}\frac{1}{\ell_{d_{1}(e)}}\geq
\frac{1}{2}\underset{e\in E(\Gamma)}{\sum}\left[\frac{1}{\ell_{d_{1}(e)}}+\frac{1}{\ell_{d_{2}(e)}}\right]=\frac{p}{2},
\end{align}
where the strictly inequality holds if there exists an edge $e$ such that $\ell_{d_{1}(e)}<\ell_{d_{2}(e)}$.
For each $i \in\{1,2,\ldots,p\}$, there are $\ell_{i}$ edges whose right-hand or left-hand side connect to the $i$th level.
Due to the weighting $1/\ell_{i}$, they contribute one to the last sum
in (\ref{eq:p/2_v2}). Hence, the last equality holds.

If $\Gamma$ is a non-regular diagram, then one of the following situations happens.
The reason will be explained below. We first show that $\#\{(i,i'): A_{i,i'}\neq \emptyset\}-\tilde{N} > \frac{p}{2}$ holds under each situation.
\begin{description}
\item[Situation 1: $\Gamma$ contains an edge $e$ such that $\ell_{d_{1}(e)}<\ell_{d_{2}(e)}$.]
Under this situation, (\ref{eq:p/2_v2}) implies that
\begin{align}\notag
\overset{p}{\underset{i= 1}{\sum}}\frac{1}{\ell_{i}}\left[\overset{p}{\underset{i'= 1}{\sum}}\# A_{i,i'}\right]>\frac{p}{2}.
\end{align}
By substituting the strict inequality above into (\ref{eq:p/2_v1}), we get $\#\{(i,i'): A_{i,i'}\neq \emptyset\}-\tilde{N} > \frac{p}{2}$.

\item[Situation 2:  $0<\# A_{i,i'}\leq \ell^{*}$ for some $i,i'\in\{1,2,\ldots,p\}$.]
Under this situation, the first inequality in (\ref{eq:p/2_v1}) is strict because $(2\alpha+\beta)>1/\mathbf{r}$.
By (\ref{eq:p/2_v2}) again,
we get the same result $\#\{(i,i'): A_{i,i'}\neq \emptyset\}-\tilde{N} > \frac{p}{2}$.

\item[Situation 3: $\ell^{*}<\# A_{i,i'}<\ell_{i}$ for some $i,i'\in\{1,2,\ldots,p\}$.]
If $\# A_{i,i'}>\ell^{*}$, then the first indicator function $1_{\{(j,j^{'})\mid \# A_{j,j^{'}}>\ell^{*}\}}(i,i')$ on the right hand side of the second inequality in (\ref{eq:p/2_v1}) is equal to one.
The condition $\# A_{i,i'}<\ell_{i}$ implies that second inequality in (\ref{eq:p/2_v1}) is strict.
By (\ref{eq:p/2_v2}) again,
we get the same result $\#\{(i,i'): A_{i,i'}\neq \emptyset\}-\tilde{N} > \frac{p}{2}$.
\end{description}

In the following, we prove that if $\Gamma$ is a non-regular diagram, then one of the situations above happens.
If none of Situation 2 and Situation 3 occurs, then we first have $\# A_{i,i'} = \ell_{i}$
for all $i,i'\in\{1,2,\ldots,p\}$ with $\# A_{i,i'}>0$.
Because the $i$th level only contains $\ell_{i}$ vertices, $\# A_{i,i'} = \ell_{i}$ implies that
\begin{equation}\label{counterproof}
d_{2}(e)=i'\ \textup{and}\ \ell_{d_{1}(e)}=\ell_{i}
\end{equation}
for all edge $e$ satisfying  $d_{1}(e)=i$.
If Situation 1 does not occur at the same time, i.e.,
$\ell_{d_{1}(e)}=\ell_{d_{2}(e)}$ for all edge $e$ in $\Gamma$,
then the first equality in (\ref{counterproof}) implies that
\begin{equation}\label{counterproof2}
\ell_{d_{2}(e)}=\ell_{i'}.
\end{equation}
Combining (\ref{counterproof2}) and the second equality in (\ref{counterproof}) yields
$\ell_{i} = \ell_{i'}$.
The equality $\ell_{i} = \ell_{i'}$ and $\# A_{i,i'} = \ell_{i}$ imply that all vertices
in the $i'$th level are occupied by the $\ell_{i}$ edges incident from the $i$th level.
Hence, if none of Situations 1-3 holds, then $\Gamma$ is a regular diagram.

\item $\ell^{*}(2\alpha+\beta)=1$:
In this case, by (\ref{eq:p/2_v1}) and (\ref{eq:p/2_v2}),
\begin{align}\notag
\#\{(i,i'): A_{i,i'}\neq \emptyset\}-\tilde{N}
\geq& \overset{p}{\underset{i= 1}{\sum}}\frac{1}{\ell_{i}}\left[\overset{p}{\underset{i'= 1}{\sum}}\# A_{i,i'}\right]-\varepsilon\#\{(i,i'): \#A_{i,i'}=\ell^{*}\}
\\\label{boundary_case_Gaussian_limit} \geq& \frac{p}{2}-\varepsilon\#\{(i,i'): \#A_{i,i'}=\ell^{*}\}.
\end{align}
If $\Gamma$ is a non-regular diagram, then one of the Situation 1 and Situation 2 above happens.
It implies that one of the inequalities in (\ref{boundary_case_Gaussian_limit}) becomes strict.
Therefore, there exists a $\varepsilon>0$ such that the right hand side of (\ref{boundary_case_Gaussian_limit})
is strictly greater than $\frac{p}{2}$.

\end{itemize}
For both cases, we have $\#\{(i,i'): A_{i,i'}\neq \emptyset\}-\tilde{N} > \frac{p}{2}$, so (\ref{F_upperbound}) implies that
\begin{align*}
\underset{j_{2}\rightarrow\infty}{\lim}\ \underset{j_{2}\rightarrow\infty}{\lim} F_{\Gamma}(K,L,j_{2})=C\underset{j_{2}\rightarrow\infty}{\lim}\  2^{j_{2}\left(\frac{p}{2}-\#\{(i,i'): A_{i,i'}\neq \emptyset\}\right)}2^{\tilde{N}j_{2}}=0.
\end{align*}
The proof of (c2) is complete. We thus finish the proof of Theorem 1.
\qed

\subsection{Proof of Corollary \ref{corollary_delta_generalA}}
\begin{proof}

By Theorem \ref{thm1:A_gaussian} and the continuous mapping theorem, when $j\rightarrow\infty$,
the rescaled stationary process $2^{(j_{1}+j)/2}U[j_{1},j_{1}+j]X(2^{(j_{1}+j)}t)$
converges to $|V_{1}(t)|$ in the finite dimensional distribution sense and the proof was based on the method of moments.
Hence, all conditions in Lemma \ref{lemma:expect_converge} in the appendix are satisfied.
It implies that when $j\rightarrow\infty$,
\begin{equation}\label{proof:coro1}
2^{(j_{1}+j)/2}\mathbb E\left[U[j_{1},j_{1}+j]X\right]\rightarrow \mathbb E\left[\Big|\kappa\int_{\mathbb{R}}
e^{i\lambda t}
\Psi(\lambda)W(d\lambda)\Big|\right] = \sqrt{\frac{2}{\pi}}\kappa\|\Psi\|_{L^{2}},
\end{equation}
On the other hand, because $X\star\psi_{j_{1}}$ is a stationary Gaussian process with mean zero and variance
$\sigma_{j_{1}}^{2}$,
\begin{equation}\label{proof:coro2}
\mathbb E\left[U[j_{1}]X\right] = \mathbb E|X\star\psi_{j_{1}}| = \sqrt{\frac{2}{\pi}}\sigma_{j_{1}}.
\end{equation}
Combining (\ref{proof:coro1}) and (\ref{proof:coro2}) yields
\begin{align*}
2^{(j_{1}+j)/2}\widetilde{S}X(j_{1},j_{1}+j)
= \frac{2^{(j_{1}+j)/2}\mathbb E\left[U[j_{1},j_{1}+j]X\right]}{\mathbb E\left[U[j_{1}]X\right]}
\rightarrow \kappa\sigma_{j_{1}}^{-1}\|\Psi\|_{L^{2}},
\end{align*}
when $j\rightarrow\infty$. Therefore, (\ref{thm1.2.1}) is proved.

To prove (\ref{thm1.2.2}), we first observe that
\begin{align}\notag
\underset{j_{1}\rightarrow\infty}{\lim} \Theta_{1}(j_{1},\alpha,\beta)^{2} =\,& \|\Psi\|_{L^{2}}^{2}\underset{j_{1}\rightarrow\infty}{\lim}2^{-j_{1}} \kappa^{2}
\sigma_{j_{1}}^{-2}
\\\notag=\,&\|\Psi\|_{L^{2}}^{2}\underset{j_{1}\rightarrow\infty}{\lim} 2^{-j_{1}}\left[\overset{\infty}{\underset{\ell=2}{\sum}}
\sigma_{j_{1}}^{2(1-\ell)}f_{X\star \psi_{j_{1}}}^{\star\ell}(0)
C^{2}_{\ell}\right]\sigma_{j_{1}}^{-2}
\\\notag
=\,&
\|\Psi\|_{L^{2}}^{2}\underset{j_{1}\rightarrow\infty}{\lim} 2^{-j_{1}}\overset{\infty}{\underset{\ell=2}{\sum}}
\sigma_{j_{1}}^{-2\ell}f_{X\star \psi_{j_{1}}}^{\star\ell}(0)
C^{2}_{\ell}
\\\label{squareTheta}=\,&
\|\Psi\|_{L^{2}}^{2}\underset{j_{1}\rightarrow\infty}{\lim} \overset{\infty}{\underset{\ell=2}{\sum}}
\left[2^{j_{1}\beta }\sigma_{j_{1}}^{2}\right]^{-\ell}\left[2^{j_{1}(\ell\beta-1)}f_{X\star \psi_{j_{1}}}^{\star\ell}(0)\right]
C^{2}_{\ell}.
\end{align}
By (\ref{def:sigmaj1}), the equality
\begin{equation*}
f_{X\star \psi_{j_{1}}}(\lambda) = \frac{C_{X}(\lambda)}{|\lambda|^{1-\beta}} |\Psi(2^{j_{1}}\lambda)|^{2}, 
\end{equation*}
and change of variables, we obtain
\begin{equation}\label{sigma_0}
\underset{j_{1}\rightarrow\infty}{\lim}2^{j_{1}\beta}\sigma_{j_{1}}^{2} = \underset{j_{1}\rightarrow\infty}{\lim}2^{j_{1}\beta }\int_{\mathbb{R}}f_{X\star \psi_{j_{1}}}(\lambda)d\lambda = C_{X}(0)\|Q\|_{L^{1}},
\end{equation}
where $Q(\lambda) = |\Psi(\lambda)|^{2}/|\lambda|^{1-\beta}$.
On the other hand, by change of variables, for all integer $\ell\geq2$, by a direct expansion we obtain
\begin{equation}\label{f_0}
\underset{j_{1}\rightarrow\infty}{\lim}2^{j_{1}(\ell\beta-1)}f_{X\star \psi_{j_{1}}}^{\star\ell}(0) = C_{X}(0)^{\ell}Q^{\star \ell}(0).
\end{equation}
For instance, when $\ell=3$, we have
\begin{align*}
&\underset{j_{1}\rightarrow\infty}{\lim}2^{j_{1}(3\beta-1)}f_{X\star \psi_{j_{1}}}^{\star3}(0)\nonumber\\
=& \underset{j_{1}\rightarrow\infty}{\lim}2^{j_{1}(3\beta-1)}  \int_{\mathbb{R}}\int_{\mathbb{R}}
\frac{C_{X}(\lambda_{1})}{|\lambda_{1}|^{1-\beta}} |\Psi(2^{j_{1}}\lambda_{1})|^{2}
\frac{C_{X}(\lambda_{1}-\lambda_{2})}{|\lambda_{1}-\lambda_{2}|^{1-\beta}} \nonumber\\
&\qquad\qquad\qquad\times |\Psi(2^{j_{1}}(\lambda_{1}-\lambda_{2}))|^{2}
\frac{C_{X}(\lambda_{2})}{|\lambda_{2}|^{1-\beta}} |\Psi(2^{j_{1}}\lambda_{2})|^{2}d\lambda_{1}d\lambda_{2}
\\=& \underset{j_{1}\rightarrow\infty}{\lim} \int_{\mathbb{R}}\int_{\mathbb{R}}
\frac{C_{X}(2^{-j_{1}}\lambda_{1})}{|\lambda_{1}|^{1-\beta}} |\Psi(\lambda_{1})|^{2}
\frac{C_{X}(2^{-j_{1}}\lambda_{1}-2^{-j_{1}}\lambda_{2})}{|\lambda_{1}-\lambda_{2}|^{1-\beta}} \nonumber\\
&\qquad\qquad\qquad\times|\Psi(\lambda_{1}-\lambda_{2})|^{2}
\frac{C_{X}(2^{-j_{1}}\lambda_{2})}{|\lambda_{2}|^{1-\beta}} |\Psi(\lambda_{2})|^{2}d\lambda_{1}d\lambda_{2}
\\=& C_{X}(0)^{3}Q^{\star 3}(0)\,.
\end{align*}

Substituting (\ref{sigma_0}) and (\ref{f_0}) into (\ref{squareTheta}) yields
\begin{align}\notag
\underset{j_{1}\rightarrow\infty}{\lim} \Theta_{1}(j_{1},\alpha,\beta)^{2} =&
\|\Psi\|_{L^{2}}^{2}\overset{\infty}{\underset{\ell=2}{\sum}}
\left[C_{X}(0)\|Q\|_{L^{1}}\right]^{-\ell}
\left[C_{X}(0)^{\ell}Q^{\star \ell}(0)\right]
C^{2}_{\ell}
\\\label{Q_tilde}=&\|\Psi\|_{L^{2}}^{2}\overset{\infty}{\underset{\ell=2}{\sum}}
\|Q\|_{L^{1}}^{-\ell}
Q^{\star \ell}(0)
C^{2}_{\ell}.
\end{align}
By defining $\tilde{Q}(\lambda) = Q(\lambda)/\|Q\|_{L^{1}}$ for $\lambda\in \mathbb{R}$, we have
\begin{align*}
\|Q\|_{L^{1}}^{-\ell}Q^{\star \ell}(0)=\tilde{Q}^{\star \ell}(0) =  \int_{\mathbb{R}} \tilde{Q}^{\star2}(-\lambda)\tilde{Q}^{\star (\ell-2)}(\lambda)d\lambda
\leq \underset{\lambda\in \mathbb{R}}{\textup{sup}}\ \tilde{Q}^{\star 2}(\lambda)
\end{align*}
for all $\ell> 2$. The behavior of $\tilde{Q}(\lambda)$ near the origin looks like $|\lambda|^{2\alpha+\beta-1}$.
Hence, the condition $2\alpha+\beta>1/\mathbf{r}=1/2$ and Lemma \ref{lemma:1} in the appendix imply that $\tilde{Q}^{\star 2}$ is bounded.
The observation above and the fact $\overset{\infty}{\underset{\ell=2}{\sum}}
C^{2}_{\ell} = 1$ imply that
the last series in (\ref{Q_tilde}) converges.
\end{proof}


\subsection{Proof of Theorem \ref{thm:nonGaussian}}

By the Hermite polynomials expansion (\ref{eq:decom_2nd_scat}),
we have that
\begin{align}\notag
U^{A_{1}}[j_{1}]X\star \psi_{j_{2}}(2^{j_{2}}t)
=\sum_{\ell=\mathbf{r}}^{\infty}
\frac{C_{\sigma_{j_{1}},\ell}}{\sqrt{\ell!}}Z_{\ell}(2^{j_{2}}t),
\end{align}
where $Z_{\ell}$ is defined in (\ref{eq:decom_2nd_scatQ}).
By Slutsky's argument and the Cramer-Wold device \cite[p. 6.]{leonenko1999limit},
Theorem \ref{thm:nonGaussian} will be proved if we can show
that
\begin{equation}
\left\{\begin{array}{lr}
\textup{(a)}\ 2^{j_{2}(2\alpha+\beta)\mathbf{r}/2}
\frac{C_{\sigma_{j_{1}},\mathbf{r}}}{\sqrt{\mathbf{r}!}}Z_{\mathbf{r}}(2^{j_{2}}t) \overset{d}{\Rightarrow} V_{2}(t),
\\
\textup{(b)}\ 2^{j_{2}(2\alpha+\beta)\mathbf{r}/2}\overset{\infty}{\underset{\ell=\mathbf{r}+1}{\sum}}
\frac{C_{\sigma_{j_{1}},\ell}}{\sqrt{\ell!}}Z_{\ell}(2^{j_{2}}t) \rightarrow 0\ \textup{in probability}
\end{array}
\right.
\end{equation}
when $j_{2}\rightarrow\infty$, where the process $V_{2}$ is defined in (\ref{thm:limitprocess}).
\\

\noindent{\it Proof of (a):}
By (\ref{eq:decom_2nd_scatQ}) and the self-similar property; that is,
\begin{equation*}
W(2^{-j_{2}}d\lambda) \overset{d}{=} 2^{-\frac{j_{2}}{2}}W(d\lambda)\,,
\end{equation*}
we have
\begin{align}\notag
Z_{\mathbf{r}}(2^{j_{2}}t)
=&
\sigma_{j_{1}}^{-\mathbf{r}}\int^{'}_{\mathbb{R}^{\mathbf{r}}}e^{i2^{j_{2}}(\lambda_{1}+\cdots+\lambda_{\mathbf{r}})t}
\Psi(2^{j_{2}}(\lambda_{1}+\cdots+\lambda_{\mathbf{r}}))\left[\prod_{k=1}^{\mathbf{r}}{\Psi(2^{j_{1}}\lambda_{k})\sqrt{f_{X}(\lambda_{k})}}W(d\lambda_{k})\right]
\\\label{asym_Z2}\overset{d}{=}& 2^{-j_{2}\mathbf{r}/2}\sigma_{j_{1}}^{-\mathbf{r}}
\int^{'}_{\mathbb{R}^{\mathbf{r}}}e^{i(\lambda_{1}+\cdots+\lambda_{\mathbf{r}})t}
\Psi(\lambda_{1}+\cdots+\lambda_{\mathbf{r}})\nonumber\\
&\qquad\qquad\times \left[\prod_{k=1}^{\mathbf{r}}{\Psi(2^{j_{1}-j_{2}}\lambda_{k})\sqrt{f_{X}(2^{-j_{2}}\lambda_{k})}}W(d\lambda_{k})\right].
\end{align}
Applying Assumptions 1 and 2 to (\ref{asym_Z2}), we get
\begin{align}\label{asym_Z2_assumption}
2^{j_{2}(2\alpha+\beta)\mathbf{r}/2}Z_{\mathbf{r}}(2^{j_{2}}t)
\overset{d}{=}Z_{\mathbf{r},j_{2}}(t),
\end{align}
where
\begin{equation*}
Z_{\mathbf{r},j_{2}}(t) =  2^{j_{1}\alpha\mathbf{r}}\sigma_{j_{1}}^{-\mathbf{r}}
\int^{'}_{\mathbb{R}^{\mathbf{r}}}e^{i(\lambda_{1}+\cdots+\lambda_{\mathbf{r}})t}C_{j_{2}}(\lambda_{1:\mathbf{r}})
\frac{\Psi(\lambda_{1}+\cdots+\lambda_{\mathbf{r}})}{|\lambda_{1}\cdots\lambda_{\mathbf{r}}|^{\frac{1-2\alpha-\beta}{2}}}W(d\lambda_{1})\cdots W(d\lambda_{\mathbf{r}})
\end{equation*}
and
\begin{equation*}
C_{j_{2}}(\lambda_{1:\mathbf{r}}) = \overset{\mathbf{r}}{\underset{k=1}{\prod}} C_{\Psi}(2^{j_{1}-j_{2}}\lambda_{k})
\sqrt{C_{X}(2^{-j_{2}}\lambda_{k})}.
\end{equation*}
For any $M\in \mathbb{N}$ and any set of real numbers
$\{a_{1},a_{2},\ldots,a_{M},t_{1},t_{2},\ldots,t_{M}\}$, (\ref{asym_Z2_assumption}) implies that
\begin{align}\label{asym_Z2_assumption_v2}
\overset{M}{\underset{k=1}{\sum}}a_{k}2^{j_{2}(2\alpha+\beta)\mathbf{r}/2}
\frac{C_{\sigma_{j_{1}},\mathbf{r}}}{\sqrt{\mathbf{r}!}}Z_{\mathbf{r}}(2^{j_{2}}t_{k}) \overset{d}{=}
\overset{M}{\underset{k=1}{\sum}}a_{k}
\frac{C_{\sigma_{j_{1}},\mathbf{r}}}{\sqrt{\mathbf{r}!}}Z_{\mathbf{r},j_{2}}(t_{k}).
\end{align}
By the Minkowski inequality,
\begin{align}\notag
\left\{\mathbb E\Big|\overset{M}{\underset{k=1}{\sum}}a_{k}
\frac{C_{\sigma_{j_{1}},\mathbf{r}}}{\sqrt{\mathbf{r}!}}Z_{\mathbf{r},j_{2}}(t_{k})-\overset{M}{\underset{k=1}{\sum}}a_{k}
\frac{C_{\sigma_{j_{1}},\mathbf{r}}}{\sqrt{\mathbf{r}!}}V_{2}(t_{k})\Big|^{2}\right\}^{\frac{1}{2}}
\leq \frac{|C_{\sigma_{j_{1}},\mathbf{r}}|}{\sqrt{\mathbf{r}!}}\overset{M}{\underset{k=1}{\sum}}|a_{k}|
\left\{\mathbb E\Big|Z_{\mathbf{r},j_{2}}(t_{k})-V_{2}(t_{k})\Big|^{2}\right\}^{\frac{1}{2}}.
\end{align}
For each $k\in\{1,\ldots,M\}$, by the orthogonal property of the complex Gaussian random measure,
\begin{align}\notag
&\left[2^{j_{1}\alpha\mathbf{r}}\sigma_{j_{1}}^{-\mathbf{r}}\right]^{-2}\underset{j_{2}\rightarrow\infty}{\lim}\mathbb E\Big|Z_{\mathbf{r},j_{2}}(t_{k})-V_{2}(t_{k})\Big|^{2}
\\\notag=&\underset{j_{2}\rightarrow\infty}{\lim}\mathbb E\left\{\Big|\int^{'}_{\mathbb{R}^{\mathbf{r}}}e^{i(\lambda_{1}+\cdots+\lambda_{\mathbf{r}})t}\left[C_{j_{2}}(\lambda_{1:\mathbf{r}})-C_{\Psi}(0)^{\mathbf{r}}C_{X}(0)^{\frac{\mathbf{r}}{2}}\right]
\frac{\Psi(\lambda_{1}+\cdots+\lambda_{\mathbf{r}})}{|\lambda_{1}\cdots\lambda_{\mathbf{r}}|^{\frac{1-2\alpha-\beta}{2}}}W(d\lambda_{1})\cdots W(d\lambda_{\mathbf{r}})
\Big|^{2}\right\}
\\\label{asym_Z2_diff}=&\mathbf{r}!\ \underset{j_{2}\rightarrow\infty}{\lim}\int_{\mathbb{R}^{\mathbf{r}}}\big|C_{j_{2}}(\lambda_{1:\mathbf{r}})-C_{\Psi}(0)^{\mathbf{r}}C_{X}(0)^{\frac{\mathbf{r}}{2}}\big|^{2}
\frac{|\Psi(\lambda_{1}+\cdots+\lambda_{\mathbf{r}})|^{2}}{|\lambda_{1}\cdots \lambda_{\mathbf{r}}|^{1-2\alpha-\beta}}d\lambda_{1}\ \cdots\  d\lambda_{\mathbf{r}}
=0\,,
\end{align}
where the last equality follows from the Lebesgue dominated convergence theorem, because
if $\mathbf{r}(2\alpha+\beta)<1$, Riesz's composition formula \cite[p. 71]{du1970introduction} implies that
\begin{align*}
\int_{\mathbb{R}^{\mathbf{r}}}\frac{|\Psi(\lambda_{1}+\cdots+\lambda_{\mathbf{r}})|^{2}}{|\lambda_{1}\cdots\lambda_{\mathbf{r}}|^{1-2\alpha-\beta}}d\lambda_{1}\ \cdots\ d\lambda_{\mathbf{r}} = \pi^{\frac{\mathbf{r}-1}{2}} \frac{\left[\Gamma(\frac{2\alpha+\beta}{2})\right]^\mathbf{r}\Gamma(\frac{1-\mathbf{r}(2\alpha+\beta)}{2})}{\left[\Gamma(\frac{1-2\alpha-\beta}{2})\right]^\mathbf{r}\Gamma(\mathbf{r}(2\alpha+\beta))}\int_{\mathbb{R}} \frac{|\Psi(\lambda)|^{2}}{|\lambda|^{1-\mathbf{r}(2\alpha+\beta)}}d\lambda<\infty,
\end{align*}
and
$\big|C_{j_{2}}(\lambda_{1:\mathbf{r}})-C_{\Psi}(0)^{\mathbf{r}}C_{X}(0)^{\frac{\mathbf{r}}{2}}\big|$ is a bounded  and continuous function
with $\underset{j_{2}\rightarrow\infty}{\lim}\big|C_{j_{2}}(\lambda_{1:\mathbf{r}})-C_{\Psi}(0)^{\mathbf{r}}C_{X}(0)^{\frac{\mathbf{r}}{2}}\big|=0$.
It follows from (\ref{asym_Z2_diff}) that
\begin{align}\label{nonGaussian_slutsky}
\underset{j_{2}\rightarrow\infty}{\lim}\overset{M}{\underset{k=1}{\sum}}a_{k}
\frac{C_{\sigma_{j_{1}},\mathbf{r}}}{\sqrt{\mathbf{r}!}}Z_{\mathbf{r},j_{2}}(t_{k}) =  \overset{M}{\underset{k=1}{\sum}}a_{k}
\frac{C_{\sigma_{j_{1}},\mathbf{r}}}{\sqrt{\mathbf{r}!}}V_{2}(t_{k})
\end{align}
in the $L^{2}(dP)$ sense.
By applying Slutsky's argument to (\ref{asym_Z2_assumption_v2}) and (\ref{nonGaussian_slutsky}),
we observe
\begin{align}\label{nonGaussian_slutsky_v2}
\underset{j_{2}\rightarrow\infty}{\lim}\overset{M}{\underset{k=1}{\sum}}a_{k}2^{j_{2}(2\alpha+\beta)\mathbf{r}/2}
\frac{C_{\sigma_{j_{1}},\mathbf{r}}}{\sqrt{\mathbf{r}!}}Z_{\mathbf{r}}(2^{j_{2}}t_{k}) = \overset{M}{\underset{k=1}{\sum}}a_{k}
\frac{C_{\sigma_{j_{1}},\mathbf{r}}}{\sqrt{\mathbf{r}!}}V_{2}(t_{k})
\end{align}
in the distribution sense.
By the Cramer-Wold
device and (\ref{nonGaussian_slutsky_v2}),
\begin{align*}
2^{j_{2}(2\alpha+\beta)\mathbf{r}/2}\sigma_{j_{1}}
\frac{C_{\sigma_{j_{1}},\mathbf{r}}}{\sqrt{\mathbf{r}!}}Z_{\mathbf{r}}(2^{j_{2}}t)\overset{d}{\Rightarrow} \frac{C_{\sigma_{j_{1}},\mathbf{r}}}{\sqrt{\mathbf{r}!}}V_{2}(t)
\end{align*}
when $j_{2}\rightarrow\infty$, where the limiting process $V_2$ is defined in (\ref{thm:limitprocess}).
The claim (a) is proved.
\\

\noindent{\it Proof of (b):}
By the definition of the process $Z_{\ell}$ in
(\ref{eq:decom_2nd_scatQ}) and the orthogonal property
(\ref{expectionhermite}), we have
\begin{align}\notag
&\mathbb{E}\left\{\left[2^{j_{2}(2\alpha+\beta)\mathbf{r}/2}\overset{\infty}{\underset{\ell>\mathbf{r}}{\sum}}
\frac{C_{\sigma_{j_{1}},\ell}}{\sqrt{\ell!}}Z_{\ell}(2^{j_{2}}t)\right]^{2}\right\}
\\\notag
=&
2^{j_{2}(2\alpha+\beta)\mathbf{r}}\overset{\infty}{\underset{\ell>\mathbf{r}}{\sum}}\overset{\infty}{\underset{m>\mathbf{r}}{\sum}}
\frac{C_{\sigma_{j_{1}},\ell}}{\sqrt{\ell!}}\frac{C_{\sigma_{j_{1}},m}}{\sqrt{m!}}\int_{\mathbb{R}}\int_{\mathbb{R}}
\mathbb E\left[H_{\ell}(Y_{j_{1}}(s))H_{m}(Y_{j_{1}}(s'))\right]
\psi_{j_{2}}(2^{j_{2}}t-s)\psi_{j_{2}}(2^{j_{2}}t-s')ds\ ds'
\\\label{residual_term}=&
2^{j_{2}(2\alpha+\beta)\mathbf{r}}\overset{\infty}{\underset{\ell>\mathbf{r}}{\sum}}
C^{2}_{\sigma_{j_{1}},\ell}\int_{\mathbb{R}}\int_{\mathbb{R}}
R^{\ell}_{Y_{j_{1}}}(s-s')
\psi_{j_{2}}(2^{j_{2}}t-s)\psi_{j_{2}}(2^{j_{2}}t-s')ds\ ds'.
\end{align}
By the following Fourier transform relationship
\begin{equation*}
R^{\ell}_{Y_{j_{1}}}(s-s')= \int_{\mathbb{R}} e^{i\lambda (s-s')} f^{\star \ell}_{Y_{j_{1}}}(\lambda)d\lambda,\quad
\int_{\mathbb{R}}e^{i\lambda s}
\psi_{j_{2}}(2^{j_{2}}t-s)
ds
=e^{i\lambda 2^{j_{2}}t}
\Psi(2^{j_{2}}\lambda),
\end{equation*}
and
\begin{align*}
&\int_{\mathbb{R}}e^{-i\lambda s'}
\psi_{j_{2}}(2^{j_{2}}t-s')
ds'
=e^{-i\lambda 2^{j_{2}}t}
\overline{\Psi(2^{j_{2}}\lambda)},
\end{align*}
we can rewrite (\ref{residual_term}) as
\begin{align}
&\mathbb{E}\left\{\left[2^{j_{2}(2\alpha+\beta)\mathbf{r}/2}\overset{\infty}{\underset{\ell>\mathbf{r}}{\sum}}
\frac{C_{\sigma_{j_{1}},\ell}}{\sqrt{\ell!}}Z_{\ell}(2^{j_{2}}t)\right]^{2}\right\}\label{part0}\\
=&\notag
2^{j_{2}(2\alpha+\beta)\mathbf{r}}\overset{\infty}{\underset{\ell>\mathbf{r}}{\sum}}
C^{2}_{\sigma_{j_{1}},\ell}\int_{\mathbb{R}}
f^{\star\ell}_{Y_{j_{1}}}(\lambda)|\Psi(2^{j_{2}}\lambda)|^{2}
d\lambda
 2^{j_{2}((2\alpha+\beta)\mathbf{r}-1)}\overset{\infty}{\underset{\ell>\mathbf{r}}{\sum}}
C^{2}_{\sigma_{j_{1}},\ell}\int_{\mathbb{R}}
f^{\star\ell}_{Y_{j_{1}}}(2^{-j_{2}}\lambda)|\Psi(\lambda)|^{2}
d\lambda.
\end{align}
Under Assumptions 1 and 2, we know that
\begin{align*}
f_{Y_{j_{1}}}(\lambda) = \frac{1}{\sigma_{j_{1}}^{2}}|\Psi(2^{j_{1}}\lambda)|^{2}f_{X}(\lambda)
=\frac{2^{2\alpha j_{1}}}{\sigma_{j_{1}}^{2}}\frac{|C_{\Psi}(2^{j_{1}}\lambda)|^{2}C_{X}(\lambda)}{|\lambda|^{1-2\alpha-\beta}},\ \lambda\in \mathbb{R},
\end{align*}
where $2\alpha+\beta<1/\mathbf{r}$ under the assumption of Theorem 2.
Denote $\ell^{*} = \textup{max}\{\ell\in \mathbb{N}| \ell\geq \mathbf{r},\ \ell(2\alpha+\beta)<1\}$.
For $\mathbf{r}\leq\ell\leq \ell^{*}$, by Lemma \ref{lemma:1} in the appendix,
\begin{align*}
f_{Y_{j_{1}}}^{\star \ell}(\lambda) \leq\frac{B_{\ell}(\lambda)}{|\lambda|^{1-\ell(2\alpha+\beta)}}
\end{align*}
for some bounded function $B_{\ell}$.
Hence,
\begin{align}\notag
&2^{j_{2}((2\alpha+\beta)\mathbf{r}-1)}\overset{\ell^{*}}{\underset{\ell>\mathbf{r}}{\sum}}
C^{2}_{\sigma_{j_{1}},\ell}\int_{\mathbb{R}}
f^{\star\ell}_{Y_{j_{1}}}(2^{-j_{2}}\lambda)|\Psi(\lambda)|^{2}
d\lambda
\\\notag\leq& 2^{j_{2}((2\alpha+\beta)\mathbf{r}-1)}\overset{\ell^{*}}{\underset{\ell>\mathbf{r}}{\sum}}
C^{2}_{\sigma_{j_{1}},\ell}\int_{\mathbb{R}}
\frac{B_{\ell}(2^{-j_{2}}\lambda)}{|2^{-j_{2}}\lambda|^{1-\ell(2\alpha+\beta)}}|\Psi(\lambda)|^{2}
d\lambda
\\\label{part1}=&\overset{\ell^{*}}{\underset{\ell>\mathbf{r}}{\sum}}2^{j_{2}(2\alpha+\beta)(\mathbf{r}-\ell)}
C^{2}_{\sigma_{j_{1}},\ell}\int_{\mathbb{R}}
\frac{B_{\ell}(2^{-j_{2}}\lambda)}{|\lambda|^{1-\ell(2\alpha+\beta)}}|\Psi(\lambda)|^{2}
d\lambda\rightarrow0
\end{align}
when $j_{2}\rightarrow\infty$.

If $(\ell^{*}+1)(2\alpha+\beta)=1$, by Lemma \ref{lemma:1} in the appendix,
\begin{align*}
f_{Y_{j_{1}}}^{\star (\ell^{*}+1)}(\lambda) \leq B_{\ell^{*}+1}(\lambda)\textup{ln}(2+\frac{1}{|\lambda|})
\end{align*}
for some bounded function $B_{\ell^{*}+1}$. For any $\varepsilon>0$, there exists a constant $C_{\varepsilon}$ such that
\begin{equation*}
\textup{ln}(2+\frac{1}{|\lambda|})\leq C_{\varepsilon}|\lambda|^{-\varepsilon}\ \textup{for}\ \lambda\in[-1,1]\setminus \{0\}.
\end{equation*}
By choosing $\varepsilon=(1-(2\alpha+\beta)\mathbf{r})/2$, we have
\begin{align}\notag
&2^{j_{2}((2\alpha+\beta)\mathbf{r}-1)}
\int_{\mathbb{R}}
f^{\star(\ell^{*}+1)}_{Y_{j_{1}}}(2^{-j_{2}}\lambda)|\Psi(\lambda)|^{2}
d\lambda
\\\notag\leq& 2^{j_{2}((2\alpha+\beta)\mathbf{r}-1)}
\left[\underset{\lambda\in \mathbb{R}}{\textup{max}}B_{\ell^{*}+1}(\lambda)\right]\int_{\mathbb{R}}
\textup{ln}(2+\frac{1}{|2^{-j_{2}}\lambda|})|\Psi(\lambda)|^{2}
d\lambda
\\\notag\leq& 2^{j_{2}((2\alpha+\beta)\mathbf{r}-1)}
\left[\underset{\lambda\in \mathbb{R}}{\textup{max}}B_{\ell^{*}+1}(\lambda)\right]\left\{C_{\varepsilon}\int_{-2^{j_{2}}}^{2^{j_{2}}}
|2^{-j_{2}}\lambda|^{-\varepsilon}|\Psi(\lambda)|^{2}d\lambda+\textup{ln}9\int_{2^{j_{2}}}^{\infty}|\Psi(\lambda)|^{2}d\lambda\right\}
\\\label{part2}\leq&
\left[\underset{\lambda\in\mathbb{R}}{\textup{max}}B_{\ell^{*}+1}(\lambda)\right]
\left\{C_{\varepsilon}2^{j_{2}((2\alpha+\beta)\mathbf{r}-1)/2}
\int_{\mathbb{R}}|\lambda|^{-\varepsilon}|\Psi(\lambda)|^{2}d\lambda+ 2^{j_{2}((2\alpha+\beta)\mathbf{r}-1)}\textup{ln}9  \int_{2^{j_{2}}}^{\infty}|\Psi(\lambda)|^{2}d\lambda
\right\},
\end{align}
which converges to zero when $j_{2}\rightarrow\infty$ because $(2\alpha+\beta)\mathbf{r}-1<0$.


If $(\ell^{*}+1)(2\alpha+\beta)>1$, by Lemma \ref{lemma:1} in the appendix, for each $\ell\geq \ell^{*}+1$,
$f_{Y_{j_{1}}}^{\star \ell}$
is bounded. Moreover,
\[
\underset{\lambda\in \mathbb{R}}{\textup{max}}f^{\star(\ell+1)}(\lambda)\leq \underset{\lambda\in \mathbb{R}}{\textup{max}}f^{\star\ell}(\lambda)
\]
for all $\ell\geq \ell^{*}+1$. Hence,
\begin{align}\notag
&2^{j_{2}((2\alpha+\beta)\mathbf{r}-1)}\overset{\infty}{\underset{\ell=\ell^{*}+1}{\sum}}
C^{2}_{\sigma_{j_{1}},\ell}\int_{\mathbb{R}}
f^{\star\ell}_{Y_{j_{1}}}(2^{-j_{2}}\lambda)|\Psi(\lambda)|^{2}
d\lambda
\\\label{part3}\leq& \left[\underset{\eta\in \mathbb{R}}{\textup{max}} f^{\star(\ell^{*}+1)}(\eta)\right] 2^{j_{2}((2\alpha+\beta)\mathbf{r}-1)}\left[\overset{\infty}{\underset{\ell=\ell^{*}+1}{\sum}}
C^{2}_{\sigma_{j_{1}},\ell}\right]
\|\Psi\|_{L^2}^{2}
\rightarrow0
\end{align}
when $j_{2}\rightarrow\infty$.
Substituting (\ref{part1})-(\ref{part3}) into (\ref{part0}) yields
\begin{align}\notag
\underset{j_{2}\rightarrow\infty}{\lim}\mathbb{E}\left\{\left[2^{j_{2}(2\alpha+\beta)\mathbf{r}/2}\overset{\infty}{\underset{\ell>\mathbf{r}}{\sum}}
\frac{C_{\sigma_{j_{1}},\ell}}{\sqrt{\ell!}}Z_{\ell}(2^{j_{2}}t)\right]^{2}\right\}=0.
\end{align}
The claim (b) follows
by the Markov inequality. We have thus proved Theorem \ref{thm:nonGaussian}.
\qed
%

\subsection{Proof of Corollary \ref{corollary_nonGaussian_v1}}
\begin{proof}
First of all, we note that $\mathbf{r} = \textup{rank}(A_{1}(\sigma_{j_{1}}\cdot))=2$ when $A_{1}(\cdot) = |\cdot|$. From the proof of Theorem \ref{thm:nonGaussian}, we have that
\begin{equation}
\mathbb E\Big[\Big|2^{j_{2}(2\alpha+\beta)} U[j_{1},j_{2}]X\Big|^{2} \Big]\rightarrow \mathbb E\left[|V_{2}|^{2}\right]
\end{equation}
when $j_{2}\rightarrow\infty$. Hence, the conditions of Lemma \ref{lemma:expect_converge} in the appendix hold.
It implies that
\begin{equation}\label{proof:coro2.1}
\underset{j\rightarrow\infty}{\lim}2^{(j_{1}+j)(2\alpha+\beta)}E\big[U[j_{1},j_{1}+j]X\big]=\mathbb  E\left[\big|V_{2}\big|\right]
\end{equation}
Because $X\star\psi_{j_{1}}$ is a stationary Gaussian process with mean zero and variance
$\sigma_{j_{1}}^{2}$,
\begin{equation}\label{proof:coro2.2}
\mathbb E\left[U[j_{1}]X\right] = \mathbb E|X\star\psi_{j_{1}}| = \sqrt{\frac{2}{\pi}}\sigma_{j_{1}}.
\end{equation}
Combining (\ref{proof:coro2.1}) and (\ref{proof:coro2.2}) yields
\begin{align*}
2^{j(2\alpha+\beta)}\widetilde{S}X(j_{1},j_{1}+j)
\rightarrow& 2^{-j_{1}(2\alpha+\beta)}\sqrt{\frac{\pi}{2}}\sigma_{j_{1}}^{-1}\nu \mathbb E\Big|\int^{'}_{\mathbb{R}^{2}}
\frac{\Psi(\lambda_{1}+\lambda_{2})}{|\lambda_{1}\lambda_{2}|^{\frac{1-2\alpha-\beta}{2}}}W(d\lambda_{1})W(d\lambda_{2})\Big|
\\=&
2^{-j_{1}\beta}\sigma_{j_{1}}^{-2}\frac{\sqrt{\pi}}{2} C_{\Psi}(0)^{2}C_{X}(0)C_{2}\mathbb E\Big|\int^{'}_{\mathbb{R}^{2}}
\frac{\Psi(\lambda_{1}+\lambda_{2})}{|\lambda_{1}\lambda_{2}|^{\frac{1-2\alpha-\beta}{2}}}W(d\lambda_{1})W(d\lambda_{2})\Big|
\end{align*}
when $j\rightarrow\infty$. Hence, (\ref{thm2.2.1}) is proved.
(\ref{thm2.2.2}) comes from the observation (\ref{sigma_0}), i.e.,
\begin{equation*}
\underset{j_{1}\rightarrow\infty}{\lim}2^{j_{1}\beta}\sigma_{j_{1}}^{2} = C_{X}(0)\|Q\|_{L^{1}}.
\end{equation*}
\end{proof}


\vskip 20 pt

%
%


\bigskip

\noindent {\bf Reference}
\bibliographystyle{elsarticle-num}
\bibliography{reference}

\bigskip


\appendix

\section{Proof of Lemma \ref{thm:nonexpansive}}\label{sec:proof_nonexpansive_UJA}

Because $U^{A_{i}}[j]X=  A_{i}(W[j]X)$, $L_{A_{i}}\leq1$, and $A_{i}(0)=0$ for $i\in\{1,2,...,n\}$,
we have
\begin{align}\notag
\mathbb{E}\left[\|U^{A_{i}}_{J}X\|^{2}\right]=&\mathbb{E}\left[|P_{J}X|^{2}\right]
+\underset{j<J}{\sum}\mathbb{E}\left[\big|A_{i}(W[j]X)\big|^{2}\right]
\\\notag
\leq&
\mathbb{E}\left[|P_{J}X|^{2}\right]
+L_{A_{i}}\underset{j<J}{\sum}\mathbb{E}\left[\big|W[j]X\big|^{2}\right]
\\\label{proof:nonexpansive1}
\leq&
\mathbb{E}\left[|P_{J}X|^{2}\right]
+\underset{j<J}{\sum}\mathbb{E}\left[\big|W[j]X\big|^{2}\right].
\end{align}
Because
$P_{J}X$ has the spectral density $f_{X}(\cdot)|\Phi(2^{J}\cdot)|^{2}$ and
the condition $\int_{\mathbb{R}} \phi(x)dx = 1$ implies that $\mathbb{E}[P_{J}X] = \mathbb{E}[X]$,
\begin{align}\label{EPX}
\mathbb{E}\left[|P_{J}X|^{2}\right]= \int_{\mathbb{R}}f_{X}(\lambda)|\Phi(2^{J}\lambda)|^{2}d\lambda + \left(\mathbb{E}[X]\right)^{2}.
\end{align}
Because
$W[j]X$ has the spectral density $f_{X}(\cdot)|\Psi(2^{j}\cdot)|^{2}$ and
the condition $\int_{\mathbb{R}} \psi(x)dx = 0$ implies that $\mathbb{E}[W[j]X] = 0$,
\begin{align}\label{EWX}
\mathbb{E}\left[|W[j]X|^{2}\right]= \int_{\mathbb{R}}f_{X}(\lambda)|\Psi(2^{j}\lambda)|^{2}d\lambda.
\end{align}
By substituting (\ref{EPX}) and (\ref{EWX}) into the right hand side of (\ref{proof:nonexpansive1}),
we get
\begin{align}\notag
\mathbb{E}\left[\|U^{A_{i}}_{J}X\|^{2}\right]
\leq&
\int_{\mathbb{R}}f_{X}(\lambda)|\Phi(2^{J}\lambda)|^{2}d\lambda + \left(\mathbb{E}[X]\right)^{2}
+\underset{j<J}{\sum}\int_{\mathbb{R}}f_{X}(\lambda)|\Psi(2^{j}\lambda)|^{2}d\lambda.
\\\label{UA<EX2}\leq&
\int_{\mathbb{R}}f_{X}(\lambda)d\lambda + \left(\mathbb{E}[X]\right)^{2}
= E\left[|X|^{2}\right],
\end{align}
where the second inequality follows from the Littlewood-Paley condition (\ref{Littlewood_condition}).

By (\ref{UA<EX2}) and the definition $U^{A_{n}}_{J} = \{P_{J},U^{A_{n}}[\Lambda_{J}]\}$,
\begin{align}\notag
\mathbb{E}\left[\|U^{A_{1:n-1}}[\Lambda_{J}^{n-1}]X\|^{2}\right] \geq& \mathbb{E}\left[\|U^{A_{n}}_{J}U^{A_{1:n-1}}[\Lambda_{J}^{n-1}]X\|^{2}\right]
\\\label{nonincreasing_U_app}=&\mathbb{E}\left[\|S^{A_{1:n-1}}_{J}[\Lambda_{J}^{n-1}]X\|^{2}\right]
+\mathbb{E}\left[\|U^{A_{1:n}}[\Lambda_{J}^{n}]X\|^{2}\right]
\end{align}
for $n\in \mathbb{N}.$
By using the inequality above iteratively, we get
\begin{align}\notag
\mathbb{E}\left[|X|^{2}\right]\geq\mathbb{E}\left[\|U_{J}^{A_{1}}X\|^{2}\right]\geq
\mathbb{E}\left[\|U^{A_{1}}[\Lambda_{J}]X\|^{2}\right]\geq\cdots
\geq\mathbb{E}\left[\|U^{A_{1:n}}[\Lambda_{J}^{n}]X\|^{2}\right],
\end{align}
which proves (\ref{nonincreasing_U}).
For any $n\in \mathbb{N}\cup\{0\}$, by (\ref{nonincreasing_U_app}),
\begin{align}\notag
\mathbb{E}\left[\overset{n}{\underset{m=0}{\sum}}\|S^{A_{1:m}}_{J}[\Lambda_{J}^{m}]X\|^{2}\right]
\leq
\mathbb{E}\left[\|U^{A_{1:0}}[\Lambda_{J}^{0}]X\|^{2}\right]-\mathbb{E}\left[\|U^{A_{1:n+1}}[\Lambda_{J}^{n+1}]X\|^{2}\right]
\leq \mathbb{E}\left[|X|^{2}\right],
\end{align}
which proves (\ref{nonexpansive_S}).

\section{Proof of Lemma \ref{thm:deformation}}\label{main_section_stability}


First of all, we have
\begin{align}\notag
&\mathbb{E}\left[\|S^{A_{1:n}}_{J}[\Lambda_{J}^{0:n}]L_{\tau}X-S^{A_{1:n}}_{J}[\Lambda_{J}^{0:n}]X\|^{2}\right]
\\\notag\leq& 2\mathbb{E}\left[\|S^{A_{1:n}}_{J}[\Lambda_{J}^{0:n}]L_{\tau}X-L_{\tau}S^{A_{1:n}}_{J}[\Lambda_{J}^{0:n}]X\|^{2}\right]
+2\mathbb{E}\left[\|L_{\tau}S^{A_{1:n}}_{J}[\Lambda_{J}^{0:n}]X-S^{A_{1:n}}_{J}[\Lambda_{J}^{0:n}]X\|^{2}\right].
\\\label{deformation_stability_estimate1}=&2\mathbb{E}\left[\|[S^{A_{1:n}}_{J}[\Lambda_{J}^{0:n}],L_{\tau}]X\|^{2}\right]
+2\mathbb{E}\left[\|L_{\tau}S^{A_{1:n}}_{J}[\Lambda_{J}^{0:n}]X-S^{A_{1:n}}_{J}[\Lambda_{J}^{0:n}]X\|^{2}\right],
\end{align}
where $[S^{A_{1:n}}_{J}[\Lambda_{J}^{0:n}],L_{\tau}] = S^{A_{1:n}}_{J}[\Lambda_{J}^{0:n}]L_{\tau}-L_{\tau}S^{A_{1:n}}_{J}[\Lambda_{J}^{0:n}]$ is the commutator of $S^{A_{1:n}}_{J}[\Lambda_{J}^{0:n}]$ and $L_{\tau}$.
For the first component $\mathbb{E}\left[\|[S^{A_{1:n}}_{J}[\Lambda_{J}^{0:n}],L_{\tau}]X\|^{2}\right]$
on the right hand side of (\ref{deformation_stability_estimate1}), we have the following estimate
\begin{equation}\label{eq:norm[S,L]}
\mathbb{E}\left[\|\left[S_{J}^{A_{1:n}}\left[\Lambda_{J}^{0:n}\right],L_{\tau}\right]X\|^{2}\right]
\leq (n+1)^{2}
\mathbb{E}\left[\|\left[W_{J},L_{\tau}\right]\|^{2}\right]
\mathbb{E}\left[|X|^{2}\right],
\end{equation}
where $W_{J} = \{P_{J},\left(W[j]\right)_{j\in \Lambda_{J}}\}$ and
$\left[W_{J},L_{\tau}\right] = \{[P_{J},L_{\tau}],([W[j],L_{\tau}])_{j\in \Lambda_{J}}\}$.
The proof of (\ref{eq:norm[S,L]}) is in \ref{sec:proof_lemma:norm[S,L]}.
We note that (\ref{eq:norm[S,L]}) coincides with the equation (H.3) in \cite[Appnedix H]{mallat2012group},
in which $A_{1}(\cdot) = A_{2}(\cdot) = \cdots = A_{n}(\cdot) = |\cdot|$.
Note that $\|[W_{J},L_{\tau}]\| = 0$ if $\tau$ is a constant function.
For the case $\|\tau^{'}\|_{\infty}>0$, \cite[Lemma 2.14]{mallat2012group} proved that
if $\tau\in \mathbf{C}^{2}(\mathbb{R})$ and $\|\tau^{'}\|_{\infty}\leq\frac{1}{2}$ almost surely,
then there exists $C>0$ such that
\begin{equation}\label{norm_[W,L]}
\mathbb{E}\left[\|[W_{J},L_{\tau}]\|^{2}\right] \leq C\ K(\tau)
\end{equation}
for all $J\in \mathbb{Z}$.

Next, we estimate the second component
$\mathbb{E}\left[\|L_{\tau}S^{A_{1:n}}_{J}[\Lambda_{J}^{0:n}]X-S^{A_{1:n}}_{J}[\Lambda_{J}^{0:n}]X\|^{2}\right]$
on the right hand side of (\ref{deformation_stability_estimate1}).
Because
\begin{equation*}
L_{\tau}S_{J}^{A_{1:n}}[\Lambda_{J}^{0:n}]X-S^{A_{1:n}}_{J}[\Lambda_{J}^{0:n}]
=
\left(L_{\tau}P_{J}-P_{J}\right) U^{A_{1:n}}[\Lambda_{J}^{0:n}]X,
\end{equation*}
we have
\begin{align}\notag
\mathbb{E}\left[\|L_{\tau}S_{J}^{A_{1:n}}[\Lambda_{J}^{0:n}]X-S_{J}^{A_{1:n}}[\Lambda_{J}^{0:n}]X\|^{2}\right]
\leq&
\mathbb{E}\left[\|L_{\tau}P_{J}-P_{J}\|^{2}\right]
\mathbb{E}\left[\|U^{A_{1:n}}[\Lambda_{J}^{0:n}]X\|^{2}\right]
\\\label{thm:secondpart:proof1}\leq&
(n+1)\mathbb{E}\left[\|L_{\tau}P_{J}-P_{J}\|^{2}\right]
\mathbb{E}\left[|X|^{2}\right],
\end{align}
where the first inequality follows from Lemma \ref{theorem_cite1} in \ref{sec:theorem_cite1} (see also \cite[Lemma 4.8 and the proof of Theorem 4.7]{mallat2012group})
and the second inequality follows from the non-expansiveness of $\{U^{A_{1},...,A_{k}}[\Lambda_{J}^{k}]\}_{k=0,1,...,n}$,
i.e., (\ref{nonincreasing_U}).
From Lemma \ref{theorem_cite3} in \ref{sec:theorem_cite3} (see also \cite[Lemma 2.11]{mallat2012group}),
we know that there exists a constant $C>0$ such that
\begin{align}\label{thm:secondpart:proof2}
\mathbb{E}\left[\|L_{\tau}P_{J}-P_{J}\|^{2}\right]
\leq C^{2}2^{-2J}E\left[\|\tau\|_{\infty}^{2}\right]
\end{align}
for all $J\in \mathbb{Z}$.
Combining (\ref{thm:secondpart:proof1}) and (\ref{thm:secondpart:proof2}) yields
\begin{equation}\label{thm:secondpart:eq}
\mathbb{E}\left[\|L_{\tau}S_{J}^{A_{1:n}}[\Lambda_{J}^{0:n}]X-S_{J}^{A_{1:n}}[\Lambda_{J}^{0:n}]X\|^{2}\right]
\leq
C^{2}(n+1)2^{-2J}\mathbb{E}\left[\|\tau\|_{\infty}^{2}\right]
\mathbb{E}\left[|X|^{2}\right].
\end{equation}

By substituting (\ref{eq:norm[S,L]}), (\ref{norm_[W,L]}) and (\ref{thm:secondpart:eq})
into (\ref{deformation_stability_estimate1}), we obtain
\begin{align}\notag
\mathbb{E}\left[\|S_{J}^{A_{1:n}}[\Lambda_{J}^{0:n}]L_{\tau}X-S_{J}^{A_{1:n}}[\Lambda_{J}^{0:n}]X\|^{2}\right]
\leq
C\left\{(n+1)^{2}K(\tau)+2^{-2J}(n+1)\mathbb{E}\left[\|\tau\|_{\infty}^{2}\right]\right\}\mathbb{E}\left[|X|^{2}\right]
\end{align}
for some constant $C>0$.

\section{Proof of (\ref{eq:norm[S,L]})}\label{sec:proof_lemma:norm[S,L]}

We follow the idea and symbols used for deriving of equation (H.3) in \cite[Appnedix H]{mallat2012group}
to decompose the NAST coefficients within the first $n$ layers, which are defined in (\ref{def:pooled_NAST_first_m}),
as follows
\begin{align}\notag
&S_{J}^{A_{1:n}}\left[\Lambda_{J}^{0:n}\right]L_{\tau}
\\\notag=& \left\{P_{J}L_{\tau},\left\{P_{J}U^{A_{1:m}}[\Lambda_{J}^{m}]L_{\tau}\right\}_{1\leq m\leq n}\right\}
\\\notag=& \left\{L_{\tau}P_{J}+\left[P_{J},L_{\tau}\right],\left\{P_{J}U^{A_{2:m}}[\Lambda_{J}^{m-1}]L_{\tau}U^{A_{1}}[\Lambda_{J}]
+P_{J}U^{A_{2:m}}[\Lambda_{J}^{m-1}]\left[U^{A_{1}}[\Lambda_{J}],L_{\tau}\right]\right\}_{1\leq m\leq n}\right\}
\\\notag=&\left\{L_{\tau}P_{J},\left\{P_{J}U^{A_{2:m}}[\Lambda_{J}^{m-1}]L_{\tau}U^{A_{1}}[\Lambda_{J}]
\right\}_{1\leq m\leq n}\right\}
\\\label{iterative1}+&\left\{\left[P_{J},L_{\tau}\right],\left\{P_{J}U^{A_{2:m}}[\Lambda_{J}^{m-1}]\left[U^{A_{1}}[\Lambda_{J}],L_{\tau}\right]\right\}_{1\leq m\leq n}\right\}.
\end{align}
Note that some elements within the curly bracket above may also be a set of NAST coefficients, such as  $P_{J}U^{A_{2:m}}[\Lambda_{J}^{m-1}]\left[U^{A_{1}}[\Lambda_{J}],L_{\tau}\right]$, and
the summation is performed independently for different layer index $[j_{1},j_{2},...,j_{m}]$, where $0\leq m\leq n$.
The sequence of operators $\{P_{J}U^{A_{2:m}}[\Lambda_{J}^{m-1}]L_{\tau}\}_{1\leq m\leq n}$ on the right hand side of the third equality
is equivalent to $S_{J}^{A_{2:n}}\left[\Lambda_{J}^{0:n-1}\right]L_{\tau}$.
A substitution of the sequence of operators $\{P_{J}U^{A_{2:m}}[\Lambda_{J}^{m-1}]L_{\tau}\}_{1\leq m\leq n}$ in (\ref{iterative1}) by
the formula (\ref{iterative1}) for $S_{J}^{A_{2:n}}\left[\Lambda_{J}^{0:n-1}\right]L_{\tau}$ gives
\begin{align}\notag
&S_{J}^{A_{1:n}}\left[\Lambda_{J}^{0:n}\right]L_{\tau}
\\\notag=&\left\{L_{\tau}P_{J},L_{\tau}P_{J}U^{A_{1}}[\Lambda_{J}], \left\{P_{J}U^{A_{3:m}}[\Lambda_{J}^{m-2}]L_{\tau}U^{A_{1:2}}[\Lambda_{J}^{2}]
\right\}_{2\leq m\leq n}\right\}
\\\notag+&\left\{\left[P_{J},L_{\tau}\right]U^{A_{1}}[\Lambda_{J}]
,\left\{P_{J}U^{A_{3:m}}[\Lambda_{J}^{m-2}]\left[U^{A_{2}}[\Lambda_{J}],L_{\tau}\right]U^{A_{1}}[\Lambda_{J}]\right\}_{2\leq m\leq n}\right\}
\\\label{iterative2}+&\left\{\left[P_{J},L_{\tau}\right],
\left\{P_{J}U^{A_{2:m}}[\Lambda_{J}^{m-1}]\left[U^{A_{1}}[\Lambda_{J}],L_{\tau}\right]\right\}_{1\leq m\leq n}\right\}.
\end{align}
Note that the sequence of operators $\{P_{J}U^{A_{3:m}}[\Lambda_{J}^{m-2}]L_{\tau}\}_{2\leq m\leq n}$ on the right hand side of the equality
are equivalent to $S_{J}^{A_{3:n}}\left[\Lambda_{J}^{0:n-2}\right]L_{\tau}$.
A substitution of the sequence of operators $\{P_{J}U^{A_{3:m}}[\Lambda_{J}^{m-2}]L_{\tau}\}_{2\leq m\leq n}$ in (\ref{iterative2})
by the formula (\ref{iterative1}) for $S_{J}^{A_{3:n}}\left[\Lambda_{J}^{0:n-2}\right]L_{\tau}$ gives
\begin{align}\notag
&S_{J}^{A_{1:n}}\left[\Lambda_{J}^{0:n}\right]L_{\tau}
\\\notag=&\left\{L_{\tau}P_{J},L_{\tau}P_{J}U^{A_{1}}[\Lambda_{J}],L_{\tau}P_{J}U^{A_{1:2}}[\Lambda_{J}^{2}],
\left\{P_{J}U^{A_{4:m}}[\Lambda_{J}^{m-3}]L_{\tau}U^{A_{1:3}}[\Lambda_{J}^{3}]
\right\}_{3\leq m\leq n}\right\}
\\\notag+&\left\{\left[P_{J},L_{\tau}\right]U^{A_{1:2}}[\Lambda_{J}^{2}],
\left\{P_{J}U^{A_{4:m}}[\Lambda_{J}^{m-3}]\left[U^{A_{3}}[\Lambda_{J}],L_{\tau}\right]U^{A_{1:2}}[\Lambda_{J}^{2}]\right\}_{3\leq m\leq n}\right\}
\\\notag+&\left\{\left[P_{J},L_{\tau}\right]U^{A_{1}}[\Lambda_{J}]
,\left\{P_{J}U^{A_{3:m}}[\Lambda_{J}^{m-2}]\left[U^{A_{2}}[\Lambda_{J}],L_{\tau}\right]U^{A_{1}}[\Lambda_{J}]\right\}_{2\leq m\leq n}\right\}
\\\label{iterative3}+&\left\{\left[P_{J},L_{\tau}\right],
\left\{P_{J}U^{A_{2:m}}[\Lambda_{J}^{m-1}]\left[U^{A_{1}}[\Lambda_{J}],L_{\tau}\right]\right\}_{1\leq m\leq n}\right\}.
\end{align}
After $n$ substitutions, we get
\begin{align}\notag
&S_{J}^{A_{1:n}}\left[\Lambda_{J}^{0:n}\right]L_{\tau}
\\\notag=&\left\{L_{\tau}P_{J},L_{\tau}P_{J}U^{A_{1}}[\Lambda_{J}],L_{\tau}P_{J}U^{A_{1:2}}[\Lambda_{J}^{2}],\ldots,
L_{\tau}P_{J}U^{A_{1:n}}[\Lambda_{J}^{n}]\right\}
\\\notag+&\left\{\left[P_{J},L_{\tau}\right]U^{A_{1:n}}[\Lambda_{J}^{n}]\right\}
\\\notag+&\left\{\left[P_{J},L_{\tau}\right]U^{A_{1:(n-1)}}[\Lambda_{J}^{n-1}],
\left\{P_{J}\left[U^{A_{n}}[\Lambda_{J}],L_{\tau}\right]U^{A_{1:(n-1)}}[\Lambda_{J}^{n-1}]\right\}\right\}
\\\notag+&\cdots
\\\notag+&\left\{\left[P_{J},L_{\tau}\right]U^{A_{1:2}}[\Lambda_{J}^{2}],
\left\{P_{J}U^{A_{4:m}}[\Lambda_{J}^{m-3}]\left[U^{A_{3}}[\Lambda_{J}],L_{\tau}\right]U^{A_{1:2}}[\Lambda_{J}^{2}]\right\}_{3\leq m\leq n}\right\}
\\\notag+&\left\{\left[P_{J},L_{\tau}\right]U^{A_{1}}[\Lambda_{J}]
,\left\{P_{J}U^{A_{3:m}}[\Lambda_{J}^{m-2}]\left[U^{A_{2}}[\Lambda_{J}],L_{\tau}\right]U^{A_{1}}[\Lambda_{J}]\right\}_{2\leq m\leq n}\right\}
\\\notag+&\left\{\left[P_{J},L_{\tau}\right],
\left\{P_{J}U^{A_{2:m}}[\Lambda_{J}^{m-1}]\left[U^{A_{1}}[\Lambda_{J}],L_{\tau}\right]\right\}_{1\leq m\leq n}\right\}
\\\label{iterative3}=&L_{\tau}S_{J}^{A_{1:n}}\left[\Lambda_{J}^{0:n}\right]+\overset{n}{\underset{\ell=0}{\sum}} K_{\ell},
\end{align}
where
\begin{align}\notag
K_{0} = \left\{\left[P_{J},L_{\tau}\right]U^{A_{1:n}}[\Lambda_{J}^{n}]\right\},
\end{align}
and
\begin{align}\notag
K_{\ell} = \left\{\left[P_{J},L_{\tau}\right]U^{A_{1:(n-\ell)}}[\Lambda_{J}^{n-\ell}],
S_{J}^{A_{(n+2-\ell):n}}[\Lambda_{J}^{0:\ell-1}]\left[U^{A_{n+1-\ell}}[\Lambda_{J}],L_{\tau}\right]U^{A_{1:(n-\ell)}}[\Lambda_{J}^{n-\ell}]\right\}
\end{align}
for $\ell = 1,2,...,n.$
By default,
$S_{J}^{A_{(n+1):n}}[\Lambda_{J}^{0}] = P_{J}$ and $U^{A_{1:0}}[\Lambda_{J}^{0}] = \textup{Id}$.
From (\ref{iterative3}), we get
\begin{align}\notag
\left[S_{J}^{A_{1:n}}\left[\Lambda_{J}^{0:n}\right],L_{\tau}\right] X
= S_{J}^{A_{1:n}}\left[\Lambda_{J}^{0:n}\right]L_{\tau}X
-L_{\tau}S_{J}^{A_{1:n}}\left[\Lambda_{J}^{0:n}\right]X
=\overset{n}{\underset{\ell=0}{\sum}} K_{\ell}X.
\end{align}
By the inequality $\|a_{0}+a_{1}+\cdots+a_{n}\|^{2}\leq (n+1)(\|a_{0}\|^{2}+\|a_{1}\|^{2}+\cdots+\|a_{n}\|^{2})$
for vectors $a_{0},a_{1},...,a_{n}$ with the same dimension,
where $\|\cdot\|$ is the Euclidean norm,
\begin{align}\label{sum_norm_K}
\mathbb{E}\left[\|\left[S_{J}^{A_{1:n}}\left[\Lambda_{J}^{0,n}\right],L_{\tau}\right]X\|^{2}\right]
= \mathbb{E}\left[\|\overset{n}{\underset{\ell=0}{\sum}} K_{\ell}X\|^{2}\right]
\leq (n+1)\overset{n}{\underset{\ell=0}{\sum}}\mathbb{E}\left[\|K_{\ell}X\|^{2}\right].
\end{align}
From the definition of $K_{0}$, we have
\begin{align}\notag
\mathbb{E}\left[\|K_{0}X\|^{2}\right] \leq& \mathbb{E}\left[\|\left[P_{J},L_{\tau}\right]\|^{2}\right]
\mathbb{E}\left[\|U^{A_{1:n}}[\Lambda_{J}^{n}]X\|^{2}\right]
\\\notag\leq& \mathbb{E}\left[\|\left[P_{J},L_{\tau}\right]\|^{2}\right]  \mathbb{E}\left[|X|^{2}\right]
\\\label{EK_ellX=0} \leq& \mathbb{E}\left[\|\left[W_{J},L_{\tau}\right]\|^{2}\right]  \mathbb{E}\left[|X|^{2}\right],
\end{align}
where the first inequality follows from Lemma \ref{theorem_cite1} in \ref{sec:theorem_cite1} (see also \cite[Lemma 4.8 and the proof of Theorem 4.7]{mallat2012group})
and the second inequality is obtained by the non-expansiveness of the operator $U^{A_{1:n}}[\Lambda_{J}^{n}]$ (see (\ref{nonincreasing_U})).
The third inequality comes from the fact
$\left[W_{J},L_{\tau}\right] = \left\{\left[P_{J},L_{\tau}\right], ([W[j],L_{\tau}])_{j\in \Lambda_{J}}\right\}.$

For $\ell = 1,...,n$,
\begin{align}\notag
\mathbb{E}\left[\|K_{\ell}X\|^{2}\right] \leq&
\mathbb{E}\left[\|\left[P_{J},L_{\tau}\right]U^{A_{1:(n-\ell)}}[\Lambda_{J}^{n-\ell}]X\|^{2}\right]
\\\label{EK_ellX>0}&+\mathbb{E}\left[\|S_{J}^{A_{(n+2-\ell):n}}[\Lambda_{J}^{0:\ell-1}]
\left[U^{A_{n+1-\ell}}[\Lambda_{J}],L_{\tau}\right]U^{A_{1:(n-\ell)}}[\Lambda_{J}^{n-\ell}]X\|^{2}
\right].
\end{align}
For the first part in (\ref{EK_ellX>0}), Lemma \ref{theorem_cite1} in \ref{sec:theorem_cite1} implies that
\begin{align}\notag
\mathbb{E}\left[\|\left[P_{J},L_{\tau}\right]U^{A_{1:(n-\ell)}}[\Lambda_{J}^{n-\ell}]X\|^{2}\right]
\leq&
\mathbb{E}\left[\|\left[P_{J},L_{\tau}\right]\|^{2}\right]
\mathbb{E}\left[\|U^{A_{1:(n-\ell)}}[\Lambda_{J}^{n-\ell}]X\|^{2}\right]
\\\label{proof:S_firstpart}
\leq&
\mathbb{E}\left[\|\left[P_{J},L_{\tau}\right]\|^{2}\right]
\mathbb{E}\left[|X|^{2}\right],
\end{align}
where the last inequality follows from the non-expansiveness of $U^{A_{1:(n-\ell)}}[\Lambda_{J}^{n-\ell}]$ (see (\ref{nonincreasing_U})).
For the second part in (\ref{EK_ellX>0}),
\begin{align}\notag
&\mathbb{E}\left[\|S_{J}^{A_{(n+2-\ell):n}}[\Lambda_{J}^{0:\ell-1}]
\left[U^{A_{n+1-\ell}}[\Lambda_{J}],L_{\tau}\right]U^{A_{1:(n-\ell)}}[\Lambda_{J}^{n-\ell}]X\|^{2}
\right]
\\\notag\leq&
\mathbb{E}\left[\|
\left[U^{A_{n+1-\ell}}[\Lambda_{J}],L_{\tau}\right]U^{A_{1:(n-\ell)}}[\Lambda_{J}^{n-\ell}]X\|^{2}
\right]
\\\notag
\leq& \mathbb{E}\left[\|
\left[W[\Lambda_{J}],L_{\tau}\right]U^{A_{1:(n-\ell)}}[\Lambda_{J}^{n-\ell}]X\|^{2}
\right]
\\\notag\leq& \mathbb{E}\left[\|
\left[W[\Lambda_{J}],L_{\tau}\right]\|^{2}\right]
\mathbb{E}\left[\|U^{A_{1:(n-\ell)}}[\Lambda_{J}^{n-\ell}]X\|^{2}
\right]
\\\label{proof:S_secondpart}\leq& \mathbb{E}\left[\|
\left[W[\Lambda_{J}],L_{\tau}\right]\|^{2}\right]
\mathbb{E}\left[|X|^{2}
\right],
\end{align}
where the first and the fourth inequalities follow from the non-expansiveness of $S_{J}^{A_{(n+2-\ell):n}}[\Lambda_{J}^{0:\ell-1}]$
and $U^{A_{1:(n-\ell)}}[\Lambda_{J}^{n-\ell}]$ (see (\ref{nonexpansive_S}) and (\ref{nonincreasing_U})).
The second inequality follows from Lemma \ref{lemma:U<W} in \ref{sec:proof_lemma:U<W}.
The third inequality follows from Lemma \ref{theorem_cite1} in \ref{sec:theorem_cite1}.
Substituting (\ref{proof:S_firstpart}) and (\ref{proof:S_secondpart}) into (\ref{EK_ellX>0}) yields
\begin{align}\label{EK_ellX>0_v2}
\mathbb{E}\left[\|K_{\ell}X\|^{2}\right] \leq
\mathbb{E}\left[\|\left[W_{J},L_{\tau}\right]\|^{2}\right]
\mathbb{E}\left[|X|^{2}
\right],\ \ell=1,2,...,n.
\end{align}
By (\ref{EK_ellX=0}) and (\ref{EK_ellX>0_v2}), we can rewrite (\ref{sum_norm_K}) as follows
\begin{align*}
\mathbb{E}\left[\|\left[S_{J}^{A_{1:n}}\left[\Lambda_{J}^{0,n}\right],L_{\tau}\right]X\|^{2}\right]
\leq (n+1)^{2}
\mathbb{E}\left[\left[W_{J},L_{\tau}\right]\|^{2}\right]
\mathbb{E}\left[|X|^{2}\right].
\end{align*}
\qed

\section{}\label{sec:theorem_cite1}

\begin{Lemma}\label{theorem_cite1}(\cite[Lemma 4.8 and the proof of Theorem 4.7]{mallat2012group})
Let $K_{\tau}$ be an integral operator with a kernel $k_{\tau}(x,u)$
that depends upon a random process $\tau$. If the following two conditions are satisfied:
\begin{equation}\label{cond1}
E\left[k_{\tau}(x,u)k_{\tau}(x,u^{'})\right] = \overline{k}_{\tau}(x-u,x-u^{'})
\end{equation}
and
\begin{equation}\label{cond2}
\iint|\overline{k}_{\tau}(v,v^{'})|\ |v-v^{'}| dv\ dv^{'}<\infty,
\end{equation}
then for any stationary process $Y$ independent of $\tau$, $\mathbb{E}\left[|K_{\tau}Y(t)|^{2}\right]$ does not depend upon $t$ and
\begin{equation}
\mathbb{E}\left[|K_{\tau}Y|^{2}\right]\leq \mathbb{E}\left[\|K_{\tau}\|^{2}\right]\mathbb{E}\left[|Y|^{2}\right],
\end{equation}
where $\|K_{\tau}\|$ is the operator norm in $L^{2}(\mathbb{R})$ for each realization of $\tau$.
The conditions (\ref{cond1}) and (\ref{cond2}) are satisfied by the kernels of the commutators
$L_{\tau}P_{J}-P_{J}$, $[P_{J},L_{\tau}]$, $\{[W[j],L_{\tau}]\}_{j\in \Lambda_{J}}$, and $[W_{J},L_{\tau}]$.
\end{Lemma}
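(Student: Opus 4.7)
The plan is to prove the stationarity of the second moment, then the operator-norm bound, and finally to indicate how the commutator kernels are verified to satisfy the two hypotheses. The stationarity assertion follows directly from condition \eqref{cond1}. Since $Y$ is independent of $\tau$ and stationary, Fubini gives
\begin{align*}
\mathbb{E}[|K_\tau Y(t)|^{2}] = \iint \mathbb{E}[k_\tau(t,u)k_\tau(t,u')]\, R_Y(u-u')\, du\, du'.
\end{align*}
Applying \eqref{cond1} and substituting $v = t-u$, $v' = t-u'$ yields
\begin{align*}
\mathbb{E}[|K_\tau Y(t)|^{2}] = \iint \bar k_\tau(v,v')\, R_Y(v'-v)\, dv\, dv' =: \mathcal{E},
\end{align*}
which does not depend on $t$.

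For the inequality $\mathcal{E} \leq \mathbb{E}[\|K_\tau\|^{2}]\, \mathbb{E}[|Y|^{2}]$, the difficulty is that $Y$ is typically not in $L^2(\mathbb{R})$, so the operator norm inequality cannot be applied to $Y$ itself. I would localize by $Y_T := Y \cdot \mathbf{1}_{[-T,T]} \in L^2(\mathbb{R})$. For each realization of $\tau$,
\begin{align*}
\int |K_\tau Y_T(t)|^{2}\, dt \leq \|K_\tau\|^{2} \int |Y_T(t)|^{2}\, dt,
\end{align*}
and independence of $Y$ and $\tau$ converts the expectation of the right-hand side into $2T\, \mathbb{E}[\|K_\tau\|^{2}]\, \mathbb{E}[|Y|^{2}]$. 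Repeating the computation used for stationarity, and noting that the autocorrelation of $\mathbf{1}_{[-T,T]}$ is $g_T(w) = (2T-|w|)_+$, one obtains
\begin{align*}
\int \mathbb{E}[|K_\tau Y_T(t)|^{2}]\, dt = \iint \bar k_\tau(v,v')\, R_Y(v'-v)\, g_T(v-v')\, dv\, dv'.
\end{align*}
Splitting $g_T(w) = 2T - |w|$ on $\{|w| \leq 2T\}$, while bounding the tail $\{|v-v'| > 2T\}$ of $\mathcal{E}$ by $(2T)^{-1} R_Y(0) \iint |\bar k_\tau(v,v')|\, |v-v'|\, dv\, dv'$, shows via condition \eqref{cond2} that the left-hand side equals $2T\, \mathcal{E} + O(1)$. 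Dividing by $2T$ and sending $T \to \infty$ gives the desired inequality.

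The main obstacle I expect is precisely this truncation step: condition \eqref{cond2} is exactly the weighted integrability needed so that the boundary remainder stays $O(1)$ rather than growing with $T$, and one must take care to justify Fubini, to replace the truncated integral $\iint_{|v-v'| \leq 2T}$ by the full integral defining $\mathcal{E}$ up to an $O(1/T)$ correction, and to ensure that $\mathcal{E}$ is finite to begin with. Finally, to verify \eqref{cond1} and \eqref{cond2} for the listed commutator kernels such as $[P_J, L_\tau]$, $[W[j], L_\tau]$, and $[W_J, L_\tau]$, I would write each kernel explicitly (using the change of variable $w = u - \tau(u)$ for $L_\tau$), observe that stationarity of $\tau$ forces $\mathbb{E}[k(x,u)k(x,u')]$ to depend only on $x-u$ and $x-u'$, and then deduce the weighted $L^1$-bound \eqref{cond2} from the smoothness and decay of $\phi_J$ and $\psi_j$ together with the bound $\|\tau'\|_\infty \leq 1/2$. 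These estimates are those carried out in \cite{mallat2012group} for the scattering-transform case, and they transfer verbatim here because the commutator kernels in question do not involve the activation functions $A_k$.
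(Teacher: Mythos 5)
The paper never actually proves this lemma: in the appendix it is stated verbatim as a citation of Lemma 4.8 and the proof of Theorem 4.7 of \cite{mallat2012group}, and the verification of conditions (\ref{cond1})--(\ref{cond2}) for the listed commutator kernels is likewise inherited from that reference, so there is no internal proof to compare you against. Judged on its own, your argument is sound and gives a legitimate self-contained derivation of both conclusions. The stationarity computation is the expected one, and the truncation device is a clean way around the fact that a stationary $Y$ is not in $L^{2}(\mathbb{R})$: the pathwise operator-norm bound applied to $Y\mathbf{1}_{[-T,T]}$, the identity $\int\mathbb{E}\big[|K_\tau Y_T(t)|^{2}\big]dt=\iint\bar k_\tau(v,v')R_Y(v'-v)\,(2T-|v-v'|)_{+}\,dv\,dv'$, and the fact that condition (\ref{cond2}) bounds both the $-|v-v'|$ correction and the $|v-v'|>2T$ tail uniformly in $T$ combine, after dividing by $2T$ and letting $T\to\infty$, to give $\mathbb{E}[|K_\tau Y|^{2}]\le\mathbb{E}[\|K_\tau\|^{2}]\,\mathbb{E}[|Y|^{2}]$. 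This is a time-domain averaging argument, whereas the cited proof works in the frequency domain (through the power spectrum of $Y$ and the expected response to near-pure frequencies), with condition (\ref{cond2}) playing the analogous error-control role there; both routes use (\ref{cond2}) for exactly the same purpose, namely to control the error made when a non-$L^{2}$ stationary input is approximated by $L^{2}$ ones. The caveats you flag are the only soft spots: (\ref{cond1})--(\ref{cond2}) alone do not control $|\bar k_\tau|$ near the diagonal, so the absolute convergence needed for your Fubini interchanges and for the double integral representing $\mathbb{E}[|K_\tau Y(t)|^{2}]$ to be finite is an implicit regularity assumption; it holds for the smooth, rapidly decaying commutator kernels to which the lemma is applied, and the cited source makes the same tacit interchanges, so this is not a defect relative to the paper. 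Deferring the verification of (\ref{cond1})--(\ref{cond2}) for $[P_J,L_\tau]$, $\{[W[j],L_\tau]\}_{j\in\Lambda_J}$ and $[W_J,L_\tau]$ to \cite{mallat2012group} is also consistent with the paper's own treatment, and your observation that these kernels involve only the linear filtering and the deformation, not the activation functions $A_k$, is the correct reason the transfer is immediate.
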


\section{}\label{sec:theorem_cite3}
\begin{Lemma}\label{theorem_cite3}\cite[Lemma 2.11]{mallat2012group}
There exists $C>0$ such that for all $\tau\in \mathbf{C}^{2}(\mathbb{R})$
with $\|\tau^{'}\|_{\infty}\leq\frac{1}{2}$
we have
\begin{equation}
\|L_{\tau}P_{J}f-P_{J}f\|\leq C\|f\|_{L^{2}}2^{-J}\|\tau\|_{\infty}
\end{equation}
for all $f\in L^{2}(\mathbb{R}).$
\end{Lemma}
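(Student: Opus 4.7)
The plan is to reduce the claim to an $L^{2}$ bound on the derivative of $P_{J}f$, which is easy to control by Young's inequality, and to absorb the deformation by a change of variables that is legitimate precisely because $\|\tau'\|_{\infty}\leq 1/2$.

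First I would set $g:=P_{J}f=f\star\phi_{J}$ and observe that $(L_{\tau}P_{J}f-P_{J}f)(t)=g(t-\tau(t))-g(t)$. Since $\phi\in L^{1}$ and (as is standard for a father wavelet) one may assume $\phi'\in L^{1}$, $g$ is differentiable with $g'=f\star\phi_{J}'$, where $\phi_{J}'(u)=2^{-2J}\phi'(u/2^{J})$ and consequently $\|\phi_{J}'\|_{L^{1}}=2^{-J}\|\phi'\|_{L^{1}}$. By the fundamental theorem of calculus,
\[
g(t)-g(t-\tau(t))=\tau(t)\int_{0}^{1}g'\bigl(t-(1-s)\tau(t)\bigr)\,ds.
\]

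Next, squaring both sides and applying the Cauchy--Schwarz inequality in the $s$-variable yields
\[
\bigl|g(t)-g(t-\tau(t))\bigr|^{2}\leq\|\tau\|_{\infty}^{2}\int_{0}^{1}\bigl|g'\bigl(t-(1-s)\tau(t)\bigr)\bigr|^{2}ds.
\]
Integrating in $t$ and interchanging the order of integration reduces the problem to estimating $\int |g'(t-(1-s)\tau(t))|^{2}dt$ for each $s\in[0,1]$. For fixed $s$ I would perform the change of variable $u=t-(1-s)\tau(t)$: its Jacobian is $1-(1-s)\tau'(t)$, whose absolute value is at least $1/2$ by the hypothesis $\|\tau'\|_{\infty}\leq 1/2$, so the substitution is a bilipschitz diffeomorphism of $\mathbb{R}$ and produces an extra factor at most $2$.

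Combining the two steps gives $\|L_{\tau}P_{J}f-P_{J}f\|_{L^{2}}^{2}\leq 2\|\tau\|_{\infty}^{2}\|g'\|_{L^{2}}^{2}$. Finally, Young's convolution inequality yields $\|g'\|_{L^{2}}=\|f\star\phi_{J}'\|_{L^{2}}\leq\|\phi_{J}'\|_{L^{1}}\|f\|_{L^{2}}=2^{-J}\|\phi'\|_{L^{1}}\|f\|_{L^{2}}$, whence the lemma follows with $C=\sqrt{2}\,\|\phi'\|_{L^{1}}$. The only genuine obstacle is verifying the change of variables; this is exactly where the assumption $\|\tau'\|_{\infty}\leq 1/2$ is indispensable, since otherwise $t\mapsto t-(1-s)\tau(t)$ could fail to be a global diffeomorphism and the Jacobian control would break down. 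The $C^{2}$ regularity of $\tau$ assumed in the statement is not actually needed for this step, so the proof proceeds with Lipschitz $\tau$ only.
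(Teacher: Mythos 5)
Your argument is correct, and it is genuinely different from the route the paper takes: the paper does not prove this bound at all, but simply imports it as Lemma 2.11 of Mallat's scattering paper, where the estimate is obtained by writing $L_{\tau}P_{J}-P_{J}$ as an integral operator with kernel $\phi_{J}(t-\tau(t)-u)-\phi_{J}(t-u)$ and bounding its operator norm through kernel (Schur-type) estimates that exploit the decay of $\phi'$. Your proof replaces that machinery with an elementary chain: the fundamental theorem of calculus applied to $g=P_{J}f$, Cauchy--Schwarz in the interpolation parameter, the change of variables $u=t-(1-s)\tau(t)$ whose Jacobian $1-(1-s)\tau'(t)\in[\tfrac12,\tfrac32]$ is controlled exactly by $\|\tau'\|_{\infty}\le\tfrac12$, and Young's inequality giving $\|g'\|_{L^{2}}\le 2^{-J}\|\phi'\|_{L^{1}}\|f\|_{L^{2}}$; this yields the explicit constant $C=\sqrt{2}\,\|\phi'\|_{L^{1}}$ and, as you correctly note, needs only Lipschitz $\tau$ rather than $\mathbf{C}^{2}$ (the $\mathbf{C}^{2}$ hypothesis is only there because the surrounding deformation lemma involves $\tau''$). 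The one caveat you should make explicit is the regularity of $\phi$: the present paper only assumes $\phi\in L^{1}$ with $\int\phi=1$ and the Littlewood--Paley condition, so the existence of $\phi'\in L^{1}$ (and enough regularity/decay to justify $g'=f\star\phi_{J}'$, e.g.\ $\phi\in C^{1}$ with $|\phi'(t)|\lesssim(1+|t|)^{-2}$ as in Mallat) is an additional standing hypothesis inherited from the cited source rather than something stated here; with that assumption spelled out, your proof is complete. What your approach buys is a short, self-contained argument with an explicit constant; what the kernel-estimate approach buys is a framework (the lemma on operators with kernels depending on $\tau$) that the paper reuses for the other commutator bounds such as $[W_{J},L_{\tau}]$, which your FTC trick does not directly cover.
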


\section{}\label{sec:proof_lemma:U<W}
\begin{Lemma}\label{lemma:U<W}
If the activation function/operator $A$ is Lipschitz continuous with Lipschitz constant $L_{A}\leq 1$ and $A(0)=0$, then for any strictly stationary process $X$ with $\mathbb{E}[|X|^{2}]<\infty$,
\begin{equation*}
\mathbb{E}\left[\|[U^{A}_{J},L_{\tau}]X\|^{2}\right] \leq \mathbb{E}\left[\|[W_{J},L_{\tau}]X\|^{2}\right],
\end{equation*}
where $W_{J}X = \{P_{J}X,\left(W[j]X\right)_{j\in \Lambda_{J}}\}$
and the expectations above are defined by
\begin{equation*}
\mathbb{E}\left[\|[U^{A}_{J},L_{\tau}]X\|^{2}\right] = \mathbb{E}\left[|[P_{J},L_{\tau}]X|^{2}\right]+\underset{j\in \Lambda_{J}}{\sum}\mathbb{E}\left[|[AW[j],L_{\tau}]X|^{2}\right]
\end{equation*}
and
\begin{equation*}
\mathbb{E}\left[\|[W_{J},L_{\tau}]X\|^{2}\right] = \mathbb{E}\left[|[P_{J},L_{\tau}]X|^{2}\right]+\underset{j\in \Lambda_{J}}{\sum}\mathbb{E}\left[|[W[j],L_{\tau}]X|^{2}\right].
\end{equation*}
\end{Lemma}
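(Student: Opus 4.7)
The plan is to unfold both norms componentwise and exploit the pointwise nature of the activation function $A$ to reduce everything to a term-by-term comparison.

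First I would write out the commutators explicitly. By the definition of $U^A_J X = \{P_J X, (A(W[j]X))_{j \in \Lambda_J}\}$ and $W_J X = \{P_J X, (W[j]X)_{j \in \Lambda_J}\}$, the first components of $[U^A_J, L_\tau]X$ and $[W_J, L_\tau]X$ are identical, namely $[P_J, L_\tau]X$. Thus these two contributions cancel on both sides of the desired inequality, and the problem reduces to showing
\[
\sum_{j\in\Lambda_J}\mathbb{E}\big[\,|A(W[j]L_\tau X) - L_\tau A(W[j]X)|^{2}\,\big]
\leq \sum_{j\in\Lambda_J}\mathbb{E}\big[\,|W[j]L_\tau X - L_\tau W[j]X|^{2}\,\big].
\]

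The key observation that I would use next is that the warping operator $L_\tau$ commutes with any pointwise nonlinearity. Indeed, for any function $f$ and any $t\in\mathbb{R}$, $L_\tau(A\circ f)(t) = A(f)(t-\tau(t)) = A\big(f(t-\tau(t))\big) = A(L_\tau f)(t)$, so $L_\tau A(W[j]X) = A(L_\tau W[j]X)$. Substituting this into the summand on the left gives
\[
A(W[j]L_\tau X) - L_\tau A(W[j]X) = A(W[j]L_\tau X) - A(L_\tau W[j]X).
\]
At this point the hypothesis that $A$ is Lipschitz with constant $L_A\leq 1$ is applied pointwise in $t$:
\[
\big|A(W[j]L_\tau X)(t) - A(L_\tau W[j]X)(t)\big|^{2} \leq \big|W[j]L_\tau X(t) - L_\tau W[j]X(t)\big|^{2} = \big|[W[j],L_\tau]X(t)\big|^{2}.
\]
Taking expectations (using stationarity of $X$ and independence of $X$ and $\tau$ so that $\mathbb{E}[|\cdot|^2]$ is well-defined and independent of $t$, as in Lemma~\ref{theorem_cite1}) and summing over $j\in\Lambda_J$ yields the required bound; adding back the common $[P_J,L_\tau]X$ component gives the inequality in the statement.

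There is essentially no obstacle here: the argument is elementary once one notices that $A$ is a pointwise operation and therefore trivially commutes with the time-substitution $L_\tau$. The only thing to be a little careful about is that $A(0)=0$ is actually not needed for this particular estimate (the zero would be needed for Lemma~\ref{thm:nonexpansive}, where one must control $\mathbb{E}[|A(W[j]X)|^2]$ by $\mathbb{E}[|W[j]X|^2]$ without any baseline); here only the Lipschitz-with-constant-$\leq 1$ hypothesis is used, and only through the pointwise inequality $|A(a)-A(b)|\leq|a-b|$.
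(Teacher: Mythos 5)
Your proof is correct and takes essentially the same route as the paper's: both identify the common first component $[P_J,L_\tau]X$, use the commutation $L_\tau A = A L_\tau$ of the pointwise nonlinearity with the time-warping operator, and then apply the Lipschitz bound with $L_A\le 1$ to the wavelet components. Your side remark that $A(0)=0$ is not needed here is also accurate when the estimate is written as $|A(a)-A(b)|\le |a-b|$; the paper invokes $A(0)=0$ only because it expresses the second component as $A$ applied to the commutator difference.
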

\begin{proof}
For any stationary process $X$,
\begin{align}\notag
[U^{A}_{J},L_{\tau}]X =& U^{A}_{J}L_{\tau}X-L_{\tau}U^{A}_{J}X
\\\notag=&
\left\{P_{J}L_{\tau}X,\left(AW[j]L_{\tau}X\right)_{j\in\Lambda_{J}}\right\}-
L_{\tau}\left\{P_{J}X,\left(AW[j]X\right)_{j\in\Lambda_{J}}\right\}
\\\notag=&
\left\{\left[P_{J},L_{\tau}\right]X,A\left(W[j]L_{\tau}X-L_{\tau}W[j]X\right)_{j\in\Lambda_{J}}
\right\},
\end{align}
where the last equality follows from $L_{\tau}A = AL_{\tau}$. Hence,
\begin{align}\notag
\mathbb{E}\left[\|[U^{A}_{J},L_{\tau}]X\|^{2}\right] =& \mathbb{E}\left[|\left[P_{J},L_{\tau}\right]X|^{2}\right]
+\underset{j\in\Lambda_{J}}{\sum}\mathbb{E}\left[|A\left(W[j]L_{\tau}X-L_{\tau}W[j]X\right)|^{2}\right]
\\\notag\leq&
\mathbb{E}\left[|\left[P_{J},L_{\tau}\right]X|^{2}\right]
+\underset{j\in\Lambda_{J}}{\sum}\mathbb{E}\left[|W[j]L_{\tau}X-L_{\tau}W[j]X|^{2}\right]
\\\notag=&\mathbb{E}\left[\|[W_{J},L_{\tau}]X\|^{2}\right],
\end{align}
where the first inequality follows from the Lipschitz continuity of $A$ with $L_{A}\leq1$ and $A(0)=0$.
\end{proof}

\section{Delta Method}\label{Lemma:Delta_method}
\begin{Lemma} For any $M\in \mathbb{N}$,
let $\mathbf{U}_{j} = (U^{(1)}_{j},\cdots, U^{(M)}_{j})$ be a sequence of random vectors, where $j\in \mathbb{N}$.
Suppose that there exists a normalizer $N_{j}$ and a random vector $\mathbf{Z}$ such that
\begin{equation}
N_{j}\mathbf{U}_{j}\overset{d}{\Rightarrow} \mathbf{Z}
\end{equation}
when $j\rightarrow\infty$. Denote $\mathcal{A}_{2}(\mathbf{x}) = (A_{2}(x_{1}),\cdots, A_{2}(x_{M}))$ for $\mathbf{x}= (x_{1},\ldots,x_{M})\in \mathbb{R}^{M}.$ If $A_{2}$ satisfies Assumption \ref{Assumption:4.1:A2:differentiable}, then we have
\begin{equation}
N_{j}\left(\mathcal{A}_{2}(\mathbf{U}_{j})-\mathcal{A}_{2}(\mathbf{0})\right) \overset{d}{\Rightarrow} A_{2}^{'}(0)\mathbf{Z}.
\end{equation}
when $j\rightarrow\infty$.
\end{Lemma}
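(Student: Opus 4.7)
The plan is to reduce the statement to the classical one-dimensional delta method applied componentwise, combined with Slutsky's theorem to handle the joint convergence. First, since the context implicitly requires $N_j\to\infty$ (as in the applications $N_j=2^{j_2/2}$ or $N_j=2^{j_2(2\alpha+\beta)\mathbf{r}/2}$), tightness of $N_j\mathbf{U}_j$ together with $1/N_j\to 0$ immediately yields $\mathbf{U}_j\to \mathbf{0}$ in probability via Slutsky's lemma.

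The core of the argument is a first-order Taylor expansion around the origin. Since $A_2$ is differentiable at $0$, I would write
\begin{equation*}
A_2(x)=A_2(0)+A_2'(0)\,x+r(x),\qquad \tilde r(x):=\begin{cases}r(x)/x,& x\neq 0,\\ 0,& x=0,\end{cases}
\end{equation*}
so that $\tilde r$ is continuous at $0$ with $\tilde r(0)=0$. Applied componentwise, this gives for each $k\in\{1,\dots,M\}$
\begin{equation*}
N_j\bigl(A_2(U^{(k)}_j)-A_2(0)\bigr)=A_2'(0)\,N_j U^{(k)}_j+\bigl(N_j U^{(k)}_j\bigr)\,\tilde r(U^{(k)}_j).
\end{equation*}
The first summand, stacked over $k$, converges in distribution to $A_2'(0)\mathbf{Z}$ by hypothesis. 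For the remainder, $N_j U^{(k)}_j$ is bounded in probability (being weakly convergent), while $\tilde r(U^{(k)}_j)\to 0$ in probability by the continuous mapping theorem applied to the continuous-at-$0$ function $\tilde r$ and the fact that $U^{(k)}_j\to 0$ in probability. Hence the remainder vector is $o_P(1)$.

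Combining the two pieces by Slutsky's theorem (a sum of a weakly convergent sequence and an $o_P(1)$ sequence converges to the weak limit) yields the claimed joint convergence
\begin{equation*}
N_j\bigl(\mathcal{A}_2(\mathbf{U}_j)-\mathcal{A}_2(\mathbf{0})\bigr)\overset{d}{\Rightarrow}A_2'(0)\mathbf{Z}.
\end{equation*}
The only minor subtlety is the careful definition of $\tilde r$ at the origin so that the product $(N_j U^{(k)}_j)\tilde r(U^{(k)}_j)$ is well defined on the whole probability space; once this is in place, no further delicate estimates are needed, and this lemma is essentially a bookkeeping version of the standard delta method adapted to the normalizing scalars $N_j$ rather than the usual $\sqrt{n}$.
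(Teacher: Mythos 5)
Your proof is correct and is essentially the argument the paper implicitly relies on: the lemma is stated without proof, citing the standard delta method (DasGupta, Theorem 3.7; Resnick, Section 8.3.1), and your Taylor-expansion-plus-Slutsky argument with the remainder function $\tilde r$ continuous at $0$ is exactly that textbook proof, carried out componentwise and combined with the continuous mapping theorem for the linear map $\mathbf{x}\mapsto A_{2}^{'}(0)\mathbf{x}$. Your observation that the statement tacitly requires $N_{j}\rightarrow\infty$ (so that $\mathbf{U}_{j}\rightarrow\mathbf{0}$ in probability) is a fair and correct reading of the hypotheses, since all of the paper's applications use $N_{j}=2^{j_{2}/2}$ or $2^{j_{2}(2\alpha+\beta)\mathbf{r}/2}$, which diverge.
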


\section{}
\begin{Lemma}\label{lemma:1}
Let $f: \mathbb{R}\rightarrow [0,\infty)$ be the spectral density function of a second-order stationary random process.
If it has the form
\begin{equation*}
f(\lambda)=\frac{B(\lambda)}{|\lambda|^{1-\gamma}},\ \gamma\in (0,1),
\end{equation*}
for some non-negative bounded function $B$,
then for any $\ell\geq 2$,
there exists a bounded function $B_{\ell}$ such that the
$\ell$-fold convolution $f^{\star \ell}$ of $f$ can be expresses as
\begin{equation}\label{singularconvolution1}
f^{\star \ell}(\lambda)=
\left\{\begin{array}{lr}
B_{\ell}(\lambda)|\lambda|^{\ell\gamma-1},\ \ & \textup{if } \ell\gamma<1,
\\
B_{\ell}(\lambda)\textup{ln}(2+\frac{1}{|\lambda|}),\ \ & \textup{if } \ell\gamma=1,
\\
B_{\ell}(\lambda),\ \ & \textup{if } \ell\gamma> 1.
\end{array}
\right.
\end{equation}
Moreover, for the case $\ell\gamma> 1$, the function $B_{\ell}$ is continuous.
\end{Lemma}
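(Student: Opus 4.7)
The plan is to proceed by induction on $\ell$, handling the three regimes of behavior simultaneously. The base case $\ell=1$ is immediate with $B_1 = B$. For the inductive step I would write
\[
f^{\star \ell}(\lambda) = (f^{\star(\ell-1)} \star f)(\lambda) = \int_{\mathbb{R}} f^{\star(\ell-1)}(\lambda - \mu)\, f(\mu)\, d\mu,
\]
substitute the inductive form of $f^{\star(\ell-1)}$, and reduce the computation to convolutions of singular power functions. The central analytic ingredient is the Riesz composition (Beta-integral) identity
\[
\int_{\mathbb{R}} |\lambda - \mu|^{a-1} |\mu|^{b-1} d\mu = c_{a,b}\,|\lambda|^{a+b-1}, \qquad a,b>0,\ a+b<1,
\]
together with its borderline variant at $a+b=1$ producing $\log(2+|\lambda|^{-1})$, and its supercritical variant $a+b>1$ in which the locally improper integral is uniformly bounded in $\lambda$.

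In the case $\ell\gamma<1$ we necessarily have $(\ell-1)\gamma<1$, so by the inductive hypothesis both factors are power-type near the origin, and the Riesz identity with $a=(\ell-1)\gamma$ and $b=\gamma$ produces the advertised $|\lambda|^{\ell\gamma-1}$ local behavior; the prefactor $B_\ell$ arises as a bounded convolution-type average of the amplitudes $B_{\ell-1}$ and $B$ (modulo the smooth Riesz constant). The borderline case $\ell\gamma=1$ is handled identically with the logarithmic form of the Riesz integral. In the case $\ell\gamma>1$ I would split into subcases according to whether $(\ell-1)\gamma$ is less than, equal to, or greater than $1$: in the first two subcases the supercritical version of Riesz yields boundedness of the local part, while in the third subcase $f^{\star(\ell-1)}$ is already bounded by the inductive hypothesis and convolution with the integrable $f$ trivially yields a bounded output.

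To control the tails $|\mu|\to\infty$ (and $|\lambda-\mu|\to\infty$) in each case, I would split the integral around a ball of radius $R$ and use that $f\in L^{1}(\mathbb{R})$, which is automatic because $f$ is the spectral density of a stationary process with $R_X(0)<\infty$; the outer part is then dominated by $\|B_{\ell-1}\|_\infty \|f\|_{L^1}$ or its symmetric analogue, which gives the boundedness of the cofactor $B_\ell$. For the continuity claim in the case $\ell\gamma>1$ I would invoke the convolution theorem to represent $f^{\star \ell}$ as the Fourier transform of $[R(t)]^\ell$, where $R$ is the covariance function associated with $f$; since $\ell\gamma>1$ forces $R^\ell \in L^1(\mathbb{R})$ by Young/Hausdorff--Young considerations, continuity follows from the Riemann--Lebesgue lemma. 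The main obstacle I expect is the careful bookkeeping in case 3, namely tracking that the bounded factor $B_\ell$ remains bounded uniformly across the induction despite the subcase transitions at $(\ell-1)\gamma=1$, and correctly localizing the Riesz estimates away from the origin where only a proper cutoff integral is needed.
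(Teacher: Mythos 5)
Your proposal is correct in substance and, for the subcritical and critical regimes, is essentially the paper's own argument: the paper also iterates the Riesz composition formula pairwise when $\ell\gamma<1$, and when $\ell\gamma=1$ it performs exactly the local/intermediate/tail splitting you describe, with the $\ln(2+\frac{1}{|\lambda|})$ coming from the truncated critical integral and the tail controlled by $\|B\|_{\infty}$ together with $f\in L^{1}$ --- note that the untruncated Riesz identity with $a+b=1$ diverges at infinity, so your ``borderline variant'' must indeed be the cutoff version you allude to. Where you genuinely diverge is the supercritical case $\ell\gamma>1$: the paper picks conjugate exponents $p_{1},p_{2}$ with $p_{1}(1-\gamma_{1})=p_{2}(1-\gamma_{2})<1$, so that $f^{\star(\ell-1)}\in L^{p_{1}}$ and $f\in L^{p_{2}}$, and then H\"older plus the standard fact that a convolution of dual-exponent $L^{p}$ functions is bounded and uniformly continuous (Folland, Proposition 8.8, as cited in the paper) yields boundedness and continuity in one stroke; you instead obtain boundedness from a truncated supercritical Riesz estimate plus an $L^{1}$ tail, and continuity from the representation $f^{\star\ell}(\lambda)=\frac{1}{2\pi}\int e^{-i\lambda t}R^{\ell}(t)\,dt$ with $R^{\ell}\in L^{1}$ via Hausdorff--Young and Riemann--Lebesgue. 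That step does work quantitatively, since $\ell\gamma>1$ is precisely the condition that $|\lambda|^{-(1-\gamma)}\in L^{\ell/(\ell-1)}$ near the origin while $f$ is bounded and integrable away from it. The one point to tighten is that Fourier inversion identifies the pointwise-defined convolution $f^{\star\ell}$ with a continuous function only almost everywhere, whereas the lemma is later used pointwise (e.g., $f^{\star\ell}(2^{-j_{2}}\eta)\rightarrow f^{\star\ell}(0)$); either add the H\"older/translation-continuity argument --- which your $L^{p}$ bookkeeping already makes available --- or justify the everywhere identification. With that patch the two proofs deliver the same conclusion: the paper's duality route is slightly leaner, while yours makes the threshold $\ell\gamma>1$ transparent as a local integrability condition on $f$.
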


\begin{proof}
We consider three cases.

\noindent{\bf Case 1: $\ell\gamma<1$.}
The result for this case follows from using the following observation iteratively.
Consider two spectral densities $f_{1}$ and $f_{2}$, which have the form
\begin{equation}\label{proofsingularconvolution2}
f_{j}(\lambda)=\frac{K_{j}(\lambda)}{|\lambda|^{1-\gamma_{j}}},\
\gamma_{j}\in(0,1),\ j=1,2,
\end{equation}
where $\gamma_{1}+\gamma_{2}<1$ and $K_{1}(\lambda)$ and $K_{2}(\lambda)$ are nonnegative and bounded functions so that
\begin{equation}\label{appendix:L1_spectral}
f_{1},\ f_{2}\in L^{1}(\mathbb{R}).
\end{equation}
Riesz's composition formula \cite[p. 71]{du1970introduction} implies that for any $\lambda\neq0$,
\begin{align}\notag
f_{1}\star f_{2}(\lambda)=&\,
\int_{\mathbb{R}}
\frac{K_{1}(\lambda-\eta)}{|\lambda-\eta|^{1-\gamma_{1}}}\frac{K_{2}(\eta)}{|\eta|^{1-\gamma_{2}}}d\eta
\leq \| K_{1}\|_{\infty}
\| K_{2}\|_{\infty}
\int_{\mathbb{R}}
\frac{1}{|\lambda-\eta|^{1-\gamma_{1}}}\frac{1}{|\eta|^{1-\gamma_{2}}}d\eta
\\\label{proofsingularconvolution_part3}=&\,
\| K_{1}\|_{\infty}
\| K_{2}\|_{\infty}
\frac{\Gamma(\frac{\gamma_{1}}{2})\Gamma(\frac{\gamma_{2}}{2})\Gamma(\frac{1-\gamma_{1}-\gamma_{2}}{2})}{\Gamma(\frac{1-\gamma_{1}}{2})\Gamma(\frac{1-\gamma_{2}}{2})\Gamma(\frac{\gamma_{1}+\gamma_{2}}{2})}
|\lambda|^{\gamma_{1}+\gamma_{2}-1}.
\end{align}

\noindent{\bf Case 2: $\ell\gamma=1.$}
The idea of the following proof comes from \cite[p.115, Theorem
3]{smirnov2014course} and \cite[p.160, Theorem 8.8]{krasnosel1976integral}.
Let $f_{1}=f^{\star (\ell-1)}$ and $f_{2}=f$. From the result of Case 1, both $f_{1}$ and $f_{2}$ have the form (\ref{proofsingularconvolution2}) with
$\gamma_{1}=(\ell-1)\gamma\in(0,1)$, $\gamma_{2}=\gamma\in(0,1)$, and $\gamma_{1}+\gamma_{2}=1$.
For any $\lambda\neq0$, denote $\widehat{\lambda} = \lambda/|\lambda|$. Note that by a change of variable we have
\begin{align}\notag
f^{\star\ell}(\lambda) = f_{1}\star f_{2}(\lambda)
=&\,
\int_{\mathbb{R}}\frac{K_{1}(\lambda-\eta)K_{2}(\eta)}
{|\lambda-\eta|^{1-\gamma_{1}}|\eta|^{1-\gamma_{2}}}d\eta
=
\big(\int_{\{|\eta|<2\}}+\int_{|\eta|>2}\big)
\frac{K_{1}(|\lambda|(\widehat{\lambda}-\eta))K_{2}(|\lambda|\eta)}
{|\widehat{\lambda}-\eta|^{1-\gamma_{1}}|\eta|^{1-\gamma_{2}}}d\eta
\\\notag=:&\,J_{1}(\lambda)+J_{2}(\lambda).
\end{align}
By a direct calculation, we have
\begin{equation}\label{appendixlog1}
J_{1}(\lambda)\leq
\int_{\{|\eta|<2\}}
\frac{\parallel \hspace{-0.15cm}K_{1}\hspace{-0.15cm}\parallel_{\infty}
\parallel \hspace{-0.15cm}K_{2}\hspace{-0.15cm}\parallel_{\infty}
}
{|\widehat{\lambda}-\eta|^{1-\gamma_{1}}|\eta|^{1-\gamma_{2}}}d\eta
\leq \max_{|x|=1}
\int_{\{|\eta|<2\}}
\frac{\parallel \hspace{-0.15cm}K_{1}\hspace{-0.15cm}\parallel_{\infty}
\parallel \hspace{-0.15cm}K_{2}\hspace{-0.15cm}\parallel_{\infty}
}
{|x-\eta|^{1-\gamma_{1}}|\eta|^{1-\gamma_{2}}}d\eta
<\infty,
\end{equation}
where the last integral is independent of $\lambda$. For $J_2$, we have
\begin{align*}
J_{2}(\lambda)=
\big(\hspace{-0.2cm}
\underset{{\{2<|\eta|<2(2+\frac{1}{|\lambda|})\}}}{\int}
\hspace{-0.2cm}+\hspace{-0.2cm}\underset{\{|\eta|>2(2+\frac{1}{|\lambda|})\}}{\int}
\hspace{-0.2cm}\big)
\frac{K_{1}(|\lambda|(\widehat{\lambda}-\eta))K_{2}(|\lambda|\eta)}
{|\widehat{\lambda}-\eta|^{1-\gamma_{1}}|\eta|^{1-\gamma_{2}}}d\eta
=J_{2,1}(\lambda)+J_{2,2}(\lambda).
\end{align*}
Because $|\widehat{\lambda}-\eta|\geq |\eta|-1$ and $\gamma_{1}+\gamma_{2}=1$,
\begin{align}\notag
J_{2,1}(\lambda_{0})\leq
\underset{{\{2<|\eta|<2(2+\frac{1}{|\lambda|})\}}}{\int}
\frac{\parallel \hspace{-0.15cm}K_{1}\hspace{-0.15cm}\parallel_{\infty}
\parallel \hspace{-0.15cm}K_{2}\hspace{-0.15cm}\parallel_{\infty}
d\eta}{(|\eta|-1)^{1-\gamma_{1}}|\eta|^{1-\gamma_{2}}}
=
2\int_{2}^{2(2+\frac{1}{|\lambda|})}
\frac{\parallel \hspace{-0.15cm}K_{1}\hspace{-0.15cm}\parallel_{\infty}
\parallel \hspace{-0.15cm}K_{2}\hspace{-0.15cm}\parallel_{\infty}
dr}{(r-1)^{1-\gamma_{1}}r^{1-\gamma_{2}}}
\\\label{appendixlog2}
=2
\parallel \hspace{-0.15cm}K_{1}\hspace{-0.15cm}\parallel_{\infty}
\parallel \hspace{-0.15cm}K_{2}\hspace{-0.15cm}\parallel_{\infty}
\int_{2}^{2(2+\frac{1}{|\lambda|})}
\hspace{-0.2cm}\frac{dr}{(1-\frac{1}{r})^{1-\gamma_{1}}r}
\leq
2^{2-\gamma_{1}}
\parallel \hspace{-0.15cm}K_{1}\hspace{-0.15cm}\parallel_{\infty}
\parallel \hspace{-0.15cm}K_{2}\hspace{-0.15cm}\parallel_{\infty}
\textup{ln}(2+\frac{1}{|\lambda|}).
\end{align}
On the other hand,
by substituting  $\eta=\tau/|\lambda|$
and using the inequality
$|\lambda-\tau|\geq |\tau|-|\lambda|\geq 2(1+2|\lambda|)-|\lambda|\geq 2$
for $|\tau|>2(1+2|\lambda|)$, we have
\begin{equation*}
J_{2,2}(\lambda)=\hspace{-0.15cm}
\underset{\{|\tau|>2(1+2|\lambda|)\}}{\int}
\hspace{-0.2cm}\frac{K_{1}(\lambda-\tau)K_{2}(\tau)}
{|\lambda-\tau|^{1-\gamma_{1}}|\tau|^{1-\gamma_{2}}}d\tau
\leq
\frac{\parallel\hspace{-0.15cm}K_{1}\hspace{-0.15cm}\parallel_{\infty}}{2^{1-\gamma_{1}}}
\hspace{-0.22cm}\underset{\{|\tau|>2(1+2|\lambda|)\}}{\int}\hspace{-0.2cm}
\frac{K_{2}(\tau)}
{|\tau|^{1-\gamma_{2}}}d\tau
\leq
\frac{\parallel\hspace{-0.15cm}K_{1}\hspace{-0.15cm}\parallel_{\infty}}{2^{1-\gamma_{1}}}\|f_{2}\|_{L^{1}}.
\end{equation*}
The last estimation, together with (\ref{appendixlog1}) and (\ref{appendixlog2}),
implies that
there exists a bounded function $B$
such that
$f^{\star\ell}(\lambda)=
B(\lambda)\textup{ln}(2+\frac{1}{|\lambda|})$.
\\
{\bf Case 3: $\ell\gamma>1.$}
We divide this case into two subcases as follows.

{\it Case 3.1: $(\ell-1)\gamma<1$ and $\ell\gamma>1$.}
Let $f_{1}=f^{\star (\ell-1)}$ and $f_{2}=f$.
Under this situation, by Case 1, $f_{1}$ and $f_{2}$ have the form (\ref{proofsingularconvolution2})
with $\gamma_{1}=(\ell-1)\gamma\in(0,1)$, $\gamma_{2}=\gamma\in(0,1)$, and
$\gamma_{1}+\gamma_{2}=\ell\gamma>1$.
Define $p_{1} = \frac{(1-\gamma_{1})+(1-\gamma_{2})}{(1-\gamma_{1})}\in(1,\infty)$ and
$p_{2} = \frac{(1-\gamma_{1})+(1-\gamma_{2})}{(1-\gamma_{2})}\in(1,\infty)$, which satisfy
$p_{1}(1-\gamma_{1})= p_{2}(1-\gamma_{2})<1$ and $\frac{1}{p_{1}}+\frac{1}{p_{2}}=1$.
Because $f_{1}\in L^{p_{1}}$ and $f_{2}\in L^{p_{2}}$,
by H$\ddot{\textup{o}}$lder's inequality,
$f_{1}\star f_{2}(\lambda)\leq \parallel\hspace{-0.12cm}f_{1}\hspace{-0.12cm}\parallel_{L^{p_{1}}}
\parallel\hspace{-0.12cm}f_{2}\hspace{-0.12cm}\parallel_{L^{p_{2}}}$ for any $\lambda\in \mathbb{R}$.
It implies the boundedness of $f^{\star\ell}$. Moreover, by \cite[Proposition 8.8]{folland1999real},
$f^{\star\ell}$ is uniformly continuous.

{\it Case 3.2: $(\ell-1)\gamma=1$ and $\ell\gamma>1$.}
Under this situation, the result of Case 2 implies that there exists a positive and bounded function $K$ such that
\begin{equation}\label{lemma_case3.2}
f^{\star (\ell-1)}(\cdot) = K(\cdot)\textup{ln}(2+\frac{1}{|\cdot|})\in L^{1}(\mathbb{R})
\end{equation}
Note that we can select $p_{1}\in(1,\infty)$ such that $f\in L^{p_{1}}$. Define $p_{2}$ by $\frac{1}{p_{1}}+\frac{1}{p_{2}}=1$.
From (\ref{lemma_case3.2}), we know that the behavior of $f^{\star (\ell-1)}(\lambda)$ as $\lambda\rightarrow0$ looks like
$\textup{ln}(1/|\lambda|)$. Hence, $f^{\star (\ell-1)} \in L^{p_{2}}$. By H$\ddot{\textup{o}}$lder's inequality,
$f^{\star (\ell-1)}\star f(\lambda)\leq \parallel\hspace{-0.12cm}f^{\star (\ell-1)}\hspace{-0.12cm}\parallel_{L^{p_{2}}}
\parallel\hspace{-0.12cm}f\hspace{-0.12cm}\parallel_{L^{p_{1}}}$ for any $\lambda\in \mathbb{R}$.
It implies the boundedness and continuity of $f^{\star\ell}$ \cite[Proposition 8.8]{folland1999real}.

\end{proof}

\section{}
\begin{Lemma}\label{lemma:2}
The series representation (\ref{thm1:kappa}) for the constant $\kappa$ in Theorem \ref{thm1:A_gaussian} can be expressed in terms of the integral of the covariance function
of $A_{1}(X\star \psi_{j_{1}})$ as follows
\begin{equation*}
\kappa = \left[\overset{\infty}{\underset{\ell=\mathbf{r}}{\sum}}C_{\sigma_{j_{1}},\ell}^{2}\sigma_{j_{1}}^{-2\ell}
f_{X\star\psi_{j_{1}}}^{\star\ell}(0)\right]^{\frac{1}{2}} = \left[\frac{1}{2\pi}\int_{\mathbb{R}}R_{A_{1}(X\star \psi_{j_{1}})}(t)dt\right]^{\frac{1}{2}}.
\end{equation*}
\end{Lemma}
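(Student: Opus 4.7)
The plan is to compute $\int_{\mathbb{R}} R_{A_1(X\star\psi_{j_1})}(t)\,dt$ directly using the Hermite polynomial expansion of $A_1(\sigma_{j_1}\,\cdot)$ and then invoke Fourier inversion to recognize the resulting series as $2\pi$ times the series representation of $\kappa^2$.

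First I would set $Y_{j_1}(t) := \sigma_{j_1}^{-1} X\star\psi_{j_1}(t)$, which is a zero-mean stationary Gaussian process with unit variance, and expand
\[
A_1(X\star\psi_{j_1}(t)) = A_1(\sigma_{j_1}Y_{j_1}(t)) = \sum_{\ell=\mathbf{r}}^{\infty}\frac{C_{\sigma_{j_1},\ell}}{\sqrt{\ell!}} H_\ell(Y_{j_1}(t))
\]
via (\ref{hermiteexpansion}). Next I would compute the covariance function term by term using the orthogonality relation (\ref{expectionhermite}): the cross terms with $\ell_1 \ne \ell_2$ vanish, giving
\[
R_{A_1(X\star\psi_{j_1})}(t) = \sum_{\ell=\mathbf{r}}^{\infty} C_{\sigma_{j_1},\ell}^{2}\, R_{Y_{j_1}}^{\ell}(t) = \sum_{\ell=\mathbf{r}}^{\infty} C_{\sigma_{j_1},\ell}^{2}\, \sigma_{j_1}^{-2\ell}\, R_{X\star\psi_{j_1}}^{\ell}(t).
\]

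Then I would integrate both sides against $dt$ and push the integral inside the sum (justified by Tonelli, since $R_{Y_{j_1}}^{2m}(t) \ge 0$ and for the odd powers one can use $|R_{Y_{j_1}}^{2m+1}(t)| \le R_{Y_{j_1}}^{2m}(t)$ together with $\sum_\ell C_{\sigma_{j_1},\ell}^2 < \infty$ from Assumption \ref{Assumption:3:Hermite}; alternatively one invokes the dominated convergence argument analogous to the one used to bound $\mathbb E[\xi_{j_2,>N}^2]$ in the proof of Theorem \ref{thm1:A_gaussian}). Applying the Fourier--Bochner identity (\ref{relation_power_cov_spectral}) and the fact that $\int_{\mathbb{R}} e^{i\eta t}\,dt$ acts as $2\pi\delta(\eta)$ (formally, by Fourier inversion applied to the continuous integrable density $f_{X\star\psi_{j_1}}^{\star\ell}$ whose continuity at the origin follows from Lemma~\ref{lemma:1}), we obtain
\[
\int_{\mathbb{R}} R_{X\star\psi_{j_1}}^{\ell}(t)\,dt = 2\pi\, f_{X\star\psi_{j_1}}^{\star\ell}(0).
\]

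Combining the last two displays yields $\int_{\mathbb{R}} R_{A_1(X\star\psi_{j_1})}(t)\,dt = 2\pi\sum_{\ell=\mathbf{r}}^{\infty} C_{\sigma_{j_1},\ell}^{2}\,\sigma_{j_1}^{-2\ell}\, f_{X\star\psi_{j_1}}^{\star\ell}(0)$, which is exactly $2\pi\kappa^2$, proving the identity. The main technical hurdle is legitimizing the pointwise Fourier inversion $\int R_{X\star\psi_{j_1}}^{\ell}(t)\,dt = 2\pi f_{X\star\psi_{j_1}}^{\star\ell}(0)$ for every $\ell\ge \mathbf{r}$: for small $\ell$ the density $f_{X\star\psi_{j_1}}^{\star\ell}$ may be unbounded at the origin, but under the standing hypothesis $(2\alpha+\beta)\mathbf{r}>1$ of Theorem~\ref{thm1:A_gaussian}, Case~3 of Lemma~\ref{lemma:1} ensures $f_{X\star\psi_{j_1}}^{\star\ell}$ is bounded and continuous for each $\ell\ge \mathbf{r}$, so $R_{X\star\psi_{j_1}}^{\ell}\in L^1(\mathbb{R})$ and standard Fourier inversion applies.
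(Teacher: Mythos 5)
Your proposal follows exactly the paper's route: expand $A_{1}(\sigma_{j_{1}}\cdot)$ in Hermite polynomials via (\ref{hermiteexpansion}), use the orthogonality relation (\ref{expectionhermite}) to obtain $R_{A_{1}(X\star\psi_{j_{1}})}(t)=\sum_{\ell\geq\mathbf{r}}C_{\sigma_{j_{1}},\ell}^{2}\,\sigma_{j_{1}}^{-2\ell}\,R_{X\star\psi_{j_{1}}}^{\ell}(t)$, integrate term by term, and identify $\int_{\mathbb{R}}R_{X\star\psi_{j_{1}}}^{\ell}(t)\,dt=2\pi f_{X\star\psi_{j_{1}}}^{\star\ell}(0)$ by Fourier inversion; this is precisely the paper's proof, which justifies the last step by asserting $R_{X\star\psi_{j_{1}}}^{\ell}\in L^{1}\cap L^{2}$ for all $\ell\geq\mathbf{r}$. (A cosmetic point: the Hermite expansion of $A_{1}$ should carry the constant term $C_{\sigma_{j_{1}},0}$, which then drops out when you pass to the covariance.)

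The one place where you supply your own justification is the weak link. You deduce $R_{X\star\psi_{j_{1}}}^{\ell}\in L^{1}$ from the boundedness and continuity of $f_{X\star\psi_{j_{1}}}^{\star\ell}$ (Case 3 of Lemma \ref{lemma:1}); that implication runs in the wrong direction. If $R^{\ell}\in L^{1}$ then its Fourier transform $f^{\star\ell}$ is bounded and continuous, but a bounded, continuous (even integrable) spectral density does not in general have an absolutely integrable inverse Fourier transform, so continuity of $f^{\star\ell}$ at the origin alone does not legitimize the pointwise inversion $\int R^{\ell}=2\pi f^{\star\ell}(0)$. Your Tonelli/domination step has the same hidden dependence: the natural dominating function is $\big(\sum_{\ell}C_{\sigma_{j_{1}},\ell}^{2}\big)\,|R_{Y_{j_{1}}}|^{\mathbf{r}}$, whose integrability is again the point at issue. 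For even $\ell$ the gap closes cleanly: $R^{\ell}\geq 0$ and $f^{\star\ell}$ continuous at $0$ give $\int_{\mathbb{R}}R^{\ell}(t)\,dt=2\pi f^{\star\ell}(0)<\infty$ by a Fej\'er-kernel (Ces\`aro mean) argument combined with monotone convergence, and odd $\ell\geq\mathbf{r}+1$ then follows from $|R|^{\ell}\leq\sigma_{j_{1}}^{2}\,R^{\ell-1}$. The genuinely delicate case is the single term $\ell=\mathbf{r}$ when $\mathbf{r}$ is odd (e.g.\ $\mathbf{r}=1$), where one needs an actual decay estimate for $R_{X\star\psi_{j_{1}}}$; the paper does not prove this either, it simply asserts the integrability, so apart from the incorrectly stated implication your proposal matches the paper's argument and its level of rigor.
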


\begin{proof}
First of all, we note that
\begin{equation}\label{integral_cov}
A_{1}\left(X\star \psi_{j_{1}}(t)\right) = A_{1}\left(\sigma_{j_{1}} \frac{X\star \psi_{j_{1}}(t)}{\sigma_{j_{1}}}\right) = A_{1}(\sigma_{j_{1}} Y_{j_{1}}(t)).
\end{equation}
From (\ref{hermiteexpansion}), we know that (\ref{integral_cov}) can be rewritten as
\begin{equation}\label{lamma2:expansion}
A_{1}\left(X\star \psi_{j_{1}}(t)\right) = C_{\sigma_{j_{1}},0}+\overset{\infty}{\underset{\ell=\mathbf{r}}{\sum}}\frac{C_{\sigma_{j_{1}},\ell}}{\sqrt{\ell!}}H_{\ell}(Y_{j_{1}}(t)).
\end{equation}
By (\ref{lamma2:expansion}) and (\ref{expectionhermite}),
\begin{align}\notag
R_{A_{1}(X\star \psi_{j_{1}})}(t) =& \mathbb E\left[\left(A_{1}\left(X\star \psi_{j_{1}}(t)\right)-C_{\sigma_{j_{1}},0}\right)\left(A_{1}\left(X\star \psi_{j_{1}}(0)\right)-C_{\sigma_{j_{1}},0}\right)\right]
\\=& \overset{\infty}{\underset{\ell=\mathbf{r}}{\sum}}C_{\sigma_{j_{1}},\ell}^{2} R_{Y_{j}}^{\ell}(t)
\notag=\overset{\infty}{\underset{\ell=\mathbf{r}}{\sum}}C_{\sigma_{j_{1}},\ell}^{2} \left[\frac{1}{\sigma_{j_{1}}^{2}}R_{X\star\psi_{j_{1}}}(t)\right]^{\ell}.
\end{align}
It implies that
\begin{align}\label{int_power_R}
\int_{\mathbb{R}}R_{A_{1}(X\star \psi_{j_{1}})}(t)dt =& \overset{\infty}{\underset{\ell=\mathbf{r}}{\sum}}C_{\sigma_{j_{1}},\ell}^{2}\sigma_{j_{1}}^{-2\ell}
\int_{\mathbb{R}}\left[R_{X\star\psi_{j_{1}}}(t)\right]^{\ell}dt.
\end{align}
Because $\left[R_{X\star\psi_{j_{1}}}(\cdot)\right]^{\ell}\in L^{1}\cap L^{2}$ for all $\ell\geq \mathbf{r}$, the inverse Fourier transform
implies that
\begin{equation}\label{inv_power_R}
f_{X\star\psi_{j_{1}}}^{\star \ell}(\lambda) = \frac{1}{2\pi}\int_{\mathbb{R}} e^{-i\lambda t} \left[R_{X\star\psi_{j_{1}}}(t)\right]^{\ell}dt
\end{equation}
for all $\lambda\in \mathbb{R}$.
By substituting (\ref{inv_power_R}) with $\lambda=0$ into the right hand side of (\ref{int_power_R}), we achieve the desired equality
\begin{align*}
\int_{\mathbb{R}}R_{A_{1}(X\star \psi_{j_{1}})}(t)dt =2\pi\overset{\infty}{\underset{\ell=\mathbf{r}}{\sum}}C_{\sigma_{j_{1}},\ell}^{2}\sigma_{j_{1}}^{-2\ell}
f_{X\star\psi_{j_{1}}}^{\star\ell}(0).
\end{align*}

\end{proof}

\section{}
\begin{Lemma}\label{lemma:expect_converge}(\cite[Theorem 6.1-6.2]{dasgupta2008asymptotic})
Let $\{X_{k}\}_{k\in \mathbb{N}}$ be a sequence of random variables. If $X_{k}$ converges in distribution to $X_{\infty}$ and
\begin{equation}
\underset{k}{\textup{sup}}\,\mathbb E\left[|X_{k}|^{1+\delta}\right]<\infty\ \textup{for some}\ \delta>0\,,\notag
\end{equation}
then
\begin{equation}
\underset{k\rightarrow\infty}{\textup{lim}}\mathbb E\left[X_{k}\right] = \mathbb E[X_{\infty}]\,.\notag
\end{equation}
\end{Lemma}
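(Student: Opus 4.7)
The plan is to deduce this from the standard fact that convergence in distribution, combined with uniform integrability, forces convergence of expectations. The moment bound $\sup_k \mathbb{E}[|X_k|^{1+\delta}] < \infty$ is what provides the uniform integrability, and once that is in hand the rest is routine.

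First I would verify uniform integrability of the family $\{X_k\}_{k\in \mathbb{N}}$. Setting $M := \sup_k \mathbb{E}[|X_k|^{1+\delta}] < \infty$, for any threshold $a > 0$ a Markov-type estimate gives
\begin{equation*}
\mathbb{E}\bigl[|X_k|\,\mathbf{1}_{\{|X_k|>a\}}\bigr] \leq \frac{1}{a^{\delta}}\,\mathbb{E}\bigl[|X_k|^{1+\delta}\,\mathbf{1}_{\{|X_k|>a\}}\bigr] \leq \frac{M}{a^{\delta}},
\end{equation*}
which tends to $0$ as $a \to \infty$ uniformly in $k$. This is the definition of uniform integrability.

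Next I would invoke the Skorokhod representation theorem to realize copies $\tilde X_k \stackrel{d}{=} X_k$ and $\tilde X_\infty \stackrel{d}{=} X_\infty$ on a common probability space with $\tilde X_k \to \tilde X_\infty$ almost surely. Since uniform integrability is preserved under equality in distribution, the family $\{\tilde X_k\}$ is also uniformly integrable, and Vitali's convergence theorem then gives $\mathbb{E}[\tilde X_k] \to \mathbb{E}[\tilde X_\infty]$. Because expectations depend only on the distribution, this is exactly $\mathbb{E}[X_k] \to \mathbb{E}[X_\infty]$, as required. (One also checks $\mathbb{E}|X_\infty| < \infty$ by Fatou's lemma applied to $|\tilde X_k| \to |\tilde X_\infty|$, which gives $\mathbb{E}|X_\infty|^{1+\delta} \leq M$ and in particular integrability of $X_\infty$.)

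There is no real obstacle here; the argument is textbook. The only mild subtlety is knowing that an $L^{1+\delta}$-bound is a sufficient condition for uniform integrability (the de la Vall\'ee Poussin criterion with test function $\varphi(t) = t^{1+\delta}$), and that Vitali's theorem applies in the general form: almost sure convergence plus uniform integrability yields $L^1$-convergence, hence convergence of means. Alternatively one could bypass Skorokhod entirely by using the Portmanteau-style characterization that $X_k \stackrel{d}{\Rightarrow} X_\infty$ together with uniform integrability implies $\mathbb{E}[f(X_k)] \to \mathbb{E}[f(X_\infty)]$ for every continuous $f$ with $|f(x)| \leq C(1+|x|)$, applied to $f(x) = x$; this is the formulation given, e.g., in Dasgupta's reference cited in the lemma.
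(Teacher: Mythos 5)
Your proof is correct. The paper does not prove this lemma at all—it simply cites \cite[Theorems 6.1--6.2]{dasgupta2008asymptotic}—and your argument (uniform integrability via the Markov-type estimate $\mathbb{E}[|X_k|\mathbf{1}_{\{|X_k|>a\}}]\le M/a^{\delta}$, then Skorokhod representation plus Vitali, with Fatou giving integrability of $X_\infty$) is precisely the standard textbook proof underlying that citation, so it fills the gap faithfully rather than taking a genuinely different route.
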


\section{}
\begin{Lemma}\label{lemma:slustky}(a slight modification of Slustky's argument)
If the claims (a)-(c) in (\ref{proof:thm1:3claim}) hold, that is,
\begin{equation*}
\begin{array}{ll}
\textup{(a)}\ \underset{N\rightarrow\infty}{\lim}\ \underset{j_{2}\rightarrow\infty}{\lim} \mathbb E[\xi_{j_{2},>N}^2]=0\,;\\
\textup{(b)}\ \textup{The limit of}\ \kappa_{N}\ \textup{exists when}\ N\rightarrow\infty\,;\\
\textup{(c)}\ \textup{For any fixed}\ N\in \mathbb{N}, \xi_{j_{2},\leq N}\ \textup{converges in distribution to}\  \overset{M}{\underset{k=1}{\sum}}a_{k}V_{\leq N}(t_{k})\ \textup{as}\ j_{2}\rightarrow\infty\,,
\end{array}
\end{equation*}
we have $\xi_{j_{2}}$ converges in distribution to $\overset{M}{\underset{k=1}{\sum}}a_{k}V_{1}(t_{k})$ when $j_{2}\rightarrow\infty$,
where $V_{1}$ is the limiting Gaussian process (\ref{thm1_V1}).
\end{Lemma}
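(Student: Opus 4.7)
The plan is to carry out the standard three-term argument using characteristic functions, which is a mild adaptation of Theorem 3.2 of Billingsley's \emph{Convergence of Probability Measures}. Fix $u \in \mathbb{R}$, and write $\phi_{j_2}(u) = \mathbb{E}[e^{iu\xi_{j_2}}]$, $\phi_{j_2, \leq N}(u) = \mathbb{E}[e^{iu\xi_{j_2, \leq N}}]$, $\psi_N(u) = \mathbb{E}[\exp(iu\sum_{k=1}^M a_k V_{\leq N}(t_k))]$, and $\psi(u) = \mathbb{E}[\exp(iu \sum_{k=1}^M a_k V_1(t_k))]$. By the triangle inequality,
\begin{align*}
|\phi_{j_2}(u) - \psi(u)| \leq |\phi_{j_2}(u) - \phi_{j_2, \leq N}(u)| + |\phi_{j_2, \leq N}(u) - \psi_N(u)| + |\psi_N(u) - \psi(u)|.
\end{align*}
The goal is to choose $N$ large first and then $j_2 \to \infty$ so that each of the three terms becomes arbitrarily small.

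For the first term, the elementary estimate $|e^{iuy} - 1| \leq |uy|$ combined with the Cauchy--Schwarz inequality gives
\begin{align*}
|\phi_{j_2}(u) - \phi_{j_2, \leq N}(u)| \leq |u| \, \mathbb{E}|\xi_{j_2, > N}| \leq |u| \bigl(\mathbb{E}[\xi_{j_2, > N}^2]\bigr)^{1/2}.
\end{align*}
By assumption (a), for any $\varepsilon > 0$ we may pick $N_0$ so large that $\limsup_{j_2 \to \infty} \mathbb{E}[\xi_{j_2, > N}^2] < \varepsilon^2 / u^2$ for all $N \geq N_0$, making the first term below $\varepsilon$ in the limit $j_2 \to \infty$. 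For the third term, note that both $\psi_N$ and $\psi$ are characteristic functions of centered Gaussian vectors whose covariance structures are identical up to the scalar factors $\kappa_N^2$ and $\kappa^2$ respectively (see \eqref{thm1_V1} and \eqref{df:kappa_truncated}). Assumption (b) yields $\kappa_N \to \kappa$ as $N \to \infty$, so the Gaussian characteristic functions converge pointwise, and in fact $\sum_k a_k V_{\leq N}(t_k) \to \sum_k a_k V_1(t_k)$ in $L^2$; hence we can enlarge $N_0$ if necessary so that $|\psi_N(u) - \psi(u)| < \varepsilon$ for all $N \geq N_0$.

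Having fixed such an $N \geq N_0$, the middle term $|\phi_{j_2, \leq N}(u) - \psi_N(u)|$ tends to $0$ as $j_2 \to \infty$ by assumption (c) and the L\'evy continuity theorem. Combining the three bounds yields $\limsup_{j_2 \to \infty} |\phi_{j_2}(u) - \psi(u)| \leq 2\varepsilon$, and since $\varepsilon$ is arbitrary, $\phi_{j_2}(u) \to \psi(u)$ for every $u \in \mathbb{R}$. Since $\xi_{j_2}$ and $\sum_k a_k V_1(t_k)$ are real-valued, applying the L\'evy continuity theorem once more, together with the Cram\'er--Wold device to promote the one-dimensional linear-combination convergence to finite-dimensional-distributional convergence, concludes the proof.

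The only nontrivial step is verifying that the iterated limit structure in (a) is compatible with this argument; this is precisely why the order ``choose $N$ first, then let $j_2 \to \infty$'' is essential, and why assumption (a) is stated with the $\limsup$ in $j_2$ inside the limit in $N$ rather than the reverse.
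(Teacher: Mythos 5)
Your argument is correct, but it runs through characteristic functions rather than the distribution-function route the paper takes. The paper's proof is a direct Slutsky-type manipulation of CDFs: it sandwiches $P(\xi_{j_{2}}\leq x)$ between $P(\xi_{j_{2},\leq N}\leq x\mp\varepsilon)$ plus/minus $P(|\xi_{j_{2},>N}|>\varepsilon)$, controls the tail probability via Markov's inequality and (a), uses (b) to pass from $\sum_k a_k V_{\leq N}(t_k)$ to $\sum_k a_k V_1(t_k)$ (convergence of Gaussian variances), and finally lets $\varepsilon\to 0$. You instead split $|\phi_{j_{2}}(u)-\psi(u)|$ into three terms, bound the truncation error by $|u|\,(\mathbb{E}[\xi_{j_{2},>N}^{2}])^{1/2}$ via $|e^{iuy}-1|\leq|uy|$ and Cauchy--Schwarz, use (c) plus L\'evy continuity for the middle term, and (b) plus Gaussianity of the $V_{\leq N}$ linear combinations (variances proportional to $\kappa_N^2$, $\kappa^2$) for the last term. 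Both arguments are complete; yours exploits the second-moment hypothesis (a) more directly and quantitatively, avoiding the Markov-inequality step and the $\varepsilon$-shift/continuity-point bookkeeping of the CDF approach, while the paper's version avoids any appeal to the continuity theorem. Two cosmetic remarks: the closing mention of the Cram\'er--Wold device belongs to the proof of Theorem \ref{thm1:A_gaussian}, not to this lemma, which only concerns the fixed linear combination $\xi_{j_{2}}$; and hypothesis (a) is stated with an iterated $\lim$, not a $\limsup$ in $j_{2}$, though this does not affect your use of it (also note the trivial case $u=0$ should be excluded from the $\varepsilon^{2}/u^{2}$ bound).
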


\begin{proof}
For any $x\in \mathbb{R}$ and $\varepsilon>0$,
\begin{align}\notag
P(\xi_{j_{2}}\leq x) =& P(\xi_{j_{2}}\leq x,\ |\xi_{j_{2},>N}|>\varepsilon)+P(\xi_{j_{2}}\leq x,\ |\xi_{j_{2},>N}|\leq\varepsilon)\\\label{proof:S:1}
\leq& P(|\xi_{j_{2},>N}|>\varepsilon) + P(\xi_{j_{2},\leq N}\leq x+\varepsilon).
\end{align}
Taking limsup on both sides in the inequality (\ref{proof:S:1}) as $j_{2}\rightarrow\infty$, we get
\begin{align}\notag
\underset{j_{2}\rightarrow\infty}{\overline{\lim}} P(\xi_{j_{2}}\leq x)
\leq& \underset{j_{2}\rightarrow\infty}{\overline{\lim}} P(|\xi_{j_{2},>N}|>\varepsilon) + \underset{j_{2}\rightarrow\infty}{\overline{\lim}}P(\xi_{j_{2},\leq N}\leq x+\varepsilon)
\\\label{proof:S:2}=& \underset{j_{2}\rightarrow\infty}{\overline{\lim}} P(|\xi_{j_{2},>N}|>\varepsilon) + P(\overset{M}{\underset{k=1}{\sum}}a_{k}V_{\leq N}(t_{k})\leq x+\varepsilon),
\end{align}
where the last equality follows from (c).
By (a) and the Markov inequality, $\underset{N\rightarrow\infty}{\lim}\underset{j_{2}\rightarrow\infty}{\overline{\lim}} P(|\xi_{j_{2},>N}|>\varepsilon)=0$.
By (b), we know that the variance of the normal random variable $\overset{M}{\underset{k=1}{\sum}}a_{k}V_{\leq N}(t_{k})$ converges to
that of the normal random variable $\overset{M}{\underset{k=1}{\sum}}a_{k}V_{1}(t_{k})$.
Hence, $\overset{M}{\underset{k=1}{\sum}}a_{k}V_{\leq N}(t_{k})$ converges in distribution to $\overset{M}{\underset{k=1}{\sum}}a_{k}V_{1}(t_{k})$
as $N\rightarrow\infty$. The observations made above lead to
\begin{align}\notag
\underset{j_{2}\rightarrow\infty}{\overline{\lim}} P(\xi_{j_{2}}\leq x)
\leq& \underset{N\rightarrow\infty}{\lim}\underset{j_{2}\rightarrow\infty}{\overline{\lim}} P(|\xi_{j_{2},>N}|>\varepsilon) + \underset{N\rightarrow\infty}{\lim}P(\overset{M}{\underset{k=1}{\sum}}a_{k}V_{\leq N}(t_{k})\leq x+\varepsilon)
\\\label{proof:S:3}=&P(\overset{M}{\underset{k=1}{\sum}}a_{k}V_{1}(t_{k})\leq x+\varepsilon).
\end{align}
On the other hand, for any $x\in \mathbb{R}$ and $\varepsilon>0$,
\begin{align}\notag
P(\xi_{j_{2}}> x) =&\, P(\xi_{j_{2}}>x,\ |\xi_{j_{2},>N}|>\varepsilon)+P(\xi_{j_{2}}> x,\ |\xi_{j_{2},>N}|\leq\varepsilon)\\\notag
\leq&\, P(|\xi_{j_{2},>N}|>\varepsilon) + P(\xi_{j_{2},\leq N}> x-\varepsilon),
\end{align}
which is equivalent to
\begin{align}\label{proof:S:4}
P(\xi_{j_{2}}\leq x)
\geq& -P(|\xi_{j_{2},>N}|>\varepsilon) + P(\xi_{j_{2},\leq N}\leq x-\varepsilon).
\end{align}
Taking liminf on both sides in the inequality (\ref{proof:S:4}) as $j_{2}\rightarrow\infty$, we get
\begin{align}\notag
\underset{j_{2}\rightarrow\infty}{\underline{\lim}} P(\xi_{j_{2}}\leq x)
\geq&\, -\underset{j_{2}\rightarrow\infty}{\overline{\lim}} P(|\xi_{j_{2},>N}|>\varepsilon) + \underset{j_{2}\rightarrow\infty}{\underline{\lim}}P(\xi_{j_{2},\leq N}\leq x-\varepsilon)
\\\label{proof:S:5}=&\, 0 + P(\overset{M}{\underset{k=1}{\sum}}a_{k}V_{\leq N}(t_{k})\leq x-\varepsilon)\,,
\end{align}
where the last equality follows from (a) and (c).
By taking $N\rightarrow\infty$, (b) implies that
\begin{align}\label{proof:S:6}
\underset{j_{2}\rightarrow\infty}{\underline{\lim}} P(\xi_{j_{2}}\leq x)
\geq P(\overset{M}{\underset{k=1}{\sum}}a_{k}V_{1}(t_{k})\leq x-\varepsilon).
\end{align}
Combining (\ref{proof:S:5}) and (\ref{proof:S:6}) yields
\begin{align*}
P(\overset{M}{\underset{k=1}{\sum}}a_{k}V_{1}(t_{k})\leq x-\varepsilon)\leq \underset{j_{2}\rightarrow\infty}{\underline{\lim}} P(\xi_{j_{2}}\leq x) \leq \underset{j_{2}\rightarrow\infty}{\overline{\lim}} P(\xi_{j_{2}}\leq x) \leq P(\overset{M}{\underset{k=1}{\sum}}a_{k}V_{1}(t_{k})\leq x+\varepsilon)
\end{align*}
Because  $\varepsilon$ can be arbitrarily small, the proof of Lemma \ref{lemma:slustky} is complete.
\end{proof}

\end{document}